\documentclass[11pt]{article}
\usepackage{blindtext}
\usepackage[english]{babel}

%

%
%
%

\usepackage{amsmath}
\usepackage{amssymb}
\usepackage{amsthm}
\usepackage{geometry}
\usepackage[ruled, vlined]{algorithm2e} 

\usepackage{tikz}
\usetikzlibrary{calc, arrows.meta}

\newtheorem{lemma}{Lemma}
\newtheorem{example}{Example}
\newtheorem{definition}{Definition}
\newtheorem{proposition}{Proposition}
\newtheorem{remark}{Remark}
\newtheorem{corollary}{Corollary}
\newtheorem{theorem}{Theorem}
\usepackage{natbib}
\bibliographystyle{plainnat}
\usepackage{hyperref}
\usepackage{cleveref}

\usepackage{subcaption}
\usepackage{pdflscape}

\begin{document}
\begin{center}
  {\LARGE Probabilistic Foundations of Fuzzy Simplicial Sets for Nonlinear Dimensionality Reduction}

  \vspace{1.5em}
   {\large Janis Keck$^{1,2,3,*}$\qquad 
 Lukas Silvester Barth$^{1,}$\qquad 
  Fatemeh (Hannaneh) Fahimi$^{1,4}$\qquad  \\ Parvaneh Joharinad$^{1,4}$
  J\"urgen Jost$^{1,2,5}$\qquad }

  \vspace{1.5em}

  {\normalsize
  $^1$Max Planck Institute for Mathematics in the Sciences, Leipzig, Germany\\
  $^2$Max Planck Institute for Human Cognitive and Brain Sciences, Leipzig, Germany\\
  $^3$Max Planck School of Cognition\\
  $^4$ScaDS.AI (Center for Scalable Data Analytics and AI), Leipzig, Germany\\
  $^5$Santa Fe Institute for the Sciences of Complexity, New Mexico, USA\\
    $^*$Corresponding author
  }

  \vspace{1.5em}

  \par
  \href{mailto:janis.keck@maxplanckschools.de}{\color{blue}{janis.keck@maxplanckschools.de}},\;
  \href{mailto:lukas.barth@mis.mpg.de}{\color{blue}{lukas.barth@mis.mpg.de}},\;
  \href{mailto:fatemeh.fahimi@mis.mpg.de}{\color{blue}{fatemeh.fahimi@mis.mpg.de}},\;
  \href{mailto:parvaneh.joharinad@mis.mpg.de}{\color{blue}{parvaneh.joharinad@mis.mpg.de}},\;
  \href{mailto:jjost@mis.mpg.de}{\color{blue}{jjost@mis.mpg.de}}
  
\end{center}

\begin{abstract}
Fuzzy simplicial sets have become an object of interest in dimensionality reduction and manifold learning, most prominently through their role in UMAP. However, their definition through tools from algebraic topology without a clear probabilistic interpretation detaches them from commonly used theoretical frameworks in those areas. In this work we introduce a framework that explains fuzzy simplicial sets as marginals of probability measures on simplicial sets. In particular, this perspective shows that the fuzzy weights of UMAP arise from a generative model that samples Vietoris–Rips filtrations at random scales, yielding cumulative distribution functions of pairwise distances. More generally, the framework connects fuzzy simplicial sets to probabilistic models on the face poset, clarifies the relation between Kullback–Leibler divergence and fuzzy cross-entropy in this setting, and recovers standard t-norms and t-conorms via Boolean operations on the underlying simplicial sets.
We then show how new embedding methods may be derived from this framework, and illustrate this on an example where we generalize UMAP using \v{C}ech filtrations with triplet sampling. In summary, this probabilistic viewpoint provides a unified probabilistic theoretical foundation for fuzzy simplicial sets, clarifies the role of UMAP within this framework, and enables the systematic derivation of new dimensionality reduction methods.
\end{abstract}

\section{Introduction}
Fuzzy simplicial sets have recently emerged as a theoretical concept fruitful for machine learning research, particularly in manifold learning, data visualization and clustering \citep{mcinnes2018umap, shiebler2020functorial,shiebler2021flattening}. Standard simplicial sets, long used in algebraic topology and topological data analysis, encapsulate the topological structure of a space in a combinatorial framework \citep{wasserman2018topological,friedman2012survey}. However, they do not inherently encode metric information, which is often crucial in data analysis. Fuzzy simplicial sets address this limitation by introducing membership strength functions that encode both combinatorial and metric properties \cite{spivak2009metric}.
The dimensionality reduction method UMAP has been of high success in leveraging this theoretical idea to obtain a guideline on how to effectively arrange points in low dimensional space as to capture this structure \citep{mcinnes2018umap,ghojogh2021uniform,diaz2021review,sainburg2021parametric}.
Despite UMAP's widespread adoption and some effort to investigate its properties \citep{damrich2022t,damrich2021umap,jardine2020stability,draganov2023actup,ravuri2024towards} the theoretical framework underlying fuzzy simplicial sets remains underappreciated in the broader machine learning and topological data analysis communities - possibly owed to the fact that fuzzy logic is less commonly used than the more familiar probability theory.
To unify these ideas, this work introduces a probabilistic perspective on fuzzy simplicial sets, framing them as objects generated by probability distributions over standard simplicial sets. By interpreting fuzzy weights as marginal probabilities, we not only provide an intuitive foundation for fuzzy simplicial sets but also establish that any such object can be generated probabilistically - at least in the finite setting relevant in practice.
Using this framework, we reinterpret the UMAP algorithm, demonstrating how its weights emerge from distributions over Vietoris-Rips complexes. This probabilistic view also suggests several avenues for generalizing UMAP, such as using alternative filtrations or optimizing over richer probabilistic models. We hope this perspective fosters a deeper understanding of fuzzy simplicial sets and their applications, paving the way for new methods in topological data analysis, dimensionality reduction or manifold learning.
This work thus makes three contributions: \textbf{(1)} We show that every finite fuzzy simplicial set arises as the image of a conventional probability distribution over standard simplicial sets, and we derive basic properties of this representation.
\textbf{(2)} We establish relationships to filtrations, t-norm and t-conorm operations, divergences, and comparisons between simplicial structures.
\textbf{(3)} Using the probabilistic formulation, we reinterpret UMAP and introduce a Čech-based variant that behaves similar as UMAP in preserving topological and geometric structure.

\section{Preliminaries}

We now collect the minimal definitions and concepts required for our formulation. These are standard, and we include them only for completeness and to fix notation, while also providing some examples and illustrations for readers not familiar with the definitions.

\subsection{Fuzzy Sets}
A fuzzy set is just a set where each element has an associated membership strength or weight:
\begin{definition}
  \label{def:classicalFuzzySet}
A classical fuzzy set is a set $S$ together with a weight function $\mu: S \to [0,1]$.
\end{definition}
Fuzzy sets have a rich history and well-developed theory, which we will not be able to even rudimentarily cover here - consider \cite{zimmermann2011fuzzy} for a full treatment.
Most importantly, they were conceived by \cite{zadeh1965fuzzy} to be able to model imprecise statements about membership (e.g., '$x$ is in the set of numbers \textit{much} larger than $y$') in a formal manner.
In contrast to probability theory, which may be interpreted as modelling uncertainty about outcomes, fuzzy set theory is concerned with modelling imprecise outcomes \citep{singpurwalla2004membership}. 
Due to these complementary objectives, since the inception of fuzzy set theory there have been made various attempts to combine these theories \citep{zadeh1968probability,hirota1981concepts,singpurwalla2004membership}, such that one may model imprecisely defined events in a probabilistic manner. To anticipate our discussion below: here, we will not need such a sophisticated approach, as we will merely provide a way in which the fuzzy objects we want to study are generated from probability measures.

As stated above, fuzzy sets are intended to model imprecise membership in a set. Many concepts from classical or 'crisp' set theory and the associated logic then generalize to fuzzy theory. 
We will only need the t-(co)-norms, which are generalizations of intersection and union operations, respectively. 
They are standard in the respective literatures, but we state the definition here for convenience of the reader:
\begin{definition}
A t-norm is a map 
\begin{equation}
T: [0,1] \times [0,1] \to [0,1]
\end{equation}
such that 
\begin{enumerate}
 \item  $T(a, b) = T(b, a)$ (Commutativity) 
 \item  $T(a, b) \leq T(c, d)$ if $a \leq c$ and $b \leq d$ (Monotonicity)
 \item $T(a, T(b, c)) = T(T(a, b), c)$ (Associativity)
  \item $T(a, 1) = a$ (Identity element).
\end{enumerate}
\end{definition}
To each t-norm $T$ we may associate a dual t-conorm $\tilde{T}$  via $\tilde{T}(a,b) = 1-T(1-a,1-b)$.
\begin{example}
Examples for a t-norm are the minimum
$$(a,b) \mapsto \min(a,b)$$
and the product norm 
$$(a,b) \mapsto ab.$$
Their dual t-conorms are the maximum 
$$(a,b) \mapsto \max(a,b)$$
and the probabilistic sum 
$$(a,b) \mapsto a + b - ab.$$
\end{example}
Naturally, one may identify classical or 'crisp' sets with those fuzzy sets that only take membership values in $\{0,1\}$ via $\mu \mapsto \mu^{-1}(1)$ (where $\mu$ is the weight function from Def.~\ref{def:classicalFuzzySet}) - we will use this identification repeatedly below.
One asserts that for classical sets the above operations indeed retrieve union and intersection of sets, furthermore, many of the properties of these operations carry over \citep{zadeh1965fuzzy}. 
\subsection{Fuzzy Simplicial Sets}
Our main object of interest are fuzzy \textit{simplicial sets}.  
Simplicial sets are a powerful tool in algebraic topology to encode topological information about a space in a combinatorial object of simplex/face relations. 
In \cite{spivak2009metric}, these were generalized to fuzzy simplicial sets, with the goal to also encode metric information. 
In brief, a fuzzy simplicial set is a simplicial set together with a fuzzy weight, were the fuzzy weight has to respect the additional structure imposed by the simplicial set. 
As we will see below, this will boil down to a certain monotonicity condition on the fuzzy weight, that is, there will be a partial order that the weights have to respect. 

Fuzzy simplicial sets were introduced in the language of category theory, which is the standard in algebraic topology. Here, we will eschew this language to make this text easier to follow for a general audience - an interested reader may consult \cite{barth2024fuzzysimplicialsetsapplication} for a thorough categorical treatment. 
When we want to model nested sets or simplex/face relations as in simplicial sets, then the fuzzy weight function has to be well-behaved with respect to the combinatorial structure:
\begin{definition}
A fuzzy simplicial set is a collection of fuzzy sets $(S_n,\mu_n), n \in \mathbb{N}$ together with 'face maps' $d_i^n: S_{n}  \to S_{n-1}$ and 'degeneracy maps' $s_i^n: S_n \to S_{n+1}$, which are both non-decreasing in the fuzzy weights, and which fulfill the simplicial identities:
\begin{equation} \begin{split}
d_{i}^{n-1}\,d_{j}^{n} 
&= 
d_{j-1}^{n-1}\,d_{i}^{n}
\quad (i<j),
\\[6pt]
s_{i}^{n+1}\,s_{j}^{n}
&=
s_{j+1}^{n+1}\,s_{i}^{n}
\quad (i \le j),
\\[6pt]
d_{i}^{n+1}\,s_{j}^{n}
&=
\begin{cases}
s_{j-1}^{n}\,d_{i}^{n}, & i< j,\\[4pt]
\mathrm{id}, & i=j\text{ or }i=j+1,\\[4pt]
s_{j}^{n}\,d_{i-1}^{n}, & i>j+1.
\end{cases}
\end{split} \end{equation}
\end{definition}
One then has as a special case:
\begin{definition}
A (classical/standard/crisp) simplicial set is a fuzzy simplicial set where all weight functions take values in $\{0,1\}$.
\end{definition}
Usually, simplicial sets are defined via set membership - for example, the reader might consult \cite{friedman2012survey} for an elementary introduction. One checks as before that this retrieves the usual definition by treating the weight function as an indicator function of set membership, that is by taking $\mu_n^{-1}(n)$ one obtains a collection of simplices as sets and face/degeneracy maps between them.
Again, we will use both notions interchangeably. 
\begin{remark}
Consider a standard simplicial set with sets $S_n$ and consider the union $S = \cup_n S_n$. We may introduce a partial order on $S$, where $\sigma \leq_{\text{face}} \sigma'$ iff there exists a sequence of face maps $d_{i_1},...,d_{i_k}$ such that $\sigma = d_{i_1}\circ ... \circ d_{i_k} (\sigma')$. Furthermore, we may introduce a second partial order on $S$, where $\sigma \leq_{\text{degeneracy}} \sigma'$ iff there exists a sequence of degeneracy maps $s_{i_1},...,s_{i_k}$ such that $\sigma = s_{i_1}\circ ... \circ s_{i_k}  (\sigma')$.
A classical fuzzy simplicial set may then equivalently be described (is isomorphic to) as a standard simplicial set together with a weight function $\mu: \cup_n S_n \to [0,1]$, where the weight function 
\begin{itemize}
\item is non-decreasing with respect to $\leq_{\text{face}}$ - this implies simplices have weight no bigger than their faces
\item is non-decreasing with respect to $\leq_{\text{degeneracy}}$ - this implies simplices have weight no bigger than their degeneracies.
\end{itemize}
Note that if  $\sigma'$  is a degeneracy of $\sigma$, i.e. for some $j$, $s_j(\sigma) = \sigma'$, then 
$\mu(\sigma') \leq \mu(\sigma)$ and $ \mu(\sigma) \leq \mu(\sigma')$, hence $\mu(\sigma') = \mu(\sigma)$.
\end{remark}
In practice, we often don't want to let $n$ go to arbitrary high values, but truncate it.
\begin{definition}
A truncated classical fuzzy simplicial set is obtained when truncating the indexing at a finite $n$.
\end{definition}
The sets $S_n$ which constitute a simplicial set in general may be arbitrary. We will however mostly be concerned with the case where we have some base set of vertices $X$ and then all higher order sets consist of unordered tuples from this base set. This is illustrated by the following example.
\begin{example}
Let $X$ be a set. The canonical free standard simplicial set
 generated by $X$, is defined as 
\begin{equation}
\begin{split}
S_0 &= X,\\ 
S_n &= X^{n+1} = X \times ... \times X =  \{[x_{i_0},...,x_{i_n}] \vert x_{i_j} \in X \}\\
d_k : [x_{i_0},...,x_{i_n}] &\mapsto [x_{i_0},... \hat{x_{i_k}},...x_{i_n}]\\
 s_k: [x_{i_0},...,x_{i_n} ] &\mapsto [x_{i_0},...x_{i_k},x_{i_k}, ...,,x_{i_n}]
 \end{split}
\end{equation}
where $[...]$ denotes ordered tuples  and $[...\hat x...]$ denotes a tuple with $x$ removed. \end{example}

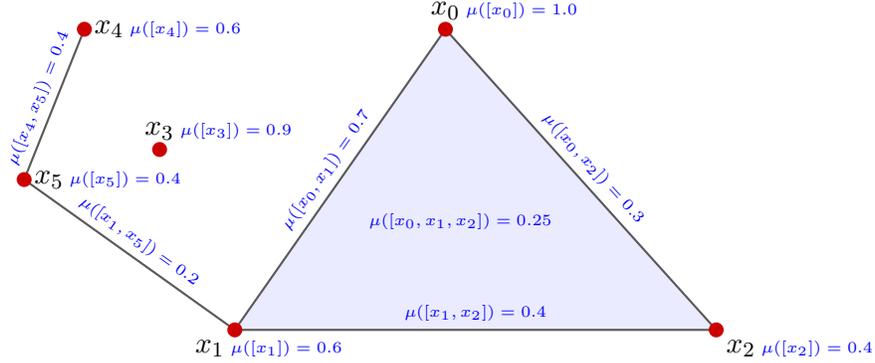
\begin{figure}
\centering
\begin{tikzpicture}[scale=2,
    vertex/.style={fill=red!80!black, circle, inner sep=2pt},
edge_weight_label/.style={midway,sloped, above, inner sep=1pt, font=\tiny, yshift=2pt,text=blue},
    vertex_weight_label/.style={ font=\tiny,text=blue}
]

    \coordinate (V0) at (0,1.5);
    \coordinate (V1) at (-1.4,-0.5);
    \coordinate (V2) at (1.8,-0.5);
    \coordinate (V3) at (-1.9,0.7);
    \coordinate (V4) at (-2.4,1.5);
    \coordinate (V5) at (-2.8,0.5);

    \fill[blue!10, opacity=0.8] (V0) -- (V1) -- (V2) -- cycle;

    \draw[thick, gray!70!black] (V0) -- (V1) node[edge_weight_label] {$\mu([x_0,x_1])=0.7$};
    \draw[thick, gray!70!black] (V1) -- (V2) node[edge_weight_label] {$\mu([x_1,x_2])=0.4$};
    \draw[thick, gray!70!black] (V2) -- (V0) node[edge_weight_label] {$\mu([x_0,x_2]) = 0.3$};
    \draw[thick, gray!70!black] (V1) -- (V5) node[edge_weight_label] {$\mu([x_1,x_5]) = 0.2$};
      \draw[thick, gray!70!black] (V5) -- (V4) node[edge_weight_label] {$\mu([x_4,x_5]) = 0.4$};

    \node[vertex] at (V0) {};
    \node[vertex] at (V1) {};
    \node[vertex] at (V2) {};
    \node[vertex] at (V3) {};
    \node[vertex] at (V4) {};
    \node[vertex] at (V5) {};

    \node (trianglelabel) at (0.1,0.1){};
    \node[above,vertex_weight_label] at (trianglelabel) {$\mu([x_0,x_1,x_2]) = 0.25$};

    \node[above] (V0label)  at (V0) {$x_0$};
    \node[below left]  (V1label) at (V1) {$x_1$};
    \node[below right]  (V2label) at (V2) {$x_2$};
    \node[above] (V3label) at (V3) {$x_3$};
    \node[right] (V4label) at (V4) {$x_4$};
    \node[right] (V5label) at (V5) {$x_5$};

    \node[vertex_weight_label, right,xshift = 4pt] at (V0label) {$\mu([x_0]) = 1.0$}; 
    \node[vertex_weight_label,right,xshift = 4pt] at (V1label) {$\mu([x_1]) = 0.6$};  
    \node[vertex_weight_label,right,xshift = 4pt] at (V2label) {$\mu([x_2]) = 0.4$}; 
    \node[vertex_weight_label,right,xshift = 4pt] at (V3label) {$\mu([x_3]) = 0.9$};
    \node[vertex_weight_label,right,xshift = 4pt] at (V4label) {$\mu([x_4]) = 0.6$};
    \node[vertex_weight_label,right,xshift = 4pt] at (V5label) {$\mu([x_5]) = 0.4$};
   
\end{tikzpicture}
\caption{Example of a finite, fuzzy simplicial set/complex. Simplices with zero weight are not plotted, as are degenerate simplices.}
\end{figure}

The ordered tuples will be called simplices.
As mentioned above, we are mainly concerned with the case where all simplices come from some base set of vertices. 
\begin{definition}\label{def:simpsetsoverX}
Let $X$ be a set. We denote by $\mathcal{S}(X)$ the collection of all simplicial sets with vertex set $X$, that is, $S_i = X^{i+1}$, and arbitrary weight function $\mu_i : X^{i+1} \to \{0,1\}$.
$\mathcal{S}^n(X)$ denotes respectively the collection of all such simplicial sets truncated at $n$.
Analogously, we define $\mathcal{F}(X), \mathcal{F}^n(X)$ for fuzzy simplicial sets.
\end{definition}
Since we are in practice only concerned with this case, in the following, to ease notation, we will often identify simplicial sets and their weights if it is clear from context what the underlying sets are. That is, we write then $S(\sigma)$ instead of $\mu(\sigma)$ for a simplex $\sigma$.
\subsection{Filtrations}

In geometrical and topological data analysis, one is often concerned with one-parameter filtrations of simplicial sets. For our purposes, those are simply constituted by a simplicial set $S$ and a family of weight functions $\mu^r$, indexed by some parameter $r$, such that the weight-function is monotonically increasing with respect to the parameter. This means that when increasing $r$, the strength of a simplex may not decrease, 
\begin{definition}
A filtration over $S$ is a collection of fuzzy simplicial sets $(S,\mu_n^r)$, where $r \leq t \implies \mu^r \leq \mu^t$.
\end{definition} 
In the special case of standard simplicial sets, this means once a simplex appears at a parameter $r$ it will be present for all further scales. 
\begin{remark}
Using the order on simplices we have stated before, a filtration alternatively is simply a map
\begin{equation}
\mu: S \times [0,1] \to [0,1],
\end{equation}
which is monotone in both of its arguments. 
\end{remark}

The most important example for us is the following:
\begin{definition}
Let $(X,d)$ be a metric space. The Vietoris-Rips filtration has sets $S_i = X^{i+1}$ and weights $\mu^r$ given by 
\begin{equation}
\mu^r_V\left([x_{i_0},...,x_{i_n}] \right) = \delta \left( \max_{j,k\in \{1,\cdots,n\}} d(x_{i_j},x_{i_k}) \leq r \right),
\end{equation}
where here and in the following $\delta(...)$ is the function returning $1$ if the statement inside the brackets is true and $0$ else.
We denote the individual simplicial sets as $VR(X,r):=(\mu^r_V)^{-1}(1)$.
\begin{figure}[h]
\centering
\begin{tikzpicture}[
    point_node/.style={circle, fill=red!60, draw=red!80!black, inner sep=2.5pt}, 
    edge_line/.style={draw=gray!70, line width=0.8pt}, 
    simplex_fill/.style={blue!10, opacity=0.8}, 
    epsilon_ball/.style={draw=orange!40, fill=orange!5, circle, opacity=0.6}, 
    scale=1.5
]

   
    \begin{scope}[scale=0.55,xshift=-4]
      \coordinate (P1) at (-0.5,0.5);
     \coordinate (P2) at (0,0);
    \coordinate (P3) at (1,0);
    \coordinate (P6) at (0.5,3.1);
    \coordinate (P4) at (2.7,0.5);
    \coordinate (P5) at (1.7,2.5);

        \node at (1, -2) {$VR(X; r_1)$};

        \draw[epsilon_ball] (P1) circle (0.8);
        \draw[epsilon_ball] (P2) circle (0.8);
        \draw[epsilon_ball] (P3) circle (0.8);
        \draw[epsilon_ball] (P4) circle (0.8);
        \draw[epsilon_ball] (P5) circle (0.8);
        \draw[epsilon_ball] (P6) circle (0.8);
        
        \draw[edge_line] (P1) -- (P2);

        \node at (P1) [point_node] {};
        \node at (P2) [point_node] {};
        \node at (P3) [point_node] {};
        \node at (P4) [point_node] {};
        \node at (P5) [point_node] {};
        \node at (P6) [point_node] {};

    \end{scope}

    \begin{scope}[scale=0.55,xshift=6.5cm] 
       \coordinate (P1) at (-0.5,0.5);
     \coordinate (P2) at (0,0);
    \coordinate (P3) at (1,0);
    \coordinate (P6) at (0.5,3.1);
    \coordinate (P4) at (2.7,0.5);
    \coordinate (P5) at (1.7,2.5);
                \node at (1, -2) {$VR(X; r_2)$};

        \draw[epsilon_ball] (P1) circle (1.5);
        \draw[epsilon_ball] (P2) circle (1.5);
        \draw[epsilon_ball] (P3) circle (1.5);
        \draw[epsilon_ball] (P4) circle (1.5);
        \draw[epsilon_ball] (P5) circle (1.5);
        \draw[epsilon_ball] (P6) circle (1.5);

   	\fill[simplex_fill] (P1) -- (P2) -- (P3) -- cycle;

 	 \draw[edge_line] (P1) -- (P2);
        \draw[edge_line] (P2) -- (P3);
        \draw[edge_line] (P1) -- (P3);
        \draw[edge_line] (P5) -- (P6);

        \node at (P1) [point_node] {};
        \node at (P2) [point_node] {};
        \node at (P3) [point_node] {};
        \node at (P4) [point_node] {};
        \node at (P5) [point_node] {};
        \node at (P6) [point_node] {};

    \end{scope}

    \begin{scope}[scale=0.55,xshift=13.5cm] 
        \coordinate (P1) at (-0.5,0.5);
     \coordinate (P2) at (0,0);
    \coordinate (P3) at (1,0);
    \coordinate (P6) at (0.5,3.1);
    \coordinate (P4) at (2.7,0.5);
    \coordinate (P5) at (1.7,2.5);    
    
                    \node at (1, -2) {$VR(X; r_3)$};

        \draw[epsilon_ball] (P1) circle (1.7);
        \draw[epsilon_ball] (P2) circle (1.7);
        \draw[epsilon_ball] (P3) circle (1.7);
        \draw[epsilon_ball] (P4) circle (1.7);
        \draw[epsilon_ball] (P5) circle (1.7);
        \draw[epsilon_ball] (P6) circle (1.7);

   	\fill[simplex_fill] (P1) -- (P2) -- (P3) -- cycle;

 	 \draw[edge_line] (P1) -- (P2);
        \draw[edge_line] (P2) -- (P3);
        \draw[edge_line] (P1) -- (P3);
        \draw[edge_line] (P5) -- (P6);
        \draw[edge_line] (P3) -- (P4);

        \node at (P1) [point_node] {};
        \node at (P2) [point_node] {};
        \node at (P3) [point_node] {};
        \node at (P4) [point_node] {};
        \node at (P5) [point_node] {};
        \node at (P6) [point_node] {};

    \end{scope}

\end{tikzpicture}
\caption{Visualization of the Vietoris-Rips-Filtration. With growing scale $r$, all simplices are added where the diameter (maximum distance between any two vertices) is less or equal than $r$.}
\end{figure}
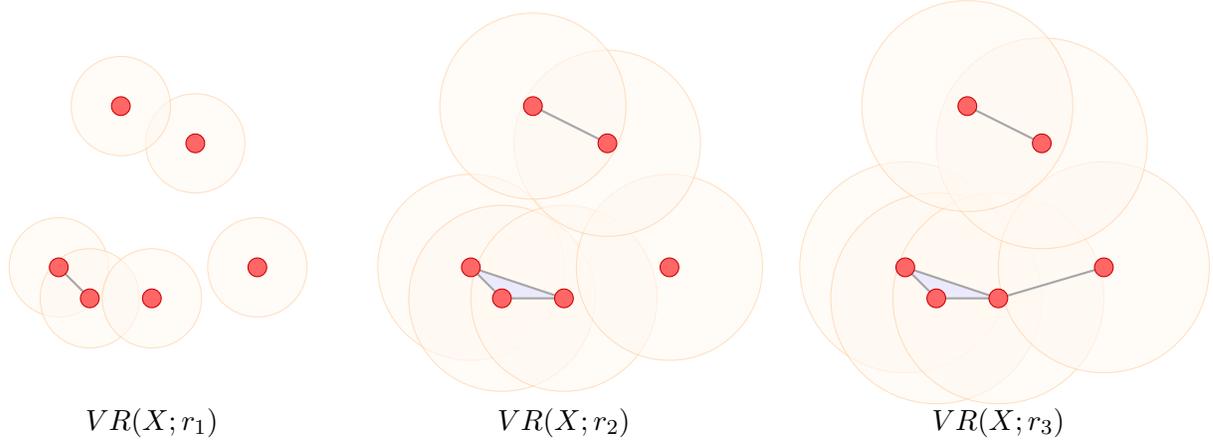
\end{definition}
From a theoretical point of view, the \v{C}ech-filtration  is also important. We give two definitions.
\begin{definition}
  \label{def:cechFiltration}
Let $(X,d)$ be a metric space. The intrinsic \v{C}ech-Filtration has sets $S_i = X^{i+1}$ and weights $\mu^r$ given by \begin{equation}
\mu^r_C\left([x_{i_0},...,x_{i_n}] \right)= \delta(\cap_k B_{r}(x_{i_k}) \neq \emptyset) =\delta(\inf_{y \in X } \max_{k} d(y,x_{i_k}) \leq r)
\end{equation}
If $X$ is itself a subspace of a bigger metric space $Y$, we may define the extrinsic \v{C}ech-Filtration, which has sets 
$S_i = X^{i+1}$ and weights $\mu^r$ given by \begin{equation}
\mu^r_C\left([x_{i_0},...,x_{i_n}] \right)= \delta(\cap_k B_{r}(x_{i_k}) \neq \emptyset) =\delta(\inf_{y \in Y } \max_{k} d(y,x_{i_k}) \leq r)
\end{equation}
\end{definition}

The (extrinsic) \v{C}ech Filtration is an important theoretical tool underlying topological data analysis (TDA) \citep{carlsson2021topological}, as we will briefly explain now.
We first briefly state the theoretical motivations, then we explain the intuition behind them.
Given a (paracompact) topological space $X$ and an open cover of that space $X = \cup_{i \in I} U_i$, the nerve of the cover is the simplicial set $\mathcal{N}(U)$, 
where a simplex $[i_1,...,i_k]$ is in the set iff the intersection $U_{i_1} \land ... \land U_{i_k}$ is nonempty.
The nerve theorem, which motivates TDA then tells us that in the case of a good open cover the geometric realization of $\mathcal{N}(U)$ is homotopy equivalent to $X$. 
Intuitively, this means that the former has the same topological features as the latter. This is of importance, as the former is a combinatorial object, while the latter is a space of possibly infinite size. Thus, the theorem gives a handle to encode topological information in a discrete object, the nerve.
Now, consider the situation where one has datapoints $x_1,...,x_n$ which one assumes are sampled from some unknown $M$ which is embedded in some euclidean space $\mathbb{R}^d$. Then, forming the extrinsic \v{C}ech filtration at scale $r$ on these points corresponds to constructing the nerve of the space $\cup_{i =1}^{n} B_r(x_i)$, that is the nerve of the open balls centered on the points. Thus, at each scale on has a topological space, and by the nerve theorem, the topological information of that space may be encoded in the simplicial set constructed on the vertices. Varying the scale and keeping track which features persist, one thus hopes to extract meaningful topological features of the underlying space $M$ - this is the conceptual underpinning of \textit{persistent homology} \citep{zomorodian2004computing}, where homology encodes topological features.
The stability and reconstruction theorems of TDA ensure that this endeavour is theoretically sound \citep{chazal2016structure}.

Now, in practice, computing the extrinsic \v{C}ech filtration at all scales is computationally expensive, especially for higher order simplices. 
Thus, in practice, often the \v{C}ech filtration is replaced by the Vietoris-Rips filtration. This is motivated by the fact that these filtrations are interleaved as
\begin{equation}
\mu^r_C  \leq \mu^r_V \leq \mu^{2r}_C.
\end{equation}

We note that under a minor requirement on the weight function, we may identify filtrations of classical simplicial sets and fuzzy simplicial sets:
Let $(S,\mu^r)$ be a filtration of classical simplicial sets, that is $\mu^r : S \to \{0,1\}$ monotone. 
Define a new weight 
\begin{equation}
\tilde{\mu}(\sigma) = \inf \{r: \mu^r(\sigma) = 1\}.
\end{equation}
Conversely, 
given a fuzzy weight $\tilde{\mu}$, define a filtration
\begin{equation}
\mu^r(\sigma) = \delta( \tilde{\mu}(\sigma) \geq r).
\end{equation}
One checks that these maps are inverses of each other, given one only admits fuzzy weights that are right-continuous in $r$.

\section{Fuzzy Simplicial Sets as Marginal Distributions}
Having established notation, we now introduce our main conceptual contribution: a probabilistic representation of fuzzy simplicial sets. In particular, we want to show how fuzzy weights naturally arise from probability distributions over standard simplicial sets. To make the definition of a map from distributions to fuzzy weights somewhat easier, we will need the following object:
\begin{definition}
For any $[x_{i_0},...,x_{i_n}] \in X^{n+1}$ we define $\mathbf{S}([x_{i_0},...,x_{i_n}] )$ to be the minimal simplicial set in $\mathcal{S}^n(X)$ containing $[x_{i_0},...,x_{i_n}]$ (that is where $\mu([x_{i_0},...,x_{i_n}]) = 1$). 
\end{definition}
The minimal simplicial set from the previous definition is obtained by taking all the faces of the given simplex, and then adding all necessary degeneracies, and then repeating this procedure until no new simplices are added.  In other words, 
\begin{lemma}
Let $[x_{i_0},...,x_{i_n}] \in X^{n+1}$ and $\sigma$ some arbitrary simplex. Let $\mu$ be the weight function of  $\mathbf{S}([x_{i_0},...,x_{i_n}] )$.
Then $\mu(\sigma) = 1$ if and only if there exist maps $f_1,f_2,...f_m$ such that 
\begin{equation}
\sigma = f_1 \circ f_2 \circ ... f_m ([x_{i_0},...,x_{i_n}] )
\end{equation} 
where all $f_j$ are face or degeneracy maps.
\end{lemma}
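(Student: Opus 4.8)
The plan is to show both inclusions: the set of simplices expressible as a composition of face and degeneracy maps applied to the generator $[x_{i_0},\dots,x_{i_n}]$ is exactly the carrier of $\mathbf{S}([x_{i_0},\dots,x_{i_n}])$. Call the former set $C$ (the ``closure'' of the generator under iterated face/degeneracy maps) and let $\mu$ be the weight function of $\mathbf{S}([x_{i_0},\dots,x_{i_n}])$. The key observation driving the whole argument is the preceding remark: any $(S,\mu)$ with all face and degeneracy maps nondecreasing in the weights must satisfy $\mu(\sigma) = 1$ whenever $\sigma \leq_{\text{face}} \tau$ or $\sigma \leq_{\text{degeneracy}} \tau$ for some $\tau$ with $\mu(\tau)=1$; conversely, to show a given assignment \emph{is} a valid fuzzy (here crisp) simplicial set, I must check those monotonicity conditions hold.

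First I would prove $C \subseteq \mu^{-1}(1)$, i.e.\ that every element of $C$ must receive weight $1$ in \emph{any} simplicial set of $\mathcal{S}^n(X)$ containing the generator. This is an induction on the length $m$ of the composition $f_1\circ\cdots\circ f_m$. The base case $m=0$ is the generator itself, which has weight $1$ by hypothesis. For the inductive step, suppose $\sigma = f_1(\tau)$ where $\tau = f_2\circ\cdots\circ f_m([x_{i_0},\dots,x_{i_n}])$ has weight $1$ by induction. If $f_1 = d_k^{\bullet}$ is a face map, then $\sigma \leq_{\text{face}} \tau$, and since face maps are nondecreasing in the weights, $\mu(\sigma) \geq \mu(\tau) = 1$; if $f_1 = s_k^{\bullet}$ is a degeneracy map, the same argument applies with $\leq_{\text{degeneracy}}$. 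Hence $\sigma\in\mu^{-1}(1)$. Taking the minimum over all such simplicial sets gives $C\subseteq\mathbf{S}([x_{i_0},\dots,x_{i_n}])$.

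Second I would prove $\mu^{-1}(1)\subseteq C$ by exhibiting a simplicial set whose carrier is exactly $C$; by minimality of $\mathbf{S}$, this forces $\mathbf{S}([x_{i_0},\dots,x_{i_n}]) \subseteq C$. Concretely, define $\nu:\bigcup_{k\le n} X^{k+1}\to\{0,1\}$ by $\nu(\sigma)=1$ iff $\sigma\in C$. I must verify (i) $\nu$ defines an element of $\mathcal{S}^n(X)$ — that is, every face map $d_k$ and every degeneracy map $s_k$ sends a simplex of weight $1$ to a simplex of weight $1$ (equivalently $C$ is closed under $d_k$ and $s_k$ applied to its elements) — and (ii) $\nu$ assigns weight $1$ to the generator. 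Point (ii) is immediate. Point (i) is essentially definitional: if $\sigma = f_1\circ\cdots\circ f_m([x_{i_0},\dots,x_{i_n}])\in C$ and $g$ is any face or degeneracy map, then $g(\sigma) = g\circ f_1\circ\cdots\circ f_m([x_{i_0},\dots,x_{i_n}])$ is again a composition of face/degeneracy maps applied to the generator, hence lies in $C$. Thus $\nu\in\mathcal{S}^n(X)$ contains the generator, so $\mathbf{S}([x_{i_0},\dots,x_{i_n}])$ has carrier contained in $C$. Combining the two inclusions yields $\mu^{-1}(1)=C$, which is the claim.

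The one subtlety worth being careful about — and the closest thing to an obstacle — is bookkeeping about the truncation level $n$: a degeneracy map raises simplicial degree, so one must confirm that iterating within $\mathcal{S}^n(X)$ only ever produces simplices of degree $\le n$ when starting from a degree-$n$ generator, which fails in general unless one restricts to the truncated notion; since the statement is already phrased inside $\mathcal{S}^n(X)$ this is consistent, but the description ``adding all necessary degeneracies'' in the informal text should be read as ``those of degree $\le n$.'' The other routine point is that one should note $C$ is automatically finite (faces strictly decrease degree and there are finitely many degeneracies of each admissible simplex within the truncation), so $\nu$ is well defined. Everything else is a direct unwinding of the definition of $\mathbf{S}$ as a minimal object together with the monotonicity axioms of Definition~2, so no hard estimate or clever construction is needed.
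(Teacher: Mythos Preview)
Your proof is correct, and in fact it supplies considerably more detail than the paper itself, which states this lemma without proof as an immediate restatement of the definition of $\mathbf{S}$ (the preceding sentence just says the minimal simplicial set is obtained by repeatedly taking faces and degeneracies of the generator). Your two-inclusion argument via monotonicity and minimality is exactly the right way to make that informal description rigorous, and your remark about the truncation level is a genuine subtlety worth flagging.
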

\begin{figure}
\begin{tikzpicture}[scale=1.7,
    vertex/.style={fill=red!80!black, circle, inner sep=2pt},
edge_weight_label/.style={midway,sloped, above, inner sep=1pt, font=\tiny, yshift=2pt,text=blue},
    vertex_weight_label/.style={ font=\tiny,text=blue}
]
    \coordinate (V0) at (0,1.5);
    \coordinate (V1) at (-1.4,-0.5);
    \coordinate (V2) at (1.8,-0.5);
    \coordinate (V3) at (-1.9,0.7);
    \coordinate (V4) at (-2.4,1.5);
    \coordinate (V5) at (-2.8,0.5);
    
    \coordinate (V0') at (4,1.5);
    \coordinate (V1') at (2.6,-0.5);
    \coordinate (V2') at (5.8,-0.5);

    \fill[blue!10, opacity=0.8] (V0) -- (V1) -- (V2) -- cycle;
    \fill[blue!10, opacity=0.8] (V0') -- (V1') -- (V2') -- cycle;

    \draw[thick, gray!70!black] (V0) -- (V1) ;
    \draw[thick, gray!70!black] (V1) -- (V2);
    \draw[thick, gray!70!black] (V2) -- (V0) ;
    \draw[thick, gray!70!black] (V1) -- (V5) ;
      \draw[thick, gray!70!black] (V5) -- (V4);

 \draw[thick, gray!70!black] (V0') -- (V1') ;
    \draw[thick, gray!70!black] (V1') -- (V2');
    \draw[thick, gray!70!black] (V2') -- (V0') ;

    \node[vertex] at (V0) {};
    \node[vertex] at (V1) {};
    \node[vertex] at (V2) {};
    \node[vertex] at (V3) {};
    \node[vertex] at (V4) {};
    \node[vertex] at (V5) {};
    \node[vertex] at (V0') {};
    \node[vertex] at (V1') {};
    \node[vertex] at (V2') {};
    \node[font=\large] at (2.2,0.5) {$\geq$};
        \node[font=\large] at (4,2.0) {$\mathbf{S}([x_0,x_1,x_2])$};

    \node(trianglelabel) at (0.1,0.1) {};

    \node[above] (V0label)  at (V0) {$x_0$};
    \node[below left]  (V1label) at (V1) {$x_1$};
    \node[below right]  (V2label) at (V2) {$x_2$};
    \node[above] (V3label) at (V3) {$x_3$};
    \node[right] (V4label) at (V4) {$x_4$};
    \node[right] (V5label) at (V5) {$x_5$};   
    
    \node[above] (V0'label)  at (V0') {$x_0$};
    \node[below left]  (V1'label) at (V1') {$x_1$};
    \node[below right]  (V2'label) at (V2') {$x_2$};
       
\end{tikzpicture}
\caption{Illustration of the minimal simplicial set for a given simplex. All simplices that are plotted are assumed to have weight $1$, all that are not plotted have weight $0$. The minimal simplicial set simply contains the simplex and all of its faces and degeneracies (the latter are not plotted).}
\end{figure}
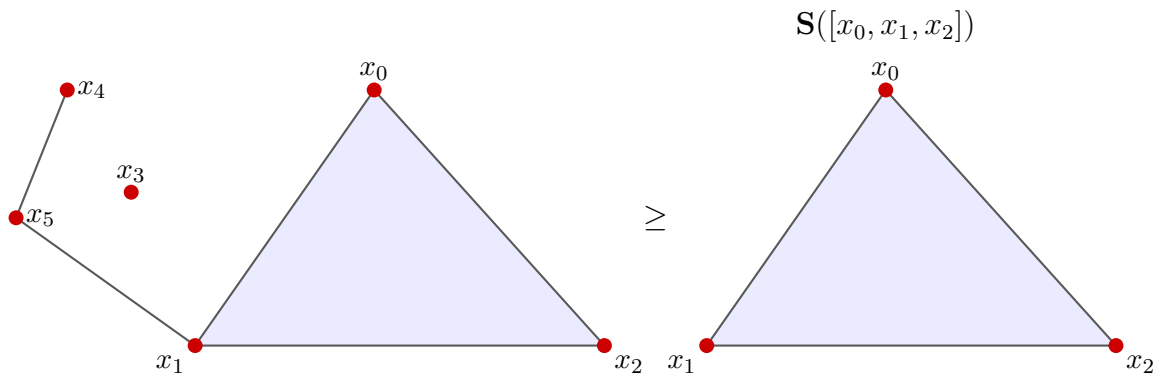

The weight function (or, alternatively, inclusion of simplices) induces a partial order on $\mathcal{S}^n(X)$, that is for two simplicial sets $S^1 = (X^{k+1}, k \in \{0,...,n\},\mu^1)$ and $ S^2 = (X^{k+1}, k \in \{0,...,n\},\mu^2)$ with the same underlying sets, we have $S^1 \geq S^2$ iff $\mu^1 \geq \mu^2$ everywhere.
Not all simplicial sets are comparable. However, for the minimal simplicial sets from the previous definition, the comparison is rather straightforward: One simply has to 
check whether the simplex defining the minimial simplicial set is present. That is:
\begin{lemma}
Let $S = (X^{k+1}, k \in \{0,...,n\},\mu)$ be a simplicial set.  Then 
\begin{equation}
S \geq \mathbf{S}([x_{i_0},...,x_{i_n}] ) \iff \mu([x_{i_0},...,x_{i_n}])= 1.
\end{equation}
\end{lemma}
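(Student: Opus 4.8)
The plan is to prove both implications directly from the definitions, using the combinatorial description of $\mathbf{S}([x_{i_0},\dots,x_{i_n}])$ supplied by the preceding lemma. Write $\mu'$ for the weight function of $\mathbf{S}([x_{i_0},\dots,x_{i_n}])$.

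For the forward direction, suppose $S \geq \mathbf{S}([x_{i_0},\dots,x_{i_n}])$. By the definition of the partial order on $\mathcal{S}^n(X)$ this means $\mu \geq \mu'$ pointwise. Since $\mathbf{S}([x_{i_0},\dots,x_{i_n}])$ contains $[x_{i_0},\dots,x_{i_n}]$ by construction, i.e. $\mu'([x_{i_0},\dots,x_{i_n}]) = 1$, we get $\mu([x_{i_0},\dots,x_{i_n}]) \geq 1$, hence $\mu([x_{i_0},\dots,x_{i_n}]) = 1$ because weights take values in $\{0,1\}$.

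For the reverse direction, suppose $\mu([x_{i_0},\dots,x_{i_n}]) = 1$; I want $\mu(\sigma) = 1$ whenever $\mu'(\sigma) = 1$. By the preceding lemma, $\mu'(\sigma) = 1$ precisely when $\sigma = f_1 \circ \cdots \circ f_m([x_{i_0},\dots,x_{i_n}])$ for some composition of face and degeneracy maps $f_j$. I would then induct on $m$: the case $m = 0$ is the hypothesis, and for the step, writing $\sigma = f_1(\tau)$ with $\mu(\tau) = 1$ by the inductive hypothesis, the fact that the structure map $f_1$ of $S$ is non-decreasing in the weights (part of the definition of a fuzzy/classical simplicial set) gives $\mu(\sigma) = \mu(f_1(\tau)) \geq \mu(\tau) = 1$. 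Hence $\mu \geq \mu'$ pointwise, i.e. $S \geq \mathbf{S}([x_{i_0},\dots,x_{i_n}])$.

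The argument is essentially an exercise in unwinding definitions, so there is no real obstacle; the only point needing care is in the reverse direction, namely that "$S$ contains $[x_{i_0},\dots,x_{i_n}]$" forces $S$ to contain the entire face/degeneracy orbit of that simplex, which is exactly where monotonicity of the structure maps of $S$ is used together with the combinatorial characterization of $\mathbf{S}$. Alternatively, one can bypass the induction entirely by invoking that $\mathbf{S}([x_{i_0},\dots,x_{i_n}])$ is, by definition, the least element of the set of simplicial sets in $\mathcal{S}^n(X)$ containing $[x_{i_0},\dots,x_{i_n}]$ (nonempty by hypothesis), so membership of $S$ in that set immediately gives $S \geq \mathbf{S}([x_{i_0},\dots,x_{i_n}])$; the induction is then only what one needs if one also wants to justify that this minimum is well-defined rather than merely minimal.
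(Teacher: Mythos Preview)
Your proof is correct. The paper does not supply an explicit proof of this lemma; it treats the statement as immediate from the definition of $\mathbf{S}([x_{i_0},\dots,x_{i_n}])$ as the \emph{minimal} simplicial set containing that simplex, which is exactly the alternative you mention in your final paragraph. Your inductive argument via Lemma~1 is a perfectly valid (and more explicit) route to the same conclusion, and the two approaches differ only in whether one takes ``minimal'' as primitive or unpacks it through the face/degeneracy orbit description.
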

\begin{remark}
If $X$ is a finite set, so is $\mathcal{S}^n(X)$. Thus, in that case we may always define a probability measure on $\mathcal{S}^n(X)$, that is simply a map $p: \mathcal{S}^n(X) \to [0,1]$ such that $\sum_{S \in S^n(X)} p(S) = 1$.
\end{remark}
As comparison against $\mathbf{S}$ is simply checking for the presence of the simplex, the according probability is the marginal probability of that simplex being present:
\begin{definition}
Let $X$ be a finite set. Let $p$ be a probability measure on $\mathcal{S}^n(X)$. We then define $m^n(p) : X^{n+1} \to [0,1]$
\begin{equation}
m^n(p)( [x_{i_0},...,x_{i_n}] ) = p\left[S \geq \mathbf{S}([x_{i_0},...,x_{i_n}]) \right] = p(S([x_{i_0},...,x_{i_n}])=1)
\end{equation}
where $S$ denotes a simplicial set randomly sampled from $p$, which is identified which its weight function on the rightmost side.
\end{definition}
The second equality in this definition follows from our previous discussion on comparisons with the minimal simplicial sets. 
Thus, one may interpret $m^n$ as the marginal probability of observing a given simplex (i.e. having weight one) in a randomly sampled simplical set under $p$.  
The class of probability distributions that we want to consider are the following:
\begin{definition}
Let $X$ be a set, and consider the truncated simplicial sets $\mathcal{S}^n(X)$.  We define a family of probability measures consistent with the simplicial structure to be a family of probability measures 
\begin{equation} p_{x_{i_0},...,x_{i_n}} : \mathcal{S}^n(\{x_{i_0},...,x_{i_n}\}) \to [0,1]\end{equation}
where the $x_{i_k} \in X$,
such that whenever there is an intersection of the points of two of those measures, the $m(p)$ agree on the shared simplices. 
That is, if $U = \{x_{i_0},...,x_{i_n}\} \cap \{y_{i_1},..., y_{i_n}\}$, then 
\begin{equation}
m(p_{x_{i_0},...,x_{i_n}})([u_{i_1},..,u_{i_k}]) = m(p_{y_{i_1},...,y_{i_n}})([u_{i_1},..,u_{i_k}])  \forall u_{i_j} \in U.
\end{equation}
\end{definition}
As an example (this foreshadows an example below), one may first think of a probability measure that generates independent, identically distributed points $x_i$ in some euclidean space, together with a deterministic rule on how to construct simplices with these points as vertices - e.g., the Vietoris-Rips complex at a certain scale.
One then checks that this gives a consistent family.
We are now able to state a straightforward result linking probability distributions and fuzzy weights:
\begin{proposition}
Given a family of probability measures consistent with the simplicial structure on $X$, $(S_k,\mu^k)$ defines a classical fuzzy simplicial set, where $S_k = X^{k+1}$ and
\begin{equation}
\mu^k([x_{i_0},...,x_{i_k}]) = m(p_{z_{i_0},...,z_{i_n}}) ([x_{i_0},...,x_{i_k}])
\end{equation}
where $z_{i_0},...,z_{i_n}$ are any such points containing $x_{i_0},...,x_{i_k}$.\end{proposition}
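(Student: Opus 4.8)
The plan is to check, in order, the three things that turn the data $(S_k,\mu^k)$ into a (truncated) classical fuzzy simplicial set: that each $\mu^k$ is a well-defined $[0,1]$-valued function on $X^{k+1}$; that the face and degeneracy maps satisfy the simplicial identities; and that these maps are non-decreasing in the fuzzy weights. The middle point needs no work: the maps $d_i,s_i$ on $(S_k)=(X^{k+1})$ are exactly those of the canonical free simplicial set generated by $X$ from the preliminaries, and they obey the simplicial identities by construction, with no reference to any weights. So the real content sits in well-definedness and in monotonicity, and these turn out to be short consequences of the two lemmas about $\mathbf{S}(\cdot)$ together with the consistency hypothesis.

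First I would dispatch well-definedness, which is the \emph{only} place the consistency hypothesis is used. Fix a simplex $\sigma=[x_{i_0},\dots,x_{i_k}]$ with $k\le n$ and let $z,z'$ be two $(n{+}1)$-tuples of points of $X$, each containing every vertex of $\sigma$. Then all vertices of $\sigma$ lie in the intersection $U$ of the two underlying point sets, so $\sigma$ is a simplex over $U$, and the consistency condition gives immediately $m(p_z)(\sigma)=m(p_{z'})(\sigma)$. Hence $\mu^k(\sigma)$ does not depend on the chosen enlarging tuple, and since $m(p_z)(\sigma)$ is a marginal probability it lies in $[0,1]$; so $(S_k,\mu^k)$ is a legitimate family of fuzzy sets.

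Next I would prove monotonicity. Let $f$ be a face map $d_j$ or a degeneracy map $s_j$, and set $\sigma'=f(\sigma)$. A face map deletes a vertex and a degeneracy map repeats one, so the vertex set of $\sigma'$ is contained in that of $\sigma$; pick a single $(n{+}1)$-tuple $z$ of points of $X$ containing all vertices of $\sigma$ and use it to compute both weights. By the definition of $m$ together with the comparison lemma for minimal simplicial sets, $\mu(\sigma)=p_z[\,S\ge \mathbf{S}(\sigma)\,]$ and $\mu(\sigma')=p_z[\,S\ge \mathbf{S}(\sigma')\,]$, where $\mathbf{S}$ is formed inside $\mathcal{S}^n$ over the finite vertex set of $z$. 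Since $\sigma'=f(\sigma)$, the lemma characterizing $\mathbf{S}$ shows that every simplex of $\mathbf{S}(\sigma')$ is also a simplex of $\mathbf{S}(\sigma)$: any $h_1\circ\cdots\circ h_m(\sigma')$ equals $h_1\circ\cdots\circ h_m\circ f(\sigma)$, again a composite of face and degeneracy maps applied to $\sigma$. Hence $\mathbf{S}(\sigma')\le\mathbf{S}(\sigma)$, the event $\{S\ge\mathbf{S}(\sigma)\}$ is contained in $\{S\ge\mathbf{S}(\sigma')\}$, and therefore $\mu(\sigma)\le\mu(\sigma')$, i.e. $\mu(\sigma)\le\mu(f(\sigma))$ — which is precisely the monotonicity required of the face/degeneracy maps (and specializes to the stated equality $\mu(s_j\sigma)=\mu(\sigma)$ once one also applies it to $d_j s_j=\mathrm{id}$). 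Combining the three points completes the proof.

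I expect the only friction to be bookkeeping rather than mathematics: keeping straight that $\mathbf{S}(\cdot)$ must be taken inside $\mathcal{S}^n$ over the finite tuple $z$ so that it is comparable to the simplicial sets sampled by $p_z$; noting that truncation at $n$ silently discards degeneracies of dimension $>n$ without violating the simplicial-set axioms; and fixing the convention that "non-decreasing in the fuzzy weights" for a structure map $f$ means $\mu(\sigma)\le\mu(f(\sigma))$, which here is forced in the right direction by the set inclusion $\{S\ge\mathbf{S}(\sigma)\}\subseteq\{S\ge\mathbf{S}(\sigma')\}$ and is consistent with the Vietoris--Rips intuition that a simplex carries weight no larger than its faces. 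Once these conventions are pinned down, each step above is a one-line consequence of results already in the excerpt.
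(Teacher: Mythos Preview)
Your proposal is correct and follows essentially the same route as the paper: well-definedness via the consistency hypothesis, then monotonicity of the structure maps via the inclusion $\mathbf{S}(f(\sigma))\le\mathbf{S}(\sigma)$ and monotonicity of the probability measure. The only organizational difference is that you handle face and degeneracy maps uniformly through the characterization lemma for $\mathbf{S}$ (deriving the degeneracy equality from $d_js_j=\mathrm{id}$), whereas the paper treats them separately and argues the degeneracy case directly by observing that a simplex is present in a simplicial set iff its degeneracies are, so $\mathbf{S}(s_j\sigma)=\mathbf{S}(\sigma)$ outright.
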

\begin{proof}
By the consistency, the weight $\mu$, is well defined, that is it will not depend on our choice of base points. 
We then only have to check that the weight is compatible with face and degeneracy maps.
This follows from the definition:
\begin{equation} \begin{split}
\mu(d_j[x_{i_0},..,x_{i_k}]) &=  p(S \geq \mathbf{S}([x_{i_0},..., \hat{x_{i_j}},..., x_{i_k}]) \\ &\geq p(S \geq \mathbf{S}([x_{i_0},,..., x_{i_k}]) = \mu([x_{i_0},...,x_{i_k}]),
\end{split} \end{equation}
since $\mathbf{S}([x_{i_0},..., \hat{x_{i_j}},..., x_{i_k}]) \leq \mathbf{S}([x_{i_0},,..., x_{i_k}])$ - the minimal simplicial set that contains the face of a simplex is contained in the minimial simplicial set of that simplex.
On the other hand,
\begin{equation} \begin{split}
\mu(s_j[x_{i_0},..,x_{i_k}]) &=  p(S \geq \mathbf{S}([x_{i_0},...,{x_{i_j}}, x_{i_j},..., x_{i_k}])\\ &= p(S \geq \mathbf{S}([x_{i_0},,..., x_{i_k}]) = \mu([x_{i_0},...,x_{i_k}]).
\end{split} \end{equation}
This is because the existence of a simplex in a simplicial set (i.e. having weight $1$) necessitates all degeneracies of that simplex to be also present, and the presence of a degeneracy necessitates all faces, in particular non-denegerate ones, which implies that the minimal simplicial sets agree. 
Thus we have shown that both face and degeneracy maps do not decrease weight.
\end{proof}
\begin{remark}
The above proposition essentially hinges on the underlying poset structure of simplices and may be generalized to arbitrary posets where these results are standard (see the appendix).
\end{remark}
\begin{figure}
\centering
\begin{tikzpicture}[  vertex/.style={fill=red!80!black, circle, inner sep=2pt,font=\tiny},
edge_weight_label/.style={midway,sloped, above, inner sep=1pt, font=\tiny, yshift=2pt,text=blue},
    vertex_weight_label/.style={ font=\tiny,text=blue}]

    
            \begin{scope}[
            scale=0.7,  ]
          \coordinate (V0) at (0,1.5);
    \coordinate (V1) at (-1.4,-0.5);
    \coordinate (V2) at (1.8,-0.5);
    \coordinate (V3) at (-1.9,0.7);
    \coordinate (V4) at (-2.4,1.5);
    \coordinate (V5) at (-2.8,0.5); 
     \node (p) at (0,3) {$p = \frac{2}{8}$}; 

         \fill[blue!10, opacity=0.8] (V0) -- (V1) -- (V2) -- cycle;

	 \draw[thick, gray!70!black] (V0) -- (V1) ;
    \draw[thick, gray!70!black] (V1) -- (V2);
     \draw[thick, gray!70!black] (V2) -- (V0) ;
      \draw[thick, gray!70!black] (V0) -- (V3) ;
         \draw[thick, gray!70!black] (V1) -- (V3);
    \draw[thick, gray!70!black] (V3) -- (V4);
     \draw[thick, gray!70!black] (V3) -- (V5) ;

    \node[vertex] at (V0) {};
    \node[vertex] at (V1) {};
    \node[vertex] at (V2) {};
    \node[vertex] at (V3) {};
    \node[vertex] at (V4) {};
    \node[vertex] at (V5) {};

    \node[above, font=\tiny] (V0label)  at (V0) {$x_0$};
    \node[below left, font=\tiny]  (V1label) at (V1) {$x_1$};
    \node[below right, font=\tiny]  (V2label) at (V2) {$x_2$};
    \node[above, font=\tiny] (V3label) at (V3) {$x_3$};
    \node[right, font=\tiny] (V4label) at (V4) {$x_4$};
    \node[right, font=\tiny] (V5label) at (V5) {$x_5$}; 
       \end{scope}  
       

        \begin{scope}[
            scale=0.7, shift={(7,0)} ]
          \coordinate (V0') at (0,1.5);
    \coordinate (V1') at (-1.4,-0.5);
    \coordinate (V2') at (1.8,-0.5);
    \coordinate (V3') at (-1.9,0.7);
    \coordinate (V4') at (-2.4,1.5);
    \coordinate (V5') at (-2.8,0.5); 
       \node (p) at (0,3) {$p = \frac{3}{8}$};

    \fill[blue!10, opacity=0.8] (V3') -- (V4') -- (V5') -- cycle node[midway] (V3V4V5') {};
    \fill[blue!10, opacity=0.8] (V0') -- (V1') -- (V2') -- cycle ;
 \draw[thick, gray!70!black] (V0') -- (V1') ;
    \draw[thick, gray!70!black] (V1') -- (V2');
     \draw[thick, gray!70!black] (V2') -- (V0') ;

 \draw[thick, gray!70!black] (V3') -- (V4') ;
    \draw[thick, gray!70!black] (V4') -- (V5');
     \draw[thick, gray!70!black] (V5') -- (V3') ;

     \node[vertex] at (V0') {};
    \node[vertex] at (V1') {};
    \node[vertex] at (V2') {};
    \node[vertex] at (V3') {};
    \node[vertex] at (V4') {};
    \node[vertex] at (V5') {};
     \node[above, font=\tiny] (V0label)  at (V0') {$x_0$};
    \node[below left, font=\tiny]  (V1label) at (V1') {$x_1$};
    \node[below right, font=\tiny]  (V2label) at (V2') {$x_2$};
    \node[above, font=\tiny] (V3label) at (V3') {$x_3$};
    \node[right, font=\tiny] (V4label) at (V4') {$x_4$};
    \node[right, font=\tiny] (V5label) at (V5') {$x_5$}; 
       \end{scope}

       \begin{scope}[
            scale=0.7, shift={(14,0)} ]
          \coordinate (V0'') at (0,1.5);
    \coordinate (V1'') at (-1.4,-0.5);
    \coordinate (V2'') at (1.8,-0.5);
    \coordinate (V3'') at (-1.9,0.7);
    \coordinate (V4'') at (-2.4,1.5);
    \coordinate (V5'') at (-2.8,0.5); 
           \node (p) at (0,3) {$p = \frac{2}{8}$};

    \fill[blue!10, opacity=0.8] (V0'') -- (V3'') -- (V4'') -- cycle;
    \fill[blue!10, opacity=0.8] (V3'') -- (V4'') -- (V5'') -- cycle node[midway] (V3V4V5'') {};
    \fill[blue!10, opacity=0.8] (V1'') -- (V2'') -- (V3'') -- cycle;

 \draw[thick, gray!70!black] (V0'') -- (V3'') ;
  \draw[thick, gray!70!black] (V3'') -- (V4'') ;
  \draw[thick, gray!70!black] (V4'') -- (V0'') ;
  \draw[thick, gray!70!black] (V4'') -- (V5'') ;

  \draw[thick, gray!70!black] (V5'') -- (V3'') ;
    \draw[thick, gray!70!black] (V3'') -- (V2'');
    \draw[thick, gray!70!black] (V2'') -- (V1'');
    \draw[thick, gray!70!black] (V1'') -- (V3'');

     \node[vertex] at (V0'') {};
    \node[vertex] at (V1'') {};
    \node[vertex] at (V2'') {};
    \node[vertex] at (V3'') {};
    \node[vertex] at (V4'') {};
    \node[vertex] at (V5'') {};
     \node[above, font=\tiny] (V0label)  at (V0'') {$x_0$};
    \node[below left, font=\tiny]  (V1label) at (V1'') {$x_1$};
    \node[below right, font=\tiny]  (V2label) at (V2'') {$x_2$};
    \node[above, font=\tiny] (V3label) at (V3'') {$x_3$};
    \node[right, font=\tiny] (V4label) at (V4'') {$x_4$};
    \node[right, font=\tiny] (V5label) at (V5'') {$x_5$}; 
       \end{scope}
       \begin{scope}[
            scale=0.7, shift={(21,0)} ]
          \coordinate (V0''') at (0,1.5);
    \coordinate (V1''') at (-1.4,-0.5);
               \node (p) at (0,3) {$p = \frac{1}{8}$}; 

     \draw[thick, gray!70!black] (V0''') -- (V1''')  node[midway] (V0V1''') {};

        \node[vertex] at (V0''') {};
    \node[vertex] at (V1''') {};
   
        \node[above, font=\tiny] (V0label)  at (V0''') {$x_0$};
    \node[below left, font=\tiny]  (V1label) at (V1''') {$x_1$};
          \end{scope}
          
            \begin{scope}[
            scale=2.0, shift={(4,-4)} ]
          \coordinate (V0'''') at (0,1.5);
    \coordinate (V1'''') at (-1.4,-0.5);
    \coordinate (V2'''') at (1.8,-0.5);
    \coordinate (V3'''') at (-1.9,0.7);
    \coordinate (V4'''') at (-2.4,1.5);
    \coordinate (V5'''') at (-2.8,0.5); 
     
     \fill[blue!10, opacity=0.8] (V3'''') -- (V4'''') -- (V5'''') -- cycle node[midway] (V3V4V5'''') {};

    \draw[thick, gray!70!black] (V0'''') -- (V1'''')  node[midway] (V0V1'''') {};

     \node[vertex] at (V0'''') {};
    \node[vertex] at (V1'''') {};
    \node[vertex] at (V2'''') {};
    \node[vertex] at (V3'''') {};
    \node[vertex] at (V4'''') {};
    \node[vertex] at (V5'''') {};
     \node[above, font=\tiny] (V0label)  at (V0'''') {$x_0$};
    \node[below left, font=\tiny]  (V1label) at (V1'''') {$x_1$};
    \node[below right, font=\tiny]  (V2label) at (V2'''') {$x_2$};
    \node[above, font=\tiny] (V3label) at (V3'''') {$x_3$};
    \node[right, font=\tiny] (V4label) at (V4'''') {$x_4$};
    \node[right, font=\tiny] (V5label) at (V5'''') {$x_5$}; 
       \end{scope}
            \draw[dashed, green] (V2) -- (V2'''');
             \draw[dashed, green] (V2') -- (V2'''');
            \draw[dashed, green] (V2'') -- (V2'''');

           \draw[dashed, blue] (V0V1''') -- (V0V1'''');
            \draw[dashed, red] (V3V4V5') -- (V3V4V5'''');
            \draw[dashed, red] (V3V4V5'') -- (V3V4V5'''');
          \node[above left, font=\small,red] at (V3V4V5'''') {$ \mu([x_3,x_4,x_5]) = \frac{5}{8}$};
          \node[right,font=\small,blue] at (V0V1'''') {$ \mu([x_1,x_2]) = \frac{1}{8}$};
           \node[left,font=\small,green] at (V2'''') {$ \mu([x_2]) = \frac{7}{8}$};
           
\end{tikzpicture}
\caption{Illustration of the procedure for obtaining fuzzy weights from a probability distribution over a simplicial set. Top row shows a probability distribution over 4 simplicial sets (only the nondegenerate simplices with weight $1$ are shown). The bottom plot shows how the fuzzy weights for some of the simplices are obtained by computing the marginal probability of observing them in any of the simplicial sets.}
\end{figure}
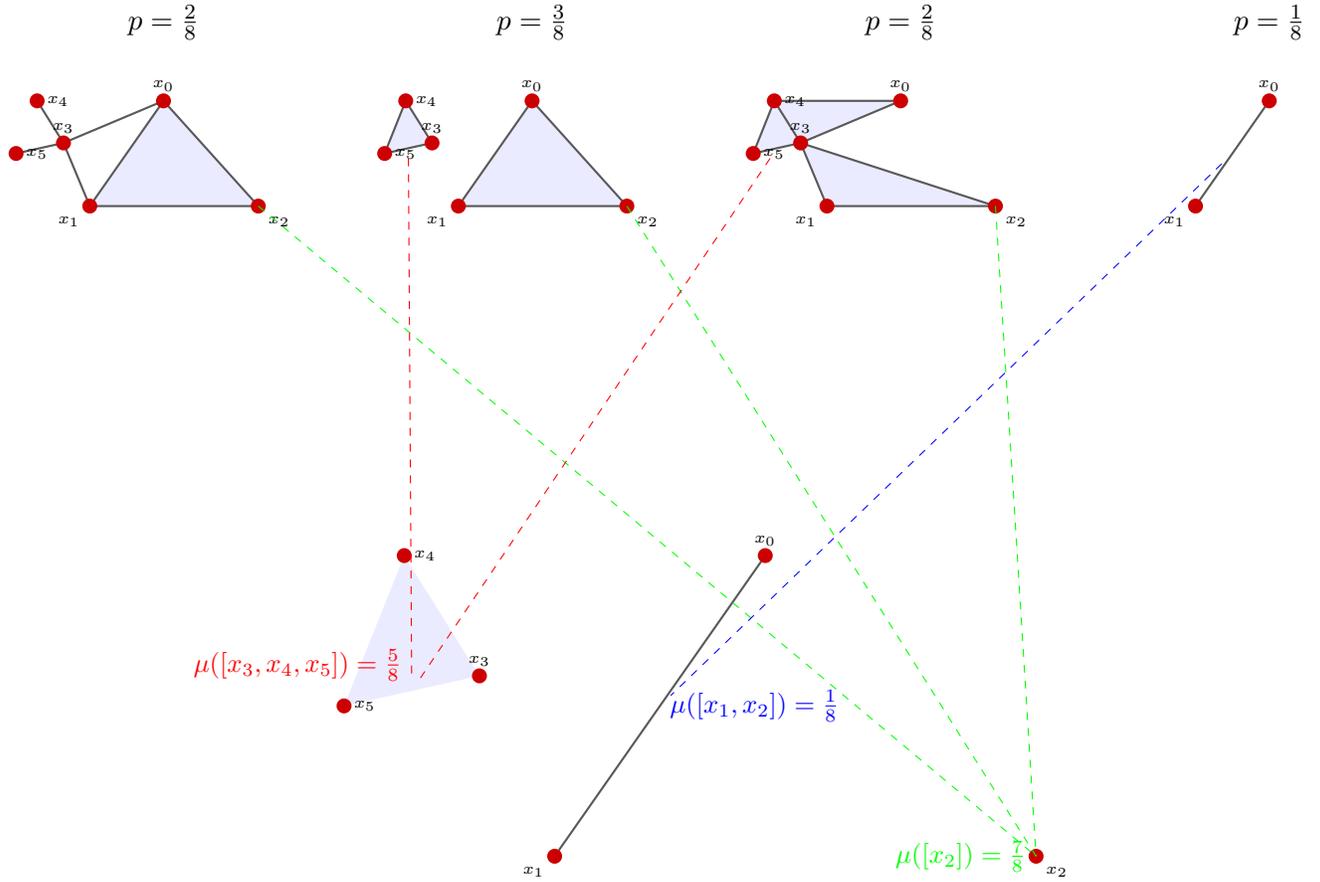

Our proposition above works in the general situation of a space $X$ which is not necessarily finite, for which we needed the technical condition of the consistency of the probability measures. We now return to the finite setting, that is, below $U$ as before refers to a finite set of points in some bigger space $X$, where this consistency is not an issue.
We denote by $\mathcal{P}^n(U)$ the set of all probability measures over simplicial sets truncated at $n$ on the base set $U$. 
Also, recall the definition of $F^n(U)$ in \autoref{def:simpsetsoverX} as the collection of all truncated fuzzy simplicial sets over $U$.
We then have the following:
\begin{proposition}\label{prop:marginal}
The marginal map \begin{equation}
m: \mathcal{P}^n(U) \to \mathcal{F}^n(U), ~ p \mapsto   \mu_{p}, ~ \mu_p(\sigma) = p(S \geq \mathbf{S}(\sigma)) = p(S(\sigma) = 1)  \end{equation} is surjective.\end{proposition}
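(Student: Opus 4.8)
The plan is to produce, for an arbitrary truncated fuzzy simplicial set $(S_k,\mu_k)_{k\le n}$ on the finite set $U$, an explicit preimage $p\in\mathcal P^n(U)$ under $m$, using the ``layer-cake'' / superlevel-set representation of a fuzzy weight. Concretely, I would introduce a single auxiliary uniform variable $t\sim\mathrm{Uniform}[0,1]$ and, for each $t\in[0,1]$, define a crisp weight $\mu^{(t)}$ on $\bigcup_{k\le n}U^{k+1}$ by $\mu^{(t)}(\sigma)=\delta(\mu(\sigma)\ge t)$, writing $S^{(t)}$ for the resulting object. Because $U$ is finite, $\bigcup_{k\le n}U^{k+1}$ is finite, $\mu$ takes only finitely many values, and $t\mapsto S^{(t)}$ is a finite step function; I would then let $p$ be the pushforward of Lebesgue measure $\lambda$ along this map, i.e. $p(S)=\lambda\{t\in[0,1]:S^{(t)}=S\}$, which is a genuine probability mass function on the finite set $\mathcal S^n(U)$ since the preimages partition $[0,1]$.

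Two verifications then remain. The first, and the only step needing real care, is that each $S^{(t)}$ is actually an element of $\mathcal S^n(U)$, i.e. a \emph{classical} simplicial set: by the characterisation recorded in the Remark describing fuzzy simplicial sets via monotone weight functions, a $\{0,1\}$-valued weight defines a simplicial set iff it is monotone along the orders $\leq_{\text{face}}$ and $\leq_{\text{degeneracy}}$, and this is exactly where the defining monotonicity of the fuzzy weight $\mu$ (a face has weight at least that of its simplex, a degeneracy the same weight) gets used: if $\sigma$ is obtained from $\tau$ by iterated face and degeneracy maps then $\mu(\sigma)\ge\mu(\tau)$, so $\mu(\tau)\ge t$ forces $\mu(\sigma)\ge t$, i.e. $\mu^{(t)}$ inherits the required monotonicity, and degenerate simplices automatically receive equal weights. (This is also why one cannot simply include each simplex independently with probability $\mu(\sigma)$: that would violate the face relations.) The second verification is the identity $m(p)=\mu$: for any simplex $\sigma$, combining the definition of $m$ with the earlier fact $p(S\ge\mathbf S(\sigma))=p(S(\sigma)=1)$ gives
\[
m(p)(\sigma)=\lambda\{t\in[0,1]:S^{(t)}(\sigma)=1\}=\lambda\{t\in[0,1]:\mu(\sigma)\ge t\}=\mu(\sigma),
\]
using only $\lambda([0,a])=a$ for $a\in[0,1]$. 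Since $\sigma$ was arbitrary, $m(p)=\mu$, establishing surjectivity.

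I do not anticipate any genuine obstacle beyond the bookkeeping above; the content is essentially that every fuzzy weight is the expectation of its random superlevel set, and the one subtlety is checking that superlevel-set thresholding is compatible with the simplicial structure, which the monotonicity axiom guarantees. It is worth noting in passing that $m$ is highly non-injective — coupling the thresholds of different simplices differently, or adding any measure supported on families of simplicial sets with the same marginals, produces other preimages — so the argument yields only a section of $m$, not an inverse, which is all that is needed here.
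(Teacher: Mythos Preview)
Your proof is correct but takes a genuinely different route from the paper. The paper argues abstractly via convexity theory: it observes that $m$ is the restriction of a linear map to the compact convex set $\mathcal P^n(U)$, hence has compact convex image; it then checks that every crisp simplicial set lies in the image (via the Dirac measure), invokes a lemma identifying $\mathcal S^n(U)$ as the extremal points of $\mathcal F^n(U)$, and concludes by Krein--Milman/Carath\'eodory that $\mathcal F^n(U)=\mathrm{conv}(\mathcal S^n(U))\subset\mathrm{Im}(m)$.

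Your layer-cake construction is more elementary and fully constructive: you exhibit an explicit section of $m$ without any appeal to extremal-point theory. This has the advantage of producing a concrete preimage and of tying in directly with the paper's own earlier remark that a fuzzy weight and a one-parameter filtration of crisp simplicial sets are interchangeable via $\mu^r(\sigma)=\delta(\tilde\mu(\sigma)\ge r)$; your $p$ is nothing but the pushforward of a uniform scale through that filtration. Conversely, the paper's argument makes the structural reason for surjectivity transparent (fuzzy weights are exactly convex combinations of crisp ones) and generalises cleanly to the order-polytope viewpoint developed in the appendix. Both arguments need the monotonicity axiom at the same point---yours to guarantee $S^{(t)}\in\mathcal S^n(U)$, the paper's to identify the extremal points---so neither is gratuitously stronger.
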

\begin{proof} Classical simplicial sets are the extremal points of the compact convex set of fuzzy simplicial sets. The result is a then standard from the perspective of convexity theory, see \cref{app:proofs} for details. \end{proof}

\subsection{Fuzzy Simplicial Sets from Distributions over Simplices}
Above we used probability measures over simplicial sets, which is quite a big space. We note that we may use the same construction for probability measures over simplices instead.
To mimic the construction above, we also want to obtain fuzzy weights by using an underlying poset-structure which will automatically give us monotonicity of the weights. 
To do this, we need to take care of the degenerate simplices, which we will do here by simply factoring them out:
Let thus $Z = \cup_{k=1}^n X^k$  be the set of all simplices up to order $n$, and consider the equivalence relation: 
\begin{equation}
\sigma \sim \sigma' \iff  \sigma \leq_{degeneracy} \sigma' \text{ or } \sigma' \leq_{degeneracy} \sigma
\end{equation}
where we recall that $\sigma \leq_{degeneracy} \sigma'$ means that there exists a sequence of degeneracy maps to obtain $\sigma$ from $\sigma'$.
Each such equivalence class  $[\sigma]$ has exactly one non-degenerate simplex, which we will denote $b([\sigma])$.
We can then define the face-order on the set of equivalence classes by 
\begin{equation}
[\sigma] \leq [\sigma'] \iff b([\sigma]) \leq_{face} b(\sigma'),
\end{equation}
 where we recall that $\leq_{face}$ implies a sequence of face maps to obtain one simplex from the other. 
With these constructions, we now have a partial order $\leq$ on $Z / \sim$, from which me may construct fuzzy weights, again by inducing them from the probability measures $P(Z / \sim)$ over equivalence classes. Note that in contrast to the previous setting, we would now sample (non-degenerate) simplices from such a measure instead of whole simplicial sets.
\begin{proposition}\label{prop:injection}
There is an injection \begin{equation} \mathcal{P}(Z / \sim ) \to  \mathcal{F}^n(X) \end{equation}
\end{proposition}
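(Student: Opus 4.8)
The plan is to exhibit the map concretely as the composition of the marginal map $m$ with a push-forward of probability measures, so that well-definedness comes for free from \autoref{prop:marginal}, and then to prove injectivity by Möbius inversion on the finite face poset. Given $p \in \mathcal{P}(Z/\!\sim)$, define
\[
\Phi(p)(\sigma) \;=\; p\bigl(\{\,[\tau] \in Z/\!\sim \;:\; [\tau] \ge [\sigma]\,\}\bigr) \;=\; \sum_{[\tau]\,\ge\,[\sigma]} p([\tau]),
\]
where $[\sigma]$ is the $\sim$-class of the simplex $\sigma$ and $\le$ is the face-order on $Z/\!\sim$. Equivalently, $\Phi = m \circ \iota_*$, where $\iota : Z/\!\sim \to \mathcal{S}^n(X)$ sends $[\tau] \mapsto \mathbf{S}\bigl(b([\tau])\bigr)$ and $\iota_*$ is the induced push-forward of probability measures; indeed $\Phi(p)(\sigma) = m(\iota_* p)(\sigma)$ because, by the lemma on minimal simplicial sets, $\mathbf{S}(b([\tau])) \ge \mathbf{S}(\sigma)$ exactly when $[\sigma] \le [\tau]$. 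Hence $\Phi(p) \in \mathcal{F}^n(X)$ by \autoref{prop:marginal}; in particular the face and degeneracy monotonicities are inherited and need not be rechecked by hand (they also follow directly: $[d_i\sigma] \le [\sigma]$ enlarges the up-set, and $[s_i\sigma] = [\sigma]$).

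First I would record that $\iota$ is itself injective: $\mathbf{S}(\rho) = \mathbf{S}(\rho')$ forces each of $\rho,\rho'$ to appear in the other's minimal simplicial set, hence each to be a face/degeneracy composite of the other, hence $[\rho] = [\rho']$. Consequently $\iota_*$ is injective on $\mathcal{P}(Z/\!\sim)$. This alone, however, does \emph{not} give injectivity of $\Phi$, since $m$ is very far from injective; the real content of the proposition is that the image $\iota_*(\mathcal{P}(Z/\!\sim))$ inside $\mathcal{P}^n(X)$ is rigid enough that $m$ can be undone on it.

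So the crux is injectivity of $\Phi$. Here I would observe that $\Phi(p)(\sigma)$ depends on $\sigma$ only through $[\sigma]$ and, on a $\sim$-class, is determined by the unique non-degenerate representative; thus $\Phi(p)$ carries exactly the data of the vector $v_p = \bigl(\Phi(p)(b([\sigma]))\bigr)_{[\sigma] \in Z/\sim} \in \mathbb{R}^{Z/\sim}$, and by the displayed formula $v_p = \zeta\, p$, where $\zeta$ is the incidence (zeta) matrix of the finite poset $(Z/\!\sim, \le)$, $\zeta_{[\sigma],[\tau]} = \delta([\sigma] \le [\tau])$. Ordering the classes by a linear extension of $\le$ makes $\zeta$ upper unitriangular, hence invertible over $\mathbb{R}$; equivalently, $p$ is recovered from $v_p$ by Möbius inversion, or greedily from the poset maxima downward via $p([\rho]) = \Phi(p)(b([\rho])) - \sum_{[\tau] > [\rho]} p([\tau])$. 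Therefore $\Phi(p) = \Phi(q)$ implies $v_p = v_q$, hence $p = q$, which is the claim.

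The step I expect to require the most care is not the Möbius inversion, which is routine, but the two structural facts it rests on: that $\le$ on $Z/\!\sim$ is a genuine partial order — antisymmetry holds because mutual faces have equal dimension and hence coincide, which is exactly what makes $\zeta$ unitriangular — and the degeneracy bookkeeping, i.e. that quotienting by $\sim$ and passing to the representatives $b([\sigma])$ loses no information about $\Phi(p)$, because any fuzzy weight is forced to be constant on $\sim$-classes. Both are light, but they are where the proof could go wrong if the order or the quotient were set up carelessly.
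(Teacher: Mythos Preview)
Your proof is correct and follows essentially the same route as the paper: the paper defers to the appendix, where it defines the cumulative-distribution map on a general finite poset and invokes M\"obius inversion to obtain an explicit inverse, which is precisely your zeta-matrix/unitriangularity argument specialized to $(Z/\!\sim,\le)$. Your additional packaging via $\Phi = m\circ\iota_*$ to borrow well-definedness from the marginal map is a nice touch not made explicit in the paper, but the core injectivity argument is the same.
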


\begin{proof} \autoref{cor:cdminjective} in \cref{app:proofs} \end{proof}
\begin{proposition}
The injection from the previous proposition is not a surjection. 
\end{proposition}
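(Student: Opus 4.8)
The plan is to produce an explicit fuzzy simplicial set that cannot equal $\mu_p$ for any $p \in \mathcal{P}(Z/\sim)$. Recall that the injection of \autoref{prop:injection} sends a measure $p$ over equivalence classes of non-degenerate simplices to the weight $\mu_p(\sigma) = p\bigl(\{[\tau] : [\sigma] \le [\tau]\}\bigr)$, i.e.\ the total mass of the up-set of $[\sigma]$ in the face poset $Z/\sim$. Since $[x] \le [\tau]$ holds exactly when $x$ occurs among the vertices of the non-degenerate representative of $\tau$, I would first record the invariant
\begin{equation}
\sum_{x \in X} \mu_p([x]) = \sum_{[\tau] \in Z/\sim} p([\tau])\,\bigl|\{x \in X : [x] \le [\tau]\}\bigr| \;\ge\; \sum_{[\tau] \in Z/\sim} p([\tau]) = 1 ,
\end{equation}
where the only input is that every non-degenerate simplex has at least one vertex.

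Next I would exhibit a fuzzy simplicial set violating this inequality. Assuming $X$ is nonempty, the weight function that is identically $0$ is a perfectly good (indeed classical) fuzzy simplicial set in $\mathcal{F}^n(X)$ — both monotonicity requirements, along face maps and along degeneracy maps, hold vacuously — and it satisfies $\sum_{x \in X}\mu([x]) = 0 < 1$. Hence it is not in the image of the injection, and the map fails to be surjective. Any fuzzy simplicial set with sufficiently small vertex weights works equally well; the zero object is merely the most convenient witness. Conceptually, the point is that a probability measure on $Z/\sim$ must distribute its unit mass over genuine simplices, whereas the analogous map over simplicial sets (\autoref{prop:marginal}) can place all mass on the empty simplicial set — which is precisely why that map was surjective and this one is not.

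Finally I would append a short remark that this is unavoidable for dimension reasons rather than an artifact of the example: $p \mapsto \mu_p$ is affine, its domain $\mathcal{P}(Z/\sim)$ is a simplex of dimension $|Z/\sim| - 1$, while $\mathcal{F}^n(X)$, identified with the polytope of order-reversing functions $Z/\sim \to [0,1]$, is full-dimensional in $\mathbb{R}^{|Z/\sim|}$ (it contains, for instance, every strictly order-reversing function valued in $(0,1)$), so an affine image of a set of dimension $|Z/\sim|-1$ cannot cover it. I do not anticipate a real obstacle here; the only things to get right are the bookkeeping of degenerate versus non-degenerate simplices in the displayed inequality (a non-degenerate simplex may repeat a vertex but still has at least one) and verifying that the proposed counterexample genuinely satisfies the fuzzy simplicial set axioms, which for the zero weight function is immediate.
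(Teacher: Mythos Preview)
Your argument is correct and takes a genuinely different route from the paper's.

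The paper exhibits a classical simplicial set containing two face-incomparable nondegenerate simplices $\sigma,\sigma'$ with weight~$1$ but with every common upper bound having weight~$0$; since any $p\in\mathcal{P}(Z/\sim)$ with $p(\ge[\sigma])=p(\ge[\sigma'])=1$ must place all its mass on simplices lying above both, some such upper bound would receive positive weight, a contradiction. Your proof instead extracts the linear invariant $\sum_{x\in X}\mu_p([x])\ge 1$ and observes that the identically-zero weight violates it. Both are valid counterexamples. Your argument is shorter and, via the appended dimension count ($\dim\mathcal{P}(Z/\sim)=|Z/\sim|-1$ versus the full-dimensional order polytope $\mathcal{F}^n(X)\subset\mathbb{R}^{|Z/\sim|}$), makes transparent that non-surjectivity is structural rather than incidental. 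The paper's example, on the other hand, shows that the image misses even fuzzy simplicial sets in which all vertices have weight~$1$ --- the regime one actually cares about in applications such as UMAP --- so it carries slightly more practical information. A minor point: in your parenthetical you note that a nondegenerate simplex ``may repeat a vertex but still has at least one''; this is exactly right (e.g.\ $[x_0,x_1,x_0]$), and is all the displayed inequality needs.
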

\begin{proof}
Consider any standard simplicial set, identified with it's weight $\mu$, where there are two nondegenerate simplices $\sigma, \sigma'$ such that 
$$\mu(\sigma) = \mu(\sigma') = 1
$$
and where $\sigma, \sigma'$ are face-incomparable, i.e. neither one may be obtained by face maps from the other.
Let it further hold
$$ \mu(\sigma'') = 0$$
 for any $\sigma''$ such that both $\sigma \leq \sigma''$ and $\sigma' \leq \sigma''$. Then $\mu$ cannot be achieved by the map we have defined. Indeed, if $p$ is any probability measure such that 
\begin{equation}
p( \geq [\sigma]) = p(\geq [\sigma']) = 1,
\end{equation}
then necessarily there has to be a $\sigma''$ of the above form with 
\begin{equation}
p(\geq [\sigma'']) = 1
\end{equation}
and hence $\mu(\sigma'') = 1$ which violates the assumption.
\end{proof}

\begin{remark}
Again these constructions may be carried over to arbitrary posets, where there are standard results linking them.
\end{remark}

\subsection{Examples via Filtrations of Simplicial Complexes}
Before proceeding, we construct some examples.
\begin{example}
Let $(X,d)$ be a metric space and fix $r \geq 0$. For $U = \{x_{i_0},...,x_{i_n}\} \subset X$, consider the Vietoris-Rips complex $VR(U,r)$.
Then, we can construct a delta-measure on $\mathcal{S}^n(U)$, i.e.
\begin{equation}
p_{U}(S|r) = \delta(S = VR(U,r)).
\end{equation}
Then $m(p)$ simply corresponds to the weight in the VR complex, as 
\begin{equation}
p_{U}(S \geq \mathbf{S}(\sigma)|r) = \delta (VR(U,r) \geq \mathbf{S}(\sigma)) = \delta(\mu(VR(U,r))(\sigma) = 1).
\end{equation}
Thus, the obtained fuzzy simplicial set is simply given by the Vietoris-Rips complex $VR(X,r)$ - which thus also is a classical simplicial set.
\end{example}

The above example is tautological, but it will help us construct the next example. The Vietoris-Rips complex at a fixed scale $r$ does not capture all topological or metric information about the underlying vertices. Thus, usually, one wants to consider the whole filtration $VR(X,r), r \in \mathbb{R}_{\geq 0}$. This however yields a whole family of complexes instead of a single weight for each simplex. To obtain a single quantity for each simplex, we then may want to put a distribution on the scales, and average over this distribution. This is what we will do in the next example.

\begin{example}\label{ex:VR}
Assume we are randomly sampling the scales $r$ of the $VR$ complex according to some distribution $p(r)$, with cumulative distribution function $\phi(t) = \int_0^t p(r) dr$.
Let furthermore as before, given a fixed scale $r$,
\begin{equation}
p_U(S \vert r) = \delta(S = VR(U,r)).\end{equation}
Then we may average over the distribution of the scales to obtain 
\begin{equation}
p_U(S) = \int_0^{\infty} p_U(S \vert r) p(r) dr.
\end{equation}
\end{example}

\begin{proposition}
For a simplicial set $S \in \mathcal{S}^n(U)$ let 
\begin{equation} \begin{split}
d_m(S) &= \inf_{\sigma: S(\sigma) = 0} \max_{x_i,x_j \in \sigma} d(x_i,x_j) \\
d_M(S)&= \sup_{\sigma: S(\sigma) = 1} \max_{x_i,x_j \in \sigma} d(x_i,x_j),
\end{split} \end{equation}
with the convention $\inf(\emptyset) = \infty, \sup(\emptyset) = -\infty$.
Then the probability under the above distribution  $p_U$ is given by 
\begin{equation} \begin{split} 
p_U(S) &= \left(\phi\left(d_m(S)\right) - \phi\left(d_M(S)\right)\right) \delta(\left[d_m(S) > d_M(S)\right])\\
&= \left(\phi\left(d_m(S)\right) - \phi\left(d_M(S)\right)\right) \delta([\exists r: S  = VR(U,r)]).
\end{split} \end{equation}
\end{proposition}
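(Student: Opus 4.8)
The plan is to evaluate the integral defining $p_U(S)$ directly, by identifying, for the given $S\in\mathcal{S}^n(U)$, the set of scales at which the Vietoris--Rips complex equals $S$. Since $p_U(S\mid r)=\delta(S=VR(U,r))$, we have
\[
p_U(S)=\int_0^\infty \delta\big(S=VR(U,r)\big)\,p(r)\,dr=\int_{A_S}p(r)\,dr,\qquad A_S:=\{\,r\ge 0:\ VR(U,r)=S\,\},
\]
so $p_U(S)$ is just the $p$-mass of the scale-set $A_S$, and everything reduces to (i) describing $A_S$ explicitly and (ii) integrating $p$ over it.

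For (i), recall that $VR(U,r)(\sigma)=1$ iff $\mathrm{diam}(\sigma):=\max_{x_i,x_j\in\sigma}d(x_i,x_j)\le r$. Hence $VR(U,r)=S$ holds exactly when $\mathrm{diam}(\sigma)\le r$ for every $\sigma$ with $S(\sigma)=1$ and $\mathrm{diam}(\sigma)>r$ for every $\sigma$ with $S(\sigma)=0$. Because $U$ is finite and we truncate at level $n$, only finitely many simplices are involved, so the supremum defining $d_M(S)$ and the infimum defining $d_m(S)$ are attained whenever taken over a nonempty set. The first batch of inequalities is therefore equivalent to $r\ge d_M(S)$, and the second to $r<d_m(S)$ — the latter inequality is \emph{strict} because, when there is any $0$-weight simplex, $d_m(S)$ is the diameter of an actual $0$-weight simplex, which would be wrongly included at $r=d_m(S)$. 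Thus $A_S=[\,d_M(S),\,d_m(S)\,)$ (intersected with $[0,\infty)$, which matters only for the degenerate $S$ below), a set that is nonempty precisely when $d_M(S)<d_m(S)$. I would also record here the equivalence $d_M(S)<d_m(S)\iff \exists\,r\ge 0:\ S=VR(U,r)$: ``$\Rightarrow$'' is witnessed by any $r$ in the nonempty interval $[\,d_M(S),d_m(S)\,)$, which satisfies exactly the two diameter conditions above, and ``$\Leftarrow$'' follows from those same conditions together with the conventions $\inf\emptyset=\infty$, $\sup\emptyset=-\infty$. The single all-zero simplicial set is never a VR complex, since every $VR(U,r)$ contains all vertices; for it both sides of the asserted identity are $0$, so it is dispatched in one line.

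For (ii), when $d_M(S)<d_m(S)$ we get $\int_{A_S}p(r)\,dr=\phi(d_m(S))-\phi(d_M(S))$, reading $\phi(\infty)=1$ in case $d_m(S)=\infty$; note $d_M(S)\ge 0$ whenever $S$ has a weight-$1$ simplex, so $\phi(d_M(S))$ is an honest value of the c.d.f., and since the scales are drawn from a density $p(r)$ the half-open-versus-closed right endpoint is immaterial to the integral. When $d_M(S)\ge d_m(S)$ the set $A_S$ is empty and the integral is $0$. Multiplying by the indicator $\delta([\,d_m(S)>d_M(S)\,])$ covers both cases at once, giving the first displayed formula; the second then follows from the equivalence recorded in (i) (with the all-zero set checked directly).

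The computation is mostly bookkeeping, and I expect the only genuinely delicate points to be: getting the boundary of $A_S$ right — in particular that the right endpoint $d_m(S)$ is \emph{excluded}, which is exactly where finiteness of the simplex set (hence attainment of the infimum) is used — and stating cleanly the equivalence between $d_M(S)<d_m(S)$ and $S$ being a Vietoris--Rips complex, including a careful pass through the conventions $\inf\emptyset=\infty$, $\sup\emptyset=-\infty$ and the degenerate all-zero simplicial set.
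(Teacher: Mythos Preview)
Your argument is correct and follows essentially the same route as the paper: both identify the set of scales $r$ with $VR(U,r)=S$ via the diameter conditions, reduce to an interval with endpoints $d_M(S)$ and $d_m(S)$, and integrate $p(r)$ over it to obtain $\phi(d_m(S))-\phi(d_M(S))$ times the indicator. If anything you are more careful than the paper about the half-open right endpoint and the degenerate all-zero simplicial set, but these refinements do not change the substance of the proof.
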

\begin{proof}
This follows since we can write 
\begin{equation} \begin{split}
\delta(VR(U,r) = S) &= \prod_{\sigma:  S(\sigma) = 0} \delta([ \max_{x_i,x_j \in \sigma }  d(x_i,x_j) > r])  \prod_{\sigma : S(\sigma) = 1} \delta([\max_{x_i,x_j \in \sigma }d(x_i,x_j) \leq r]) \\&= \delta(  d_m(S)  >r) \delta( d_M(S) \leq r )
\end{split} \end{equation}
and hence 
\begin{equation} \begin{split}
\int_0^\infty p(r) \delta(VR(U,r) = S)  dr &= \delta([d_m(S) > d_M(S)])\int_{d_M(S) }^{d_m(S)} p(r) dr \\
&= \left(\phi \left(d_m(S)\right) - \phi \left(d_M(S)\right)\right) \delta(\left[d_m(S) > d_M(S)\right])
\end{split} \end{equation}
as claimed. 
\end{proof}
Now, $d_m(S) > d_M(S)$ is a necessary and sufficient condition for $S$ to be a VR complex, namely, it then is the $VR$ complex at scale $d_M(S)$. Thus, the probability is determined by the value of $\phi$ at just two particular distances - this is intuitively clear: the order of distances has to be respected, since an edge corresponding to a smaller distance will always appear in a VR complex before one of a larger distance.  This restricts the possible value of scale $r$ for a given $S$ in between the maximum distance that still is in the complex and the minimum which isn't, which is exactly the formula one gets.
\begin{figure}
\centering
\includegraphics[width=\textwidth]{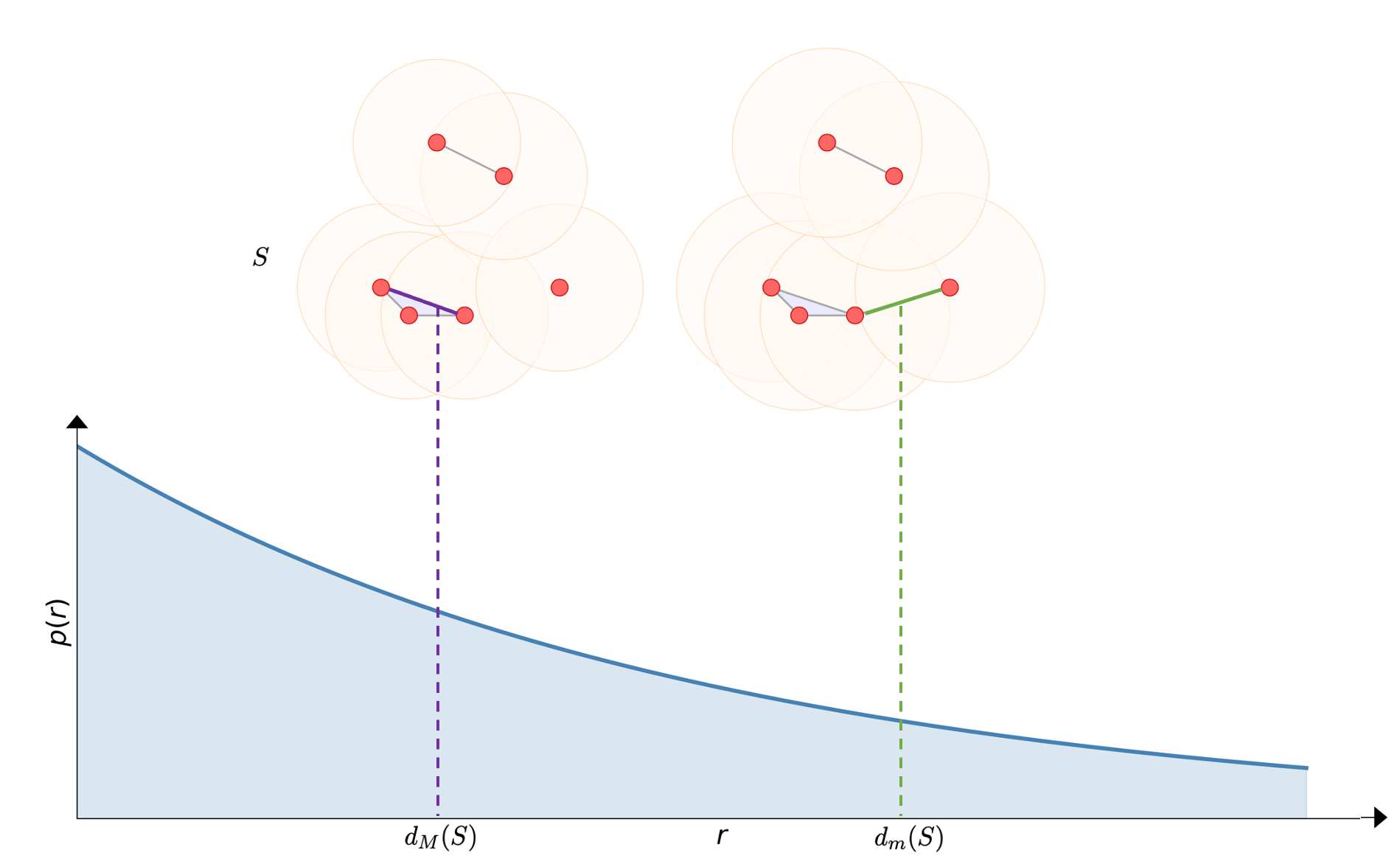}
\caption{Illustration of the probability distribution defined above. We sample radii according to a probability distribution $p(r)$. To determine the probability of a certain simplicial set $S$ like the one on the left, we first have to check whether it is a valid element of the VR filtration, else its probability is zero. Then, the probability is determined by integrating $p(r)$ from $d_M(S)$ to $d_m(S)$. $d_M(S)$ is the radius where the last simplex was added to $S$ (purple edge, alternatively the filled triangle which appears at the same time), and $d_m(S)$ is the lowest radius where a new simplex would be added (green edge).}
  \label{Fig:5}
\end{figure}

We will provide another perspective on this below, but let us first study the marginal distributions:
\begin{corollary}
The 'marginal distributions' under $p_U(S\vert r)$ are given by 
\begin{equation}
p(S([x_{i_0},...,x_{i_k}]) = 1) =p( \max_{j,l} d(x_{i_j},x_{i_l}) \leq r) = 1-\phi( \max_{j,l} d(x_{i_j},x_{i_l})).
\end{equation}
\end{corollary}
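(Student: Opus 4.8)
The plan is to use that the marginal map $m$ is affine in its argument, so it commutes with the mixture over scales from \autoref{ex:VR}. Fix a simplex $\sigma = [x_{i_0},\dots,x_{i_k}]$ and write $p_U = \int_0^\infty p_U(\cdot\mid r)\, p(r)\, dr$ as in that example. Since $p \mapsto p(S(\sigma)=1)$ is linear in $p$, and all integrands are nonnegative and bounded by $1$ (so Tonelli applies), we get
\begin{equation}
m(p_U)(\sigma) = \int_0^\infty m\big(p_U(\cdot\mid r)\big)(\sigma)\, p(r)\, dr .
\end{equation}

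Next I would evaluate the inner marginal. By the tautological example preceding \autoref{ex:VR}, for each fixed $r$ the measure $p_U(\cdot\mid r)$ is the Dirac mass at $VR(U,r)$, hence $m\big(p_U(\cdot\mid r)\big)(\sigma) = \delta(\mu^r_V(\sigma)=1)$; and by \autoref{def:cechFiltration}'s companion, the definition of the Vietoris–Rips filtration, this indicator is exactly $\delta\big(\max_{j,l} d(x_{i_j},x_{i_l}) \le r\big)$. Substituting back,
\begin{equation}
m(p_U)(\sigma) = \int_0^\infty \delta\big(\textstyle\max_{j,l} d(x_{i_j},x_{i_l}) \le r\big)\, p(r)\, dr = \int_{\max_{j,l} d(x_{i_j},x_{i_l})}^{\infty} p(r)\, dr ,
\end{equation}
which is precisely the probability, over a random scale $r\sim p$, that $r$ dominates the diameter of $\sigma$. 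Since $\phi$ is the cumulative distribution function of $p$, this integral equals $1 - \phi\big(\max_{j,l} d(x_{i_j},x_{i_l})\big)$, which is the claimed identity.

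I would close with the one point needing a word of care: the equality $\int_t^\infty p(r)\,dr = 1 - \phi(t)$ holds as written when $p$ has no atom at $t$ (equivalently, $\phi$ continuous there), the generic case for the continuous scale distributions we have in mind; in general the middle term $p(\max_{j,l} d \le r)$ should be read as the probability under the random scale and the right-hand side with the appropriate one-sided limit of $\phi$, consistent with the right-continuity convention used earlier when identifying filtrations of crisp and fuzzy simplicial sets. An alternative, more pedestrian route is to sum the explicit formula for $p_U(S)$ from the preceding proposition over all $S \in \mathcal{S}^n(U)$ with $S(\sigma)=1$; that sum telescopes in the thresholds $\phi(d_M(S)), \phi(d_m(S))$ and collapses to the same answer, but the affineness argument is cleaner and I would present that one. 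There is no genuine obstacle here — the only things to watch are the interchange of integral and marginal and the boundary convention just noted.
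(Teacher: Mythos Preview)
Your argument is correct and in fact cleaner than the paper's. The paper proves the corollary by the ``pedestrian route'' you mention at the end: it sums the explicit formula $p_U(S) = \phi(d_m(S)) - \phi(d_M(S))$ from the preceding proposition over all $S$ containing $\sigma$, observes that this sum runs over the VR complexes at successive threshold scales and therefore telescopes, and reads off the boundary terms $\phi(\infty)-\phi(\max_{j,l} d(x_{i_j},x_{i_l}))$. Your primary approach instead exploits that $p\mapsto p(S(\sigma)=1)$ is linear and pushes it through the mixture over $r$, reducing the computation to integrating the VR indicator against $p(r)$. This bypasses the preceding proposition entirely and makes the probabilistic meaning of the formula (survival function of the random scale evaluated at the diameter) transparent; the paper's telescoping argument, by contrast, shows explicitly how the corollary is a consequence of the closed-form expression for $p_U(S)$, which is the logical dependency the paper wants to emphasize. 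Your remark on atoms of $p$ at the boundary is a refinement the paper does not discuss.
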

\begin{proof}
Note that we have
\begin{equation} \begin{split}
p(S([x_{i_0},...,x_{i_k}]) = 1) &= \sum_{S : S([x_{i_0},...,x_{i_k}]) = 1} p(S) \\ &= \sum_{S : S([x_{i_0},...,x_{i_k}]) = 1,\exists r: S  = VR(U,r)} \left(\phi \left(d_m(S)\right) - \phi \left(d_M(S)\right)\right) 
\end{split} \end{equation}
This sum now runs over all possible $VR$ complexes that have a value of $1$ at the simplex in question. 
 This is a telescope sum, thus what remains is the value at the largest complex minus the value at the smallest complex for which this is still true. The largest complex is the one where all simplices are present, hence $d_m(S) = \infty$. The smallest one has to be at scale $\max_{j,l} d(x_{i_j},x_{i_l})$.
This then gives the formula
\begin{equation}
p(S([x_{i_0},...,x_{i_k}]) = 1) = \phi(\infty) - \phi(\max_{j,l} d(x_{i_j},x_{i_l})) = 1 -  \phi(\max_{j,l} d(x_{i_j},x_{i_l}))
\end{equation}
as claimed.
\end{proof}
As we have stated before, these marginal distributions induce a fuzzy simplicial set. We note that the resulting fuzzy weights in this case are directly obtained by applying a function simplex-wise to the diameters of the simplices. 
This is a rather simple construction, however, it conveys the fuzzy weights with probabilistic meaning. 
\begin{example}\label{ex:exponential}
For $p(r) = \frac{1}{\nu} \exp(- \frac{r}{\nu})$ an exponential distribution with parameter $\frac{1}{\nu}$, one has that $\phi(t) = 1- \exp\left(-\frac{t}{\nu}\right)$,which results in marginals of the form
\begin{equation} \begin{split}
\mu([x_i,x_j]) &= p(S([x_i,x_j]) = 1\vert X)  = \exp\left(-\frac{d([x_i,x_j])}{\nu}\right) \\
\mu([x_{i_0},...,x_{i_j}]) & = \exp \left(-  \max_{k,l} \frac{d([x_k,x_l])}{\nu}\right)
\end{split} \end{equation}
\end{example}

\begin{remark}
The VR complex has the property of being completely determined by its 1-simplices or edges. In terms of weight functions, this means that the inequality for the face-inclusion becomes an equality:
\begin{equation}
\mu(\sigma) = \min_{\sigma' \leq \sigma} \mu(\sigma').
\end{equation}
This property carries directly over to the fuzzy weights in the above procedure, that is, the fuzzy simplicial set is completely determined by the weight of its 1-simplices.
\end{remark}

\begin{example}\label{ex:cech}
Above we have used VR complexes, which are often used in practice for their simplicity. From a theoretical standpoint, due to the nerve theorem, the \v{C}ech filtration is actually more relevant. 
We get a very similar distribution when we start from this filtration:
Integrating  $\delta$-distributions over $C(U,r)$ with $p(r)$ result in a distribution 
\begin{equation} \begin{split}
p(S \vert X) &= \left(\phi\left(d_m(S)\right) - \phi\left(d_M(S)\right)\right) \delta\left(d_m(S) > d_M(S)\right) \\
&\left(\phi\left(d_m(S)\right) - \phi\left(d_M(S)\right)\right) \delta(\exists r: S  = C(U,r))
\end{split} \end{equation}
with now 
\begin{equation} \begin{split}
d_m(S) &=  \inf_{\sigma: S(\sigma) = 0}\inf_{y} \max_{k: x_{i_k} \in \sigma} d(y,x_{i_k})\\
d_M(S)&= \sup_{\sigma: S(\sigma) = 1} \inf_{y} \max_{k: x_{i_k} \in \sigma} d(y,x_{i_k}).
\end{split} \end{equation}
The proof is of course the same as for the VR-filtration.
\end{example}

\subsection{Comparison and merging operations}
We now want to study some operations we can perform on fuzzy simplicial sets. 
First, a standard way to compare fuzzy sets is the following. 
\begin{definition}
Given two fuzzy sets $(A, \mu_1), (A,\mu_2)$ with the same underlying set $A$, we define the fuzzy cross entropy between them as
\begin{equation}
\operatorname{CE}((A, \mu_1) \vert \vert (A,\mu_2)) = \sum_{a} \mu_1(a) \ln \frac{\mu_1(a)}{\mu_2(a)} + (1-\mu_1(a))  \ln \frac{1-\mu_1(a)}{1-\mu_2(a)}
\end{equation}
Given two fuzzy simplicial sets $(S,\mu_1), (S,\mu_2)$ one may define a fuzzy cross entropy as 
\begin{equation}
\operatorname{CE}((S, \mu_1) \vert \vert (S,\mu_2)) = \sum_{n} w(n) CE((S_n,\mu_1)\vert \vert(S_n,\mu_2))
\end{equation}
with an additional weighting factor $w(n)$.
\end{definition}
Now for two probability distributions over a simplicial set, we have a standard tool from probability theory to compare their distributions through the Kullback-Leibler divergence.
\begin{definition}
For two probability distributions $p,q$ over $\mathcal{S}^n(X)$, the Kullback-Leibler divergence is 
\begin{equation}
\operatorname{D_{KL}}(p \vert \vert q) = \sum_{S \in \mathcal{S}^n(X)} p(S) \ln \frac{p(S)}{q(S)}
\end{equation}
\end{definition}
We now want to investigate how this divergence compares to the fuzzy cross entropy. To do so, we will assume our distributions have more structure. 
For this, we recall some standard definitions (consult e.g. \cite{koller2009probabilistic} for reference).
\begin{definition}
Let $G = (V,E)$ be a directed, acyclic graph (DAG). A collection of random variables $Z = (Z_{v})_{v\in V}$, indexed by vertices of the graph, is a Bayesian network with respect to $G$ if 
\begin{equation}
p(Z) = \prod_{v} p(Z_v \vert Z_{\pi(v)}) 
\end{equation}
where $\pi(v)$ are the parents of $v$ in $G$ - those are simply the nodes having a directed edge to $v$.
\end{definition}
In our situation, we have a specific DAG structure that comes from the partial order:
\begin{definition} 
Let $p$ be a probability distribution over simplicial sets in $\mathcal{S}^n(U)$. We may treat the indicator values $S(\sigma)$ as binary random variables indexed by simplices $\sigma$, $S(\sigma)$ signaling that the simplex $\sigma$ is present in the random simplicial set $S$.
We will then call $p$ ``locally Markov'' if the collection of random variables $S(\sigma)$ is a Bayesian network with respect to the graph induced by the partial order of simplices which we have defined for \autoref{prop:injection}.
\end{definition}
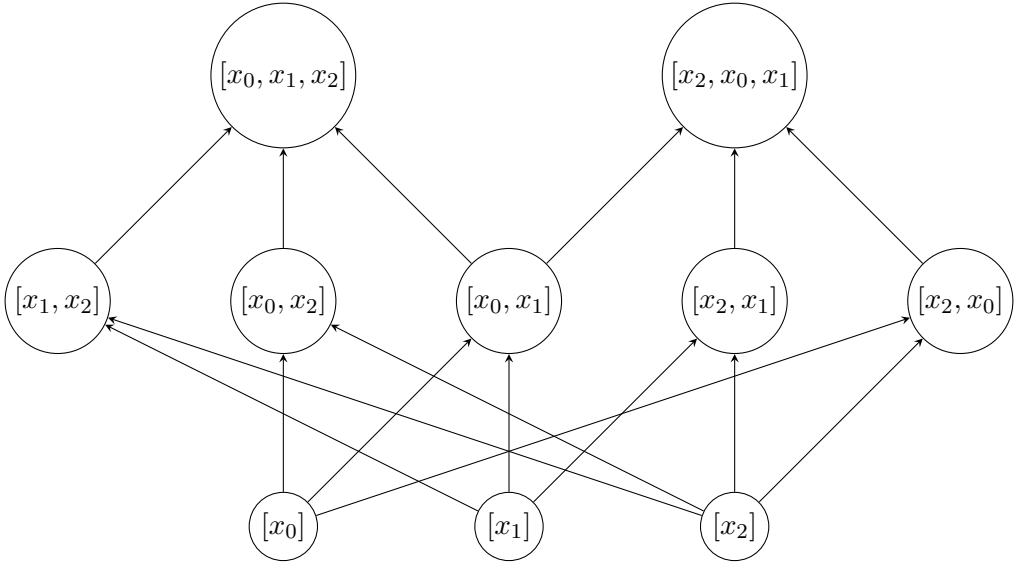
\begin{figure}
\centering
\begin{tikzpicture}[
    node distance=1.5cm,
    every node/.style={fill=white, draw, circle, inner sep=2pt},
    level 1/.style={sibling distance=3.5cm},
    level 2/.style={sibling distance=2cm},
    ->,
    >=stealth
]

\node (x0x1x2) at (-3,3) {$[x_0, x_1, x_2]$};

\node (x1x2) at (-6,0) {$[x_1, x_2]$};
\node (x0x2) at (-3,0) {$[x_0, x_2]$};
\node (x0x1) at (0,0) {$[x_0, x_1]$};

\node (x0) at (-3,-3) {$[x_0]$};
\node (x1) at (0,-3) {$[x_1]$};
\node (x2) at (3,-3) {$[x_2]$};

\draw (x1x2)--(x0x1x2);
\draw (x0x2)--(x0x1x2);
\draw (x0x1)--(x0x1x2);

\draw (x1)--(x1x2);
\draw (x2)--(x1x2);

\draw (x0)--(x0x2);
\draw (x2)--(x0x2);

\draw (x0)--(x0x1);
\draw (x1)--(x0x1);

\node (x2x0x1) at (3,3) {$[x_2, x_0, x_1]$};

\node (x2x1) at (3,0) {$[x_2, x_1]$};
\node (x2x0) at (6,0) {$[x_2, x_0]$};


\draw (x0x1)--(x2x0x1);
\draw (x2x1)--(x2x0x1);
\draw (x2x0)--(x2x0x1);

\draw (x2)--(x2x1);
\draw (x1)--(x2x1);

\draw (x2)--(x2x0);
\draw (x0)--(x2x0);

\end{tikzpicture}
\caption{The DAG structure we are assuming, here for two triangles, simply is the poset-structure of face maps. Degenerate simplices are omitted as usual.}
\end{figure}
The structure of a DAG imposes a special form on the KL-divergence as we recall in the next lemma. 
\begin{lemma}\label{lem:factorizeKL}
Let $p,q$ be two probability distributions over a collection $Z_v, v \in V$ for some DAG $G$, with $V$ and the range of the $Z_v$ assumed finite for simplicity.
Assume $q$ constitutes a Bayesian network with respect to the graph $G$. 
Then, the Kullback-Leibler divergence factorizes as 
\begin{equation}
\operatorname{D_{KL}}(p \vert \vert q) = -H(p)  - \sum_{v \in V} \sum_{z_v,z_{\pi(v)}}  p( Z_v = z_v, Z_{\pi(v)}=z_{\pi(v)}) \ln q(Z_v = z_v \vert Z_{\pi(v)} =z_{\pi(v)}),
\end{equation}
where $H(p)$ is the entropy of $p$ that does not depend on $q$.
\end{lemma}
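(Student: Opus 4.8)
The plan is to expand the Kullback--Leibler divergence by its definition and then substitute the Bayesian-network factorization of $q$; this is a standard manipulation and there is no substantial obstacle, so the write-up is mostly careful bookkeeping. First I would split
\[
\operatorname{D_{KL}}(p \vert \vert q) = \sum_{z} p(z)\ln p(z) \;-\; \sum_{z} p(z)\ln q(z) = -H(p) \;-\; \sum_{z} p(z)\ln q(z),
\]
where $z = (z_v)_{v \in V}$ ranges over all joint configurations of the $Z_v$ and we use the convention $0\ln 0 = 0$. This isolates $-H(p)$, which manifestly does not depend on $q$.

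Next, using that $q$ is a Bayesian network with respect to $G$, write $q(z) = \prod_{v \in V} q(z_v \vert z_{\pi(v)})$ and hence $\ln q(z) = \sum_{v \in V} \ln q(z_v \vert z_{\pi(v)})$. Substituting this into the second term and exchanging the two finite sums gives
\[
\sum_{z} p(z)\ln q(z) = \sum_{v \in V} \sum_{z} p(z)\,\ln q(z_v \vert z_{\pi(v)}).
\]
The crucial observation is that, for fixed $v$, the factor $\ln q(z_v \vert z_{\pi(v)})$ depends on $z$ only through the coordinates $z_v$ and $z_{\pi(v)}$. Summing $p(z)$ over all configurations that agree on those coordinates therefore collapses to the marginal $p(Z_v = z_v, Z_{\pi(v)} = z_{\pi(v)})$, and re-indexing the inner sum accordingly yields precisely the claimed expression.

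Finally I would dispose of the degenerate cases so that the identity is valid in $[0,+\infty]$ without extra hypotheses. If $p$ is not absolutely continuous with respect to $q$, then $\operatorname{D_{KL}}(p \vert \vert q) = +\infty$ by convention; in that case there is a configuration $z$ with $p(z) > 0$ and $q(z) = 0$, so by the factorization some conditional factor $q(z_v \vert z_{\pi(v)})$ vanishes while the associated marginal $p(Z_v = z_v, Z_{\pi(v)} = z_{\pi(v)})$ is strictly positive, which makes the right-hand side $+\infty$ as well; otherwise every summand is finite and the computation above is literal. The only point needing mild care is grouping the conditional-log terms by the correct parent sets $\pi(v)$ and handling the $0\ln 0$ conventions uniformly; beyond that the argument is elementary.
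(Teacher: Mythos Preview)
Your proof is correct and follows essentially the same route as the paper: split the KL divergence into $-H(p)$ and the cross-entropy term, use the Bayesian-network factorization $\ln q(z)=\sum_v \ln q(z_v\mid z_{\pi(v)})$, swap the finite sums, and marginalize $p$ over the coordinates not in $\{v\}\cup\pi(v)$. Your additional handling of the $0\ln 0$ convention and the non-absolutely-continuous case is a nice bit of extra care that the paper omits.
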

\begin{proof}
First, write 
\begin{equation}
\operatorname{D_{KL}}(p \vert \vert q) = \sum_{z} p(z) \ln p(z) - \sum_{z} p(z) \ln q(z) = -H(p) - \sum_{z} p(z) \ln q(z)
\end{equation}
where $z$ in the sum ranges over all possible assignments $(z_{v_1},...,z_{v_n})$ of values to the respective random variables, 
that is e.g. $p(z) = p(Z_{v_1} = z_{v_1},..., Z_{v_n} = z_{v_n})$.
Then by $q$ being locally Markov,
 $\ln q(z) = \ln \prod_{v} q(Z_v = z_v, \vert Z_{\pi(v)}=z_{\pi(v)})  = \sum_v \ln q(Z_v = z_v, \vert Z_{\pi(v)}=z_{\pi(v)})$. Then in each summand, all other variables are marginalized out of $p(z)$. \end{proof}
Note that similarly, if both $p,q$ are locally Markov, also the entropy term in the KL-divergence decomposes in the same fashion.
Before we study the KL-divergence, we need one more simple lemma,
\begin{lemma} \label{lem:marginalprob}
Let $p$ be a probability distribution over simplicial sets. Then necessarily
\begin{equation}
p\left(S(\sigma) = 1, S(d_j(\sigma)) = 1,  \forall j \right)   =  p(S(\sigma) = 1).
\end{equation}
\end{lemma}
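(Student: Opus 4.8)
The plan is to recognize that the claimed equality of probabilities is in fact an equality of \emph{events}. Recall that $p$ is a probability measure on $\mathcal{S}^n(X)$, i.e.\ on \emph{classical} simplicial sets, each of which we identify with its $\{0,1\}$-valued weight function $\mu=S$. By the monotonicity that is part of the definition of a (classical) simplicial set — the weight function is non-decreasing with respect to $\leq_{\text{face}}$, equivalently $\mu(d_j\sigma)\geq\mu(\sigma)$ for every face map $d_j$ — any realization $S$ with $S(\sigma)=1$ automatically satisfies $S(d_j(\sigma))=1$ for all $j$.

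Consequently the two subsets of $\mathcal{S}^n(X)$
\[
A=\{\,S: S(\sigma)=1\,\},\qquad B=\{\,S: S(\sigma)=1\text{ and }S(d_j(\sigma))=1\ \forall j\,\}
\]
coincide: the inclusion $B\subseteq A$ is immediate, and $A\subseteq B$ is precisely the face-monotonicity observation above. Since $A=B$ as events, $p(A)=p(B)$, which is exactly the asserted identity.

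There is no genuine obstacle here; the only point requiring a moment's care is that the support of $p$ consists of bona fide (classical) simplicial sets, so that face-monotonicity of the weight may be invoked pointwise on each sampled $S$. Equivalently, one may phrase it through the minimal simplicial sets: $S(\sigma)=1 \iff S\geq\mathbf{S}(\sigma)$, and $\mathbf{S}(\sigma)\geq\mathbf{S}(d_j(\sigma))$ since the minimal simplicial set containing a face is contained in that of the simplex, so $S(\sigma)=1$ already forces $S(d_j(\sigma))=1$, yielding the same conclusion.
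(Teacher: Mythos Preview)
Your argument is correct and is essentially the same as the paper's: both rest on the observation that, because the support of $p$ consists of classical simplicial sets, $S(\sigma)=1$ forces $S(d_j(\sigma))=1$ for every $j$, so the two events coincide. The paper phrases this as the vanishing of all terms in the sum over face-configurations except the all-ones term, while you phrase it directly as an equality of events; the underlying idea is identical.
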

\begin{proof}
We note that whenever any of its faces is not present, $\sigma$ may also not be present, thus has probability zero.
Hence,
\begin{equation}
p(S(\sigma) = 1) = \sum_{s^j}  p\left(S(\sigma) = 1, S(d_j(\sigma)) = s^j\right) = p\left(S(\Sigma) = 1, S(d_j(\sigma)) = 1 \forall j \right)
\end{equation}
\end{proof}
Now we may give a more explicit form of the KL-divergences.
\begin{proposition}
Let $p,q$ be probability distributions over simplicial sets which are locally Markov. Then, their Kullback-Leibler divergence is given by 
\begin{equation}
  \begin{split}
\operatorname{D_{KL}}(p \vert \vert q) =  \sum_{\sigma}  &p(S(\sigma) = 1)  \big\{p\left(S(\sigma) = 1\vert  S(\pi(\sigma)) = 1\right)\ln \frac{p( S(\sigma) = 1 \vert S(\pi(\sigma)) = 1)}{q( S(\sigma) = 1 \vert S(\pi(\sigma)) = 1)} \\+ &\left[1-p\left(S(\sigma) = 1\vert  S(\pi(\sigma)) = 1\right) \right] \ln \frac{(1-p(S(\sigma) = 1 \vert S(\pi(\sigma)) = 1))}{(1-q(S(\sigma) = 1 \vert S(\pi(\sigma)) = 1))}\big\}
  \end{split}
\end{equation} \end{proposition}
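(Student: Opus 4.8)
The plan is to turn the divergence into a sum of per‑simplex contributions and then drop the parent configurations that are forbidden by the simplicial structure. Since $q$ is locally Markov, \autoref{lem:factorizeKL}, applied to the face‑poset DAG in which the parents $\pi(\sigma)$ of a simplex $\sigma$ are exactly its codimension‑one faces, gives
\begin{equation}
\operatorname{D_{KL}}(p\vert\vert q) = -H(p) - \sum_{\sigma}\ \sum_{z_\sigma,\,z_{\pi(\sigma)}} p\big(S(\sigma)=z_\sigma,\,S(\pi(\sigma))=z_{\pi(\sigma)}\big)\,\ln q\big(S(\sigma)=z_\sigma \mid S(\pi(\sigma))=z_{\pi(\sigma)}\big).
\end{equation}
Because $p$ is locally Markov too, the observation recorded right after \autoref{lem:factorizeKL} expands $-H(p)$ in precisely the same shape with $q$ replaced by $p$; substituting this and combining the two sums yields
\begin{equation}
\operatorname{D_{KL}}(p\vert\vert q) = \sum_{\sigma}\ \sum_{z_\sigma,\,z_{\pi(\sigma)}} p\big(S(\sigma)=z_\sigma,\,S(\pi(\sigma))=z_{\pi(\sigma)}\big)\,\ln\frac{p\big(S(\sigma)=z_\sigma \mid S(\pi(\sigma))=z_{\pi(\sigma)}\big)}{q\big(S(\sigma)=z_\sigma \mid S(\pi(\sigma))=z_{\pi(\sigma)}\big)}.
\end{equation}

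Next I would argue that, for each fixed $\sigma$, only the parent configuration $z_{\pi(\sigma)}=\mathbf 1$ (all codimension‑one faces present) survives. Indeed, if any face $d_j(\sigma)$ is absent then no simplicial set can contain $\sigma$, so both $p$ and $q$ put all of their conditional mass on $S(\sigma)=0$ given such a configuration; hence the $z_\sigma=1$ summand has a vanishing prefactor and the $z_\sigma=0$ summand has log‑ratio $\ln(1/1)=0$. Abbreviating $a_\sigma:=p(S(\sigma)=1\mid S(\pi(\sigma))=\mathbf 1)$ and $b_\sigma:=q(S(\sigma)=1\mid S(\pi(\sigma))=\mathbf 1)$, what remains is
\begin{equation}
\operatorname{D_{KL}}(p\vert\vert q) = \sum_{\sigma}\Big[\,p\big(S(\sigma)=1,\,S(\pi(\sigma))=\mathbf 1\big)\ln\frac{a_\sigma}{b_\sigma} + p\big(S(\sigma)=0,\,S(\pi(\sigma))=\mathbf 1\big)\ln\frac{1-a_\sigma}{1-b_\sigma}\,\Big].
\end{equation}

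Finally, since $S(\sigma)=1$ already forces $S(d_j(\sigma))=1$ for all $j$, \autoref{lem:marginalprob} gives $p(S(\sigma)=1,S(\pi(\sigma))=\mathbf 1)=p(S(\sigma)=1)$, and conditioning on $S(\pi(\sigma))=\mathbf 1$ expresses $p(S(\sigma)=0,S(\pi(\sigma))=\mathbf 1)$ through $p(S(\sigma)=1)$ and $a_\sigma$; a short rearrangement then brings the sum into the asserted binary‑cross‑entropy form with $a_\sigma$ factored out of the braces. I expect the main obstacle to be the rigour of the collapse step together with this last rearrangement: one must verify that every conditional $p(S(\sigma)=\cdot\mid S(\pi(\sigma))=z)$ with $z\neq\mathbf 1$ is a point mass matching the corresponding $q$‑conditional, so that no indeterminate $\ln(0/0)$ appears (and handle the boundary values of $a_\sigma,b_\sigma$ with the usual convention $0\ln 0=0$), and track exactly which marginal probability multiplies the bracket — this is where the structural facts ``a simplex has weight no larger than its faces'' and the acyclicity of the face‑poset DAG do the real work.
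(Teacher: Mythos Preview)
Your proposal is correct and follows essentially the same route as the paper: factorize via \autoref{lem:factorizeKL} (using that both $p$ and $q$ are locally Markov so the entropy term decomposes too), collapse all parent configurations with any absent face using the simplicial constraint and \autoref{lem:marginalprob}, and then rewrite the two surviving summands in the stated form. Your closing remark about tracking which marginal multiplies the bracket is well placed --- the paper dispatches this as ``simple manipulation,'' and your identification of $p(S(\sigma)=1,S(\pi(\sigma))=\mathbf 1)=p(S(\sigma)=1)$ together with the relation $p(S(\pi(\sigma))=\mathbf 1)\,a_\sigma=p(S(\sigma)=1)$ is exactly what is needed there.
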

\begin{proof}
By $p,q$ being locally Markov, as in \autoref{lem:factorizeKL} (and where now the entropy decomposes equivalently as $p$ is also locally Markov), the KL divergence may be written as \begin{equation} \begin{split}
\operatorname{D_{KL}}(p \vert \vert q) &= \sum_S p(S) \ln\frac{p(S)}{q(S)}  \\ &
=\sum_{\sigma} \sum_{s_\sigma, s_{\pi(\sigma)}} p(S(\sigma) = s_\sigma, S(\pi(\sigma)) = s_{\pi(\sigma)}) \ln \frac{p( S(\sigma) = s_\sigma \vert S(\pi(\sigma)) = s_{\pi(\sigma)})}{q( S(\sigma) = s_\sigma \vert S(\pi(\sigma)) = s_{\pi(\sigma)})}.
\end{split} \end{equation}
Now, the only terms that remain in this sum are those where all $s_{\pi(\sigma)} =1$. This is because: From \autoref{lem:marginalprob}, if $s_\sigma =1$ but any of the $s_{\pi(\sigma)} = 0$, then the joint probability in $p$ becomes zero. On the other hand, if $s_{\sigma} = 0$ and any of the $s_{\pi(\sigma)} = 0$, the conditional probability becomes $1$ and hence the logarithm terms (in both $p$ and $q$) evaluate to zero. 
Thus what remains is 
\begin{equation} \begin{split}
\sum_{\sigma}  &p(S(\sigma) = 1, S(\pi(\sigma)) = 1) \ln \frac{ p( S(\sigma) = 1 \vert S(\pi(\sigma)) = 1)}{q( S(\sigma) = 1 \vert S(\pi(\sigma)) = 1)} \\+ &p(S(\sigma) = 0, S(\pi(\sigma)) = 1) \ln  \frac{ p( S(\sigma) = 0 \vert S(\pi(\sigma)) = 1)
}{q( S(\sigma) = 0 \vert S(\pi(\sigma)) = 1)}
\end{split} \end{equation}  
which by simple manipulation yields the result.
\end{proof}
This shows that in general if we have two probability distributions $p,q$ over simplicial sets, both locally Markovian, then the fuzzy cross entropy of their associated fuzzy simplicial sets is in general not equal to their Kullback-Leibler divergence, that is.
\begin{equation}
\operatorname{CE}(\mu_p \vert \vert \mu_q) \neq \operatorname{D_{KL}}(p \vert \vert q).
\end{equation}
However, in the special case where the distribution is truncated at $1-$simplices, and vertices are there with probability $1$, then we do have equality, as this implies
\begin{equation}
p( S(\sigma) = 1 \vert S(\pi(\sigma)) = 1)  = p(S(\sigma) = 1), q( S(\sigma) = 1 \vert S(\pi(\sigma)) = 1) = q(S(\sigma) = 1).
\end{equation}
\begin{corollary}\label{cor:kl}
If $p,q$ are locally Markov distributions on $\mathcal{S}^1(X)$, where all $0$-simplices have probability $1$, then 
\begin{equation}
\operatorname{CE}(\mu_p \vert \vert \mu_q) = \operatorname{D_{KL}}(p \vert \vert q).
\end{equation}
\end{corollary}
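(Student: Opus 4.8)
The plan is to feed the hypotheses straight into the explicit formula for $\operatorname{D_{KL}}(p\|q)$ proved in the preceding proposition, which applies verbatim since $p$ and $q$ are locally Markov over $\mathcal{S}^1(X)$. That formula writes the divergence as a sum over simplices $\sigma$ of a local Bernoulli-type term built from the conditionals $p(S(\sigma)=1\mid S(\pi(\sigma))=1)$ and $q(S(\sigma)=1\mid S(\pi(\sigma))=1)$, carrying the parent marginal $p(S(\pi(\sigma))=1)$ as an outer factor. First I would split this sum according to the dimension of $\sigma$ in the face-poset DAG used for \autoref{prop:injection}: the vertices $[x_i]$, which are the roots and have empty parent set, and the nondegenerate edges $[x_i,x_j]$, whose parents are exactly $[x_i]$ and $[x_j]$ (degenerate simplices do not appear in the DAG).

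Next I would collapse each block. For a vertex $\sigma$ the assumption gives $p(S(\sigma)=1)=q(S(\sigma)=1)=1$, so every logarithm in its term is $\ln 1=0$ and the complementary summand is an indeterminate $0\cdot\ln(0/0)$ read as $0$; the whole vertex term thus vanishes. For a nondegenerate edge $\sigma=[x_i,x_j]$ the parents are vertices, so $p(S(\pi(\sigma))=1)=1$, and then \autoref{lem:marginalprob} yields
\begin{equation}
p(S(\sigma)=1\mid S(\pi(\sigma))=1)=p(S(\sigma)=1)=\mu_p(\sigma),
\end{equation}
and likewise for $q$ — this is precisely the reduction of conditionals to marginals recorded just above the statement. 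Substituting, the $\sigma$-term becomes $\mu_p(\sigma)\ln\frac{\mu_p(\sigma)}{\mu_q(\sigma)}+(1-\mu_p(\sigma))\ln\frac{1-\mu_p(\sigma)}{1-\mu_q(\sigma)}$, i.e. exactly the summand of the fuzzy cross entropy $\operatorname{CE}((S_1,\mu_p)\|(S_1,\mu_q))$ at $\sigma$.

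Finally I would reassemble: summing over all edges gives $\operatorname{CE}((S_1,\mu_p)\|(S_1,\mu_q))$, and since the vertex contributions vanish on the cross-entropy side as well (there $\mu_p=\mu_q=1$ forces each summand to $0$), taking the weighting convention $w\equiv 1$ (the vertex weight $w(0)$ being immaterial) identifies $\operatorname{D_{KL}}(p\|q)$ with $\operatorname{CE}(\mu_p\|\mu_q)$. The only points needing attention are bookkeeping rather than substance: reading indeterminate $0\cdot\ln(0/0)$ terms as $0$; checking that the degenerate edges $[x,x]$, which carry weight $\mu([x])=1$, contribute nothing to either side (they are forced present in every sampled simplicial set and so also drop out of $\operatorname{D_{KL}}$); and fixing the weighting so that $w(1)=1$. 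I expect no genuine analytic obstacle, since all the heavy lifting is already contained in the preceding proposition and in \autoref{lem:factorizeKL}.
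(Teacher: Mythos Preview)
Your proposal is correct and follows essentially the same route as the paper: the paper's justification, given in the paragraph immediately preceding the corollary, is precisely that the vertex assumption forces the conditionals $p(S(\sigma)=1\mid S(\pi(\sigma))=1)$ and $q(S(\sigma)=1\mid S(\pi(\sigma))=1)$ to collapse to the marginals, whereupon the proposition's formula becomes the fuzzy cross entropy term by term. Your write-up is in fact more careful than the paper's one-line remark---you explicitly handle the vanishing vertex terms, the degenerate edges, and the weighting convention $w\equiv 1$, and you correctly identify the outer prefactor as the parent marginal $p(S(\pi(\sigma))=1)$ (which appears to be mis-stated as $p(S(\sigma)=1)$ in the displayed formula of the proposition).
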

Note that the above special case essentially means we have a distribution where the presence of each edge is independent from all others.
\begin{remark}
This special case is the one that is used in the standard implementation of the UMAP algorithm. Thus, from the probabilistic perspective of this framework, UMAP discards all interdependence of simplices by only comparing marginal distributions.
\end{remark}

\subsubsection{Merging}
As stated in the preliminaries, for fuzzy sets there are natural merging operations, which are the t-(co)-norms. 
It is easy to see that, by the monotonicity criterium, merging two fuzzy simplicial sets via a t-(co)-norm yields again a fuzzy simplicial set.
\begin{lemma}
For $(S,\mu_1)$, $(S,\mu_2)$ two fuzzy simplicial sets on the same base sets, $(S,T(\mu_1,\mu_2))$ is again a fuzzy simplicial set, where $T$ is a t-(co)-norm.
\end{lemma}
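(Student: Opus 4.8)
The plan is to observe that the merged object inherits all of its combinatorial structure directly from $S$, so the only condition that could fail is the monotonicity of the weight, and that this condition is preserved simply because t-norms and t-conorms are coordinatewise non-decreasing maps $[0,1]\times[0,1]\to[0,1]$.

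First I would note that $(S,T(\mu_1,\mu_2))$ is, by construction, equipped with the same underlying sets $S_n$ and the same face maps $d_i^n$ and degeneracy maps $s_i^n$ as $(S,\mu_1)$ and $(S,\mu_2)$; hence the simplicial identities hold automatically, and nothing needs to be checked there. Since $T$ — and likewise its dual t-conorm $\tilde T$, which also maps $[0,1]\times[0,1]$ into $[0,1]$ — takes values in $[0,1]$, the assignment $\sigma\mapsto T(\mu_1(\sigma),\mu_2(\sigma))$ is a legitimate fuzzy weight. So the whole statement reduces to verifying that, with this weight, the face and degeneracy maps are still non-decreasing in the fuzzy weights.

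Next I would record the single analytic input: $\tilde T$ is monotone whenever $T$ is. Indeed, if $a\le c$ and $b\le d$ then $1-a\ge 1-c$ and $1-b\ge 1-d$, so the monotonicity axiom for $T$ gives $T(1-a,1-b)\ge T(1-c,1-d)$, and therefore $\tilde T(a,b)=1-T(1-a,1-b)\le 1-T(1-c,1-d)=\tilde T(c,d)$. Consequently it suffices to argue once for an arbitrary coordinatewise non-decreasing $F\colon[0,1]\times[0,1]\to[0,1]$ in place of $T$ or $\tilde T$.

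Finally, the core step: let $\sigma$ be any simplex and let $d_i^n\sigma$ be one of its faces. Because $(S,\mu_1)$ and $(S,\mu_2)$ are fuzzy simplicial sets, their face maps are non-decreasing, i.e. $\mu_1(\sigma)\le\mu_1(d_i^n\sigma)$ and $\mu_2(\sigma)\le\mu_2(d_i^n\sigma)$; applying monotonicity of $F$ coordinatewise yields $F(\mu_1(\sigma),\mu_2(\sigma))\le F(\mu_1(d_i^n\sigma),\mu_2(d_i^n\sigma))$, which is exactly the statement that the face map does not decrease the merged weight. The identical argument with $s_i^n$ in place of $d_i^n$ handles the degeneracy maps (and in particular it forces a degenerate simplex to keep the same merged weight as the simplex it degenerates from, since this already holds for $\mu_1$ and $\mu_2$ separately). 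I do not expect any real obstacle here: the lemma is a direct consequence of the monotonicity axiom in the definition of a t-norm, with the only point worth spelling out being that the dual t-conorm inherits this monotonicity.
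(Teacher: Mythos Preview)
Your proof is correct and follows exactly the approach the paper indicates: the paper does not actually write out a proof but merely remarks that the lemma is ``easy to see, by the monotonicity criterium,'' and your argument is precisely the spelling-out of that monotonicity check. Your additional observation that the dual t-conorm inherits monotonicity from $T$ is a nice touch the paper leaves implicit.
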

We now want to understand how t-conorm operations on the generated fuzzy simplicial sets could arise from the underlying probability distributions over simplicial sets.
Since simplicial sets take weights in $\{0,1\}$, we can naturally take Boolean operations on them. Given two probability measures $p_1,p_2$ on $\mathcal{S}^n(U)$, there is a natural way to induce a new merged measure from them via a Boolean operation.
\begin{definition}
For an operation $*$ on $\{0,1\}^2$ we define the merged probability measure $p_1*p_2$ as 
\begin{equation}
(p_1 * p_2) (\{ S \})= p[S_1 * S_2 = S], 
\end{equation}
where on the right  $S_1,S_2$ are sampled independently from $p_1,p_2$, that is on the right $p$ is the measure on the product space with $p[(S_1,S_2)] = p_1(S_1) p_2(S_2)$. 
On the right hand side, the operation $*$ is applied elementwise to the weight functions of $S_1,S_2$.

\end{definition}

Now we study what merging such probability measures does to the induced fuzzy objects.
Recall that $\mathbf{S}(\sigma)$ is the minimal simplicial set containing a simplex $\sigma$. 
Also recall that these minimal elements are comparable (with respect to our usual order on simplical sets) with all other simplicial sets.

\begin{proposition}
Taking the maximum/OR operation induces the probabilistic t-conorm on the underlying fuzzy simplicial sets, that is 
\begin{equation}
\mu_{p_1 \lor p_2} = \mu_{p_1} + \mu_{p_2} -\mu_{p_1}\mu_{p_2}.
\end{equation}
\end{proposition}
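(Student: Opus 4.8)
The plan is to reduce the claim to a statement about the marginal presence of a single simplex $\sigma$, and then use independence of the two samples. First I would fix an arbitrary simplex $\sigma \in X^{n+1}$ and unwind the definition of the merged measure applied to the event $S \geq \mathbf{S}(\sigma)$. Since the OR operation is applied elementwise to weight functions, and since comparison against $\mathbf{S}(\sigma)$ is — by the lemma on minimal simplicial sets — equivalent to checking the single condition $S(\sigma) = 1$, I would argue that $(S_1 \lor S_2)(\sigma) = 1$ if and only if $S_1(\sigma) = 1$ or $S_2(\sigma) = 1$. This turns $\mu_{p_1 \lor p_2}(\sigma) = (p_1 \lor p_2)(S(\sigma) = 1)$ into the probability, under the product measure, that at least one of the two independently sampled simplicial sets contains $\sigma$.

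The next step is the elementary inclusion–exclusion / independence computation. Writing $A_i$ for the event $\{S_i(\sigma) = 1\}$ under $p_i$, independence of $S_1$ and $S_2$ gives
\begin{equation}
(p_1 \lor p_2)(S(\sigma)=1) = p_1(A_1) + p_2(A_2) - p_1(A_1)p_2(A_2),
\end{equation}
and by the definition of the marginal map $m$ (equivalently $\mu_{p_i}(\sigma) = p_i(S(\sigma)=1)$) the right-hand side is exactly $\mu_{p_1}(\sigma) + \mu_{p_2}(\sigma) - \mu_{p_1}(\sigma)\mu_{p_2}(\sigma)$. Since $\sigma$ was arbitrary, this establishes the pointwise identity of weight functions, hence the claimed equality of fuzzy simplicial sets. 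I would also remark briefly that the result really does use the earlier observation that $\mathbf{S}(\sigma)$ is comparable to every simplicial set, so that the event in question is genuinely a single-coordinate event and no joint conditions across different simplices enter.

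The only subtlety — and the step I expect to need the most care — is verifying that $S_1 \lor S_2$, the elementwise maximum of two genuine (classical) simplicial-set weight functions, is again a valid classical simplicial set, i.e.\ that it is monotone with respect to both $\leq_{\text{face}}$ and $\leq_{\text{degeneracy}}$; otherwise $m(p_1 \lor p_2)$ would not even be defined. This is immediate because $\max$ is monotone: if $\sigma' \leq \sigma$ then $S_i(\sigma') \geq S_i(\sigma)$ for $i = 1,2$, hence $\max(S_1(\sigma'),S_2(\sigma')) \geq \max(S_1(\sigma),S_2(\sigma))$. This is essentially the content of the preceding lemma on t-conorms of fuzzy simplicial sets specialized to $\{0,1\}$-valued weights, so I would invoke it rather than reprove it. With that in hand the argument is complete; no delicate estimates are required, only bookkeeping of the definitions.
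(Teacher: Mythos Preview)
Your proposal is correct and follows essentially the same route as the paper: reduce to the single-coordinate event $\{S(\sigma)=1\}$ via the lemma on $\mathbf{S}(\sigma)$, then use independence of $S_1,S_2$ under the product measure. The only cosmetic difference is that the paper passes through the complement (computing $1 - p[S_1 < \mathbf{S}(\sigma)]\,p[S_2 < \mathbf{S}(\sigma)]$) whereas you apply inclusion--exclusion directly; your extra care in checking that $S_1 \lor S_2$ is again a valid simplicial-set weight is a point the paper leaves implicit.
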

\begin{proof}
Note that elementwise
\begin{equation}
S_1 \lor S_2  = \max(S_1,S_2)
\end{equation}
Hence, we calculate
\begin{equation}
  \begin{split}
\mu_{p_1 \lor p_2}(\sigma) &= p[\max (S_1,S_2) \geq \mathbf{S}(\sigma)] \\
&= 1- p[\max(S_1,S_2)< \mathbf{S}(\sigma)] \\
&= 1-p[S_1< \mathbf{S}(\sigma) \land S_2 < \mathbf{S}(\sigma)] \\
&= 1- p[S_1 < \mathbf{S}(\sigma)] p_2[S_2 < \mathbf{S}(\sigma)] \\
&= 1- (1-p_1[S_1 \geq \mathbf{S}(\sigma)]) (1-p_2[S_2 \geq \mathbf{S}(\sigma)]) \\
&= p_1[S_1 \geq \mathbf{S}(\sigma)] +  p_2[S_2 \geq \mathbf{S}(\sigma)]-p_1[S_1 \geq \mathbf{S}(\sigma)] p_2[S_2 \geq \mathbf{S}(\sigma)]\\
&= \mu_{p_1}(\sigma) + \mu_{p_2}(\sigma) -\mu_{p_1}\mu_{p_2}(\sigma)
\end{split}
\end{equation}
\end{proof} 
\begin{proposition}
Taking the minimum/AND operation induces the product t-norm on the underlying fuzzy objects, that is 
\begin{equation}
\mu_{p_1\land p_2} = \mu_{p_1}\mu_{p_2} 
\end{equation}
\begin{proof}
As for the dual case.
\end{proof}
\end{proposition}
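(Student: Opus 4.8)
The plan is to mirror the proof of the preceding proposition, replacing the maximum by the minimum. First I would record that, applied elementwise to the weight functions, $S_1 \wedge S_2 = \min(S_1,S_2)$, and that this is again a classical simplicial set: since $\min$ is a t-norm, the merging lemma above guarantees the monotonicity condition is preserved. Hence $p_1 \wedge p_2$ is a well-defined probability measure on $\mathcal{S}^n(U)$ and $\mu_{p_1 \wedge p_2}$ is meaningful.

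The key step is the equivalence
\begin{equation}
\min(S_1,S_2) \geq \mathbf{S}(\sigma) \iff \big(S_1 \geq \mathbf{S}(\sigma)\ \text{ and }\ S_2 \geq \mathbf{S}(\sigma)\big).
\end{equation}
This is immediate from the earlier lemma stating that $S \geq \mathbf{S}(\sigma)$ holds precisely when the weight of $\sigma$ in $S$ equals $1$, together with the observation that $\min(S_1,S_2)(\sigma) = 1$ exactly when both $S_1(\sigma) = 1$ and $S_2(\sigma) = 1$. In words: comparison against the minimal simplicial set only tests presence of $\sigma$, and $\sigma$ survives an elementwise AND iff it is present in both arguments.

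Then I would compute directly, using that $S_1$ and $S_2$ are drawn independently under the product measure $p$:
\begin{equation}
\mu_{p_1 \wedge p_2}(\sigma) = p\big[\min(S_1,S_2) \geq \mathbf{S}(\sigma)\big] = p\big[S_1 \geq \mathbf{S}(\sigma) \land S_2 \geq \mathbf{S}(\sigma)\big] = p_1\big[S_1 \geq \mathbf{S}(\sigma)\big]\, p_2\big[S_2 \geq \mathbf{S}(\sigma)\big] = \mu_{p_1}(\sigma)\,\mu_{p_2}(\sigma).
\end{equation}

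There is essentially no obstacle: the only place anything nontrivial happens is the factorization of the joint probability, which is exactly where independence of $S_1,S_2$ enters, mirroring the analogous step in the OR case. Unlike the maximum case, no passage to complements is needed, so the computation is in fact shorter; this is why the authors can simply say ``as for the dual case.''
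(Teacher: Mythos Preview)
Your proof is correct and follows exactly the approach the paper intends by ``as for the dual case'': you translate the AND to an elementwise minimum, reduce comparison against $\mathbf{S}(\sigma)$ to presence of $\sigma$ in both $S_1$ and $S_2$, and then factor using independence. Your observation that the AND case is actually shorter because no passage to complements is required is also accurate.
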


\begin{example}
Recall our construction of delta-distributions of Vietoris-Rips complexes
\begin{equation}
p(S \vert r) = \delta_{VR^d(r)}(S),
\end{equation}
which is then averaged with $p(r)$.
When we now want to combine two such distributions by means of a logical operation, the order of operations matter.
For example, we have for two metrics $d_1,d_2$
\begin{equation}
\delta_{VR^{d_1}(r)}(S) \land \delta_{VR^{d_2}(r)}(S) = \delta_{VR^{\max(d_1,d_2)}}(S).
\end{equation}
Now taking the average of this with $p(r)$, and then taking the fuzzy weights $\mu$, results in weights (for simplicity, we consider weights on the edges)
\begin{equation}
\mu([x_i,x_j]) = 1-\phi\left( \max(d_1,d_2)(x_i,x_j)\right).
\end{equation}
On the other hand, if we first take the average for each individual metric and then take the intersection as in the previous proposition, we obtain
\begin{equation}
\mu([x_i,x_j])  =   (1-\phi\left(d_1(x_i,x_j)\right))(1-\phi\left(d_2(x_i,x_j)\right))
\end{equation}
which is clearly not equal to taking the maximum of the distances. 
Thus
\begin{equation}
\int \delta_{VR^{d_1}(r)} \land \delta_{VR^{d_2}(r)} p(r)dr  \neq  \left( \int \delta_{VR^{d_1}(r)} p(r) dr \right) \land \left(\int \delta_{VR^{d_2}(r)} p(r) dr \right)
\end{equation}
\end{example}

\section{Dimensionality Reduction via Probabilistic Fuzzy Simplicial Sets}
The probabilistic representation has immediate consequences for existing methods in dimensionality reduction. In particular, it provides a principled interpretation of UMAP and suggests several variants. We will first reinterpret UMAP in this light and then present some possible alternatives.
\subsection{UMAP}

UMAP is a celebrated algorithm which is widely used for data visualization. The theoretical motivation behind UMAP hinges on fuzzy simplicial sets, as we will now quickly explain. 
Indeed, given a finite dataset $X = x_1,...,x_n$ in some metric space, UMAP first constructs local extended pseudo-metric spaces $(X,d_i)$, where the local distances $d_i$ are of the form 
\begin{equation}
d_i (x,y) = \begin{cases} \frac{d(x,y)- \rho_i}{\sigma_i},&   \text{if } *(x,y), \\
0,& x = y, \\
 \infty,& \text{else.} \end{cases}
\end{equation}
Here, $*(x,y)$ denotes the condition that either $x = x_i$ and $y$ is among the $k$ nearest neighbors of $x_i$, or vice versa. The normalization factor $\sigma_i$ is supposed to account for effects of data density, and the subtraction of $\rho_i$ alleviates the curse of dimensionality. If the metric space our data lives in is Euclidean space, one may understand these local metrics in terms of Riemannian geometry as local neighborhoods of an unknown manifold where the data is distributed on, and where the neighborhoods are such that the distances on the manifold are well approximated by the Euclidean ones.
Having localized the metrics comes with the need to merge them again to obtain a global metric on the data. 
UMAP achieves this by transferring the local metric spaces to fuzzy simplicial sets $(S^i,\mu^i) \in F^1(X)$, where the weights are given by $\mu^i = \exp(-d_i)$. 
Weights on higher-order simplices than edges are not introduced in the method due to computational constraints, but the framework we develop here is intentionally general to be able to accomodate higher order merging in the same framework. 

Once transferred to fuzzy simplicial sets, these local spaces are merged via a t-conorm. 
Then, to obtain low-dimensional representations of the data-points, UMAP uses a Force-Directed-Graph Layout, based on minimizing the fuzzy cross entropy 
\begin{equation}
CE(X,Y) = \sum_{(i,j)} \mu([x_i,x_j]) \ln \frac{\mu[x_i,x_j]}{\nu^Y([x_i,x_j])} + (1-\mu([x_i,x_j])) \ln \frac{1-\mu[x_i,x_j]}{1-\nu^Y([x_i,x_j])}.
\end{equation}
Here, $\nu^Y$ is a weight generated by distances of the low-dimensional points $Y = y_1,...,y_n$ which are optimized to minimize the cross entropy. In UMAP, it is 
$\nu(Y)[x_i,x_j] = \frac{1}{1 + a d(Y_i,Y_j)^{2b}}$, where $a,b$ are hyperparameters and $d$ is the distance is the low dimensional space.
This corresponds to a distribution function with heavier tails in the low-dimensional space, given by 
\begin{equation}
\phi^Y(r) =  \frac{a r^{2b}}{1+ a r^2b}.
\end{equation}
This may be identified as the cdf of a log-logistic distribution, which in standard form is written as $\frac{1}{1 + \frac{r}{\alpha}^{-\beta}}$, where $\beta = 2b$ and $\alpha = a^{-\frac{1}{2b}}.$
Thus, the following is clear:
\begin{proposition}
The local fuzzy weights in UMAP may be obtained from probability distributions over Vietoris-Rips filtrations, based on the local pseudo-metric $d_i$, and the metric $d$, respectively, where the distributions are $ p^X(r) = \exp(-r),p^Y(r) = \frac{2ab r^{2b-1}}{(1+ a r^2b)^2}$.
\end{proposition}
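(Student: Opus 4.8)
The plan is to read off both families of weights directly from the marginal computation carried out in \autoref{ex:VR} together with its corollary, applied twice: once with the input metric replaced by the UMAP pseudo-metric $d_i$, and once with the low-dimensional distance $d$. Recall from that example that if the scales $r$ of the Vietoris--Rips filtration are drawn from a density $p(r)$ with cumulative distribution function $\phi$, then the induced fuzzy weight on an edge $[x_i,x_j]$ is $1-\phi(d(x_i,x_j))$ (and $1-\phi(\max_{k,l} d(x_k,x_l))$ on higher simplices). So the proposition is really a matter of matching the two explicit functional forms appearing in UMAP against $1-\phi(\cdot)$.

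For the high-dimensional side I would invoke \autoref{ex:exponential} with $\nu=1$, but with the metric taken to be the local pseudo-metric $d_i$ rather than a global metric: there $p^X(r)=\exp(-r)$ has $\phi^X(t)=1-\exp(-t)$, so the marginal weight on $[x,y]$ is $1-\phi^X(d_i(x,y))=\exp(-d_i(x,y))$, which is exactly the local weight $\mu^i=\exp(-d_i)$ used by UMAP. Here one uses the identification of right-continuous fuzzy weights with filtrations of classical simplicial sets noted in the preliminaries; $\phi^X$ is continuous, so nothing delicate occurs. This establishes the first half of the claim.

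For the low-dimensional side I would argue in reverse. UMAP's weight $\nu(Y)[x_i,x_j]=(1+a\,d(Y_i,Y_j)^{2b})^{-1}$ has the form $1-\phi^Y(d(Y_i,Y_j))$ precisely when $\phi^Y(r)=1-(1+ar^{2b})^{-1}=\frac{ar^{2b}}{1+ar^{2b}}$, which is exactly the log-logistic cumulative distribution function identified in the text. It then remains to check that $\phi^Y$ is a legitimate cumulative distribution function on $[0,\infty)$ — it satisfies $\phi^Y(0)=0$, is strictly increasing, continuous, and tends to $1$ as $r\to\infty$ — and to differentiate it: a one-line quotient-rule computation (writing $u=ar^{2b}$, $u'=2abr^{2b-1}$, so $\phi^Y=u/(1+u)$ and $(\phi^Y)'=u'/(1+u)^2$) gives $\frac{d}{dr}\phi^Y(r)=\frac{2ab\,r^{2b-1}}{(1+ar^{2b})^2}=p^Y(r)$, the claimed density. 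Hence the low-dimensional weights are the edge marginals of the Vietoris--Rips-at-random-scale construction with scale density $p^Y$ and metric $d$. Combining the two computations yields the proposition; I would additionally remark that the subsequent t-conorm merging of the local fuzzy simplicial sets is itself the OR/maximum operation on the underlying delta-measures by the proposition on Boolean operations, so the entire first stage of UMAP sits inside this framework.

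The ``hard part'' here is essentially bookkeeping rather than mathematics: the only things requiring care are that $\phi^Y$ really is a distribution function (in particular $p^Y\ge 0$, which forces $a,b>0$, as is assumed of the UMAP hyperparameters), and that the fuzzy-weight/filtration correspondence is being applied to right-continuous weights. The one genuinely substantive ingredient — that marginals of scale-randomized Vietoris--Rips filtrations produce weights of the form $1-\phi(\mathrm{diameter})$ — has already been proved in \autoref{ex:VR} and its corollary, so the argument reduces to recognizing $\exp(-d_i)$ and $(1+ad^{2b})^{-1}$ as $1-\phi^X$ and $1-\phi^Y$ for the stated densities.
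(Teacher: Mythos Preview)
Your proposal is correct and follows essentially the same route as the paper: invoke \autoref{ex:VR} and \autoref{ex:exponential} for the high-dimensional side, then differentiate $\phi^Y(r)=\frac{ar^{2b}}{1+ar^{2b}}$ to recover $p^Y$. The additional checks you include (that $\phi^Y$ is a valid cdf, the explicit quotient-rule step, the remark on the t-conorm merging) are welcome elaborations but do not depart from the paper's argument.
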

 \begin{proof}
This follows directly from $\autoref{ex:VR}$ and $\autoref{ex:exponential}$ and differentiating $\phi^Y(r)$.
\end{proof}
In particular, one may interpret the fuzzy weights in UMAP as the probabilities of observing a particular edge in a union of VR complexes, when the scales are sampled from an exponential distribution, independently at each datapoint.
The fuzzy cross entropy then results from the special case of independence assumed over all edges as explained in \autoref{cor:kl}.

From this perspective on the scales, one immediately obtains a generalization of UMAP:
\begin{corollary}
Any cumulative distribution functions $\phi^X,\phi^Y$ of a probability density over non-negative reals yields weights $\mu([x_i,x_j]) = 1-\phi(d([x_i,x_j]))$, which offer a generalization of UMAP. This boils down to using the force-directed graph layout based on the loss
\begin{equation}
CE(X,Y) = \sum_{(i,j)} (1-\phi^X(d_i(x_i, x_j)) \ln \frac{1-\phi^X(d_i(x_i, x_j))}{1- \phi^Y(d(y_i, y_j))} + (\phi^X(d_i(x_i, x_j)) \ln \frac{\phi^X(d_i(x_i, x_j))}{\phi^Y(d(y_i, y_j))}.
\end{equation}
\end{corollary}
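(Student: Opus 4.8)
The plan is to assemble the corollary directly from the machinery already in place, so the argument is really a verification rather than a new computation. First I would fix, for each base point $x_i$, the local pseudo-metric $d_i$ from the UMAP construction and a probability density $p^X$ on $[0,\infty)$ with cumulative distribution function $\phi^X$. Applying \autoref{ex:VR} with the metric $d_i$ in place of $d$ produces a probability measure on $\mathcal{S}^1(X)$ obtained by sampling a Vietoris--Rips scale $r\sim p^X$ and returning $VR((X,d_i),r)$; by the corollary on marginal distributions following that example its edge marginals are exactly $\mu^i([x_j,x_k]) = 1-\phi^X(d_i(x_j,x_k))$, and its vertex marginals are $1-\phi^X(0)=1$ because $\phi^X$ is the cdf of a density and hence $\phi^X(0)=0$. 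Since $r\mapsto 1-\phi^X(r)$ is non-increasing, the induced edge weight is non-decreasing along faces, so each $(S^i,\mu^i)$ is a genuine element of $\mathcal{F}^1(X)$ (equivalently this follows from \autoref{prop:marginal}); an edge with $d_i=\infty$ correctly receives weight $0$.

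Second I would merge the $(S^i,\mu^i)$ via the probabilistic t-conorm, exactly as UMAP does. By the lemma that t-(co)-norms of fuzzy simplicial sets are again fuzzy simplicial sets, the result $(S,\mu)$ lies in $\mathcal{F}^1(X)$, and by the proposition identifying the max/OR Boolean operation with the probabilistic t-conorm, $\mu$ is still the marginal of a \emph{locally Markov} distribution $p^X$ on $\mathcal{S}^1(X)$ in which every edge is present independently and every vertex has probability $1$. On the embedding side I would run the same construction with a density $p^Y$ whose cdf is $\phi^Y$ (obtained by differentiation as in the preceding UMAP proposition), giving $\nu^Y([y_i,y_j]) = 1-\phi^Y(d(y_i,y_j))$.

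Third, the displayed objective is simply the fuzzy cross entropy $\operatorname{CE}((S,\mu)\,\|\,(S,\nu^Y))$ restricted to $1$-simplices: substituting $\mu_1([x_i,x_j]) = 1-\phi^X(d_i(x_i,x_j))$ and $\mu_2([x_i,x_j]) = 1-\phi^Y(d(y_i,y_j))$ into the definition of $\operatorname{CE}$ turns the $\mu_1\ln(\mu_1/\mu_2)$ summand into $(1-\phi^X)\ln\frac{1-\phi^X}{1-\phi^Y}$ and the $(1-\mu_1)\ln((1-\mu_1)/(1-\mu_2))$ summand into $\phi^X\ln\frac{\phi^X}{\phi^Y}$, which is exactly the claimed expression. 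Finally, since both distributions are locally Markov on $\mathcal{S}^1(X)$ with all $0$-simplices of probability $1$, \autoref{cor:kl} yields $\operatorname{CE}((S,\mu)\,\|\,(S,\nu^Y)) = \operatorname{D_{KL}}(p^X\,\|\,p^Y)$, so minimizing this force-directed layout loss is minimizing a bona fide KL divergence between the data-side and embedding-side distributions over Vietoris--Rips samples; this is the precise sense in which it generalizes UMAP, which is recovered as the special case $\phi^X(r)=1-e^{-r}$ and $\phi^Y(r)=ar^{2b}/(1+ar^{2b})$.

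I do not expect a serious obstacle; the only point needing care is the merging step, where — as the example contrasting $\int \delta_{VR^{d_1}(r)}\wedge\delta_{VR^{d_2}(r)}\,p(r)dr$ with the intersection of the individually averaged measures shows — the order of ``average over scales'' and ``combine via a Boolean operation'' matters, so one must verify that it is the OR of the per-point \emph{scale-averaged} measures (not of the fixed-scale complexes) that reproduces UMAP's t-conorm on the marginals; this is exactly the content of the max/OR proposition, so it goes through. A second, purely technical, point is the right-continuity of $r\mapsto\mu^r$ needed for the filtration/fuzzy-weight identification, which holds automatically for cdfs of densities.
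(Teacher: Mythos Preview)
Your core line of reasoning is correct and matches the paper's intent: the corollary is stated without proof there and is meant to follow immediately from the marginal formula of \autoref{ex:VR} (giving $\mu([x_i,x_j])=1-\phi(d(x_i,x_j))$) together with direct substitution into the definition of the fuzzy cross entropy. Your first and third steps carry this out cleanly, and the recovery of UMAP as the special case $\phi^X=1-e^{-r}$, $\phi^Y=ar^{2b}/(1+ar^{2b})$ is exactly right.

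There is, however, one incorrect claim in your second step. You assert that, by the max/OR proposition, the merged measure $p_1\lor\cdots\lor p_n$ is itself locally Markov with independent edges. It is not: under the OR of the per-point scale-averaged measures, two edges $[x_i,x_j]$ and $[x_i,x_k]$ sharing the vertex $x_i$ both depend on the \emph{same} random scale $r_i$ drawn at $x_i$, so they are correlated. The OR proposition only computes the \emph{marginals} of the merged measure (yielding the probabilistic t-conorm), not its dependence structure. The easy repair is to observe that those marginals $\mu$ can equally well be realized by the product measure in which each edge is an independent Bernoulli($\mu(\sigma)$); \emph{that} representative is locally Markov with vertices of probability $1$, and it is to this representative that \autoref{cor:kl} applies. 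Since the corollary itself only concerns the form of the CE loss, this gap does not affect its validity, but your supplementary KL identification needs the corrected argument.
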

Interestingly, one naturally arrives at the requirement of using cumulative distribution functions purely from considerations on how to transfer metrics to fuzzy weights. We elaborate on this in appendix \ref{app:OnAppropriateWeight-to-distanceFunctions}.
To illustrate, we apply the UMAP pipeline (\url{https://github.com/lmcinnes/umap}) to a toy example (MNIST, $N=10000$ data points). We use the standard settings of UMAP and only change the low-dimensional affinities to come from a W
Weibull$(\lambda, k)$ distribution with parameters $\lambda = 1$ and $k$ varying. For shrinking $k$, the distribution has heavier tails, which leads to clusters separating more in the embedding. This seems to correspond to the attraction-repulsion spectrum observed in \cite{bohm2020unifying}, albeit here parametrized via the shape of the distribution.

\begin{figure}
\includegraphics[width=0.3 \textwidth]{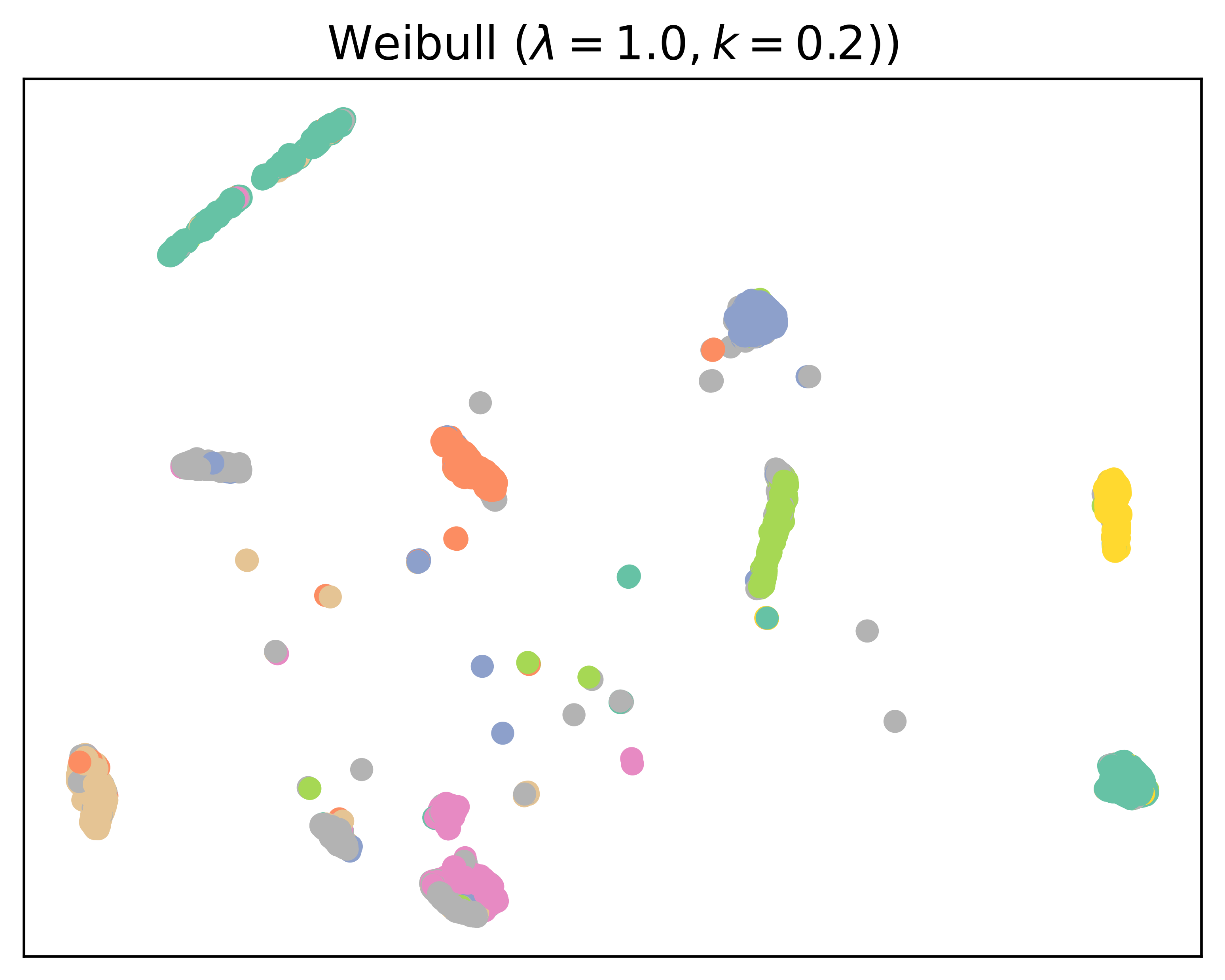}
\includegraphics[width=0.3 \textwidth]{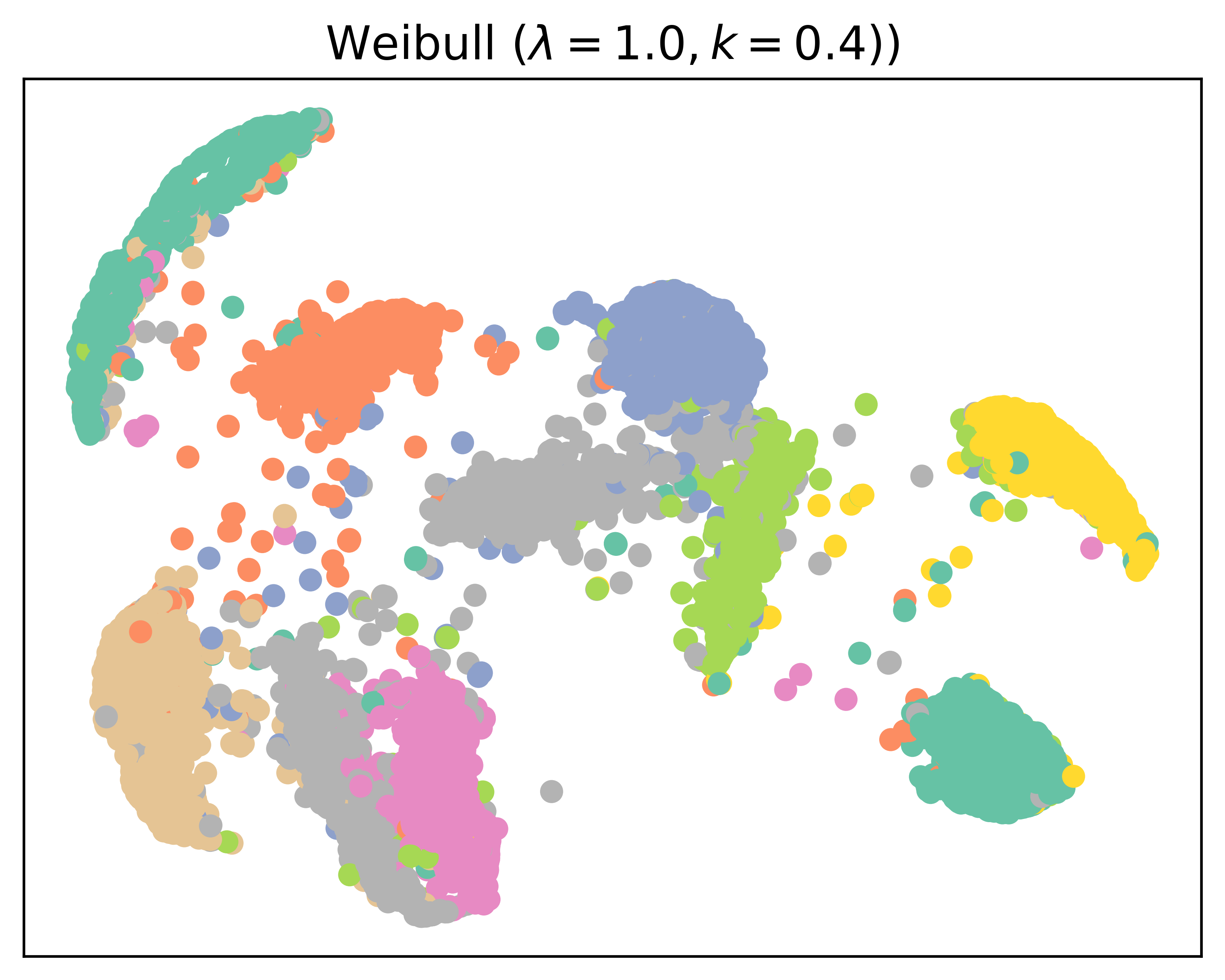}
\includegraphics[width=0.3 \textwidth]{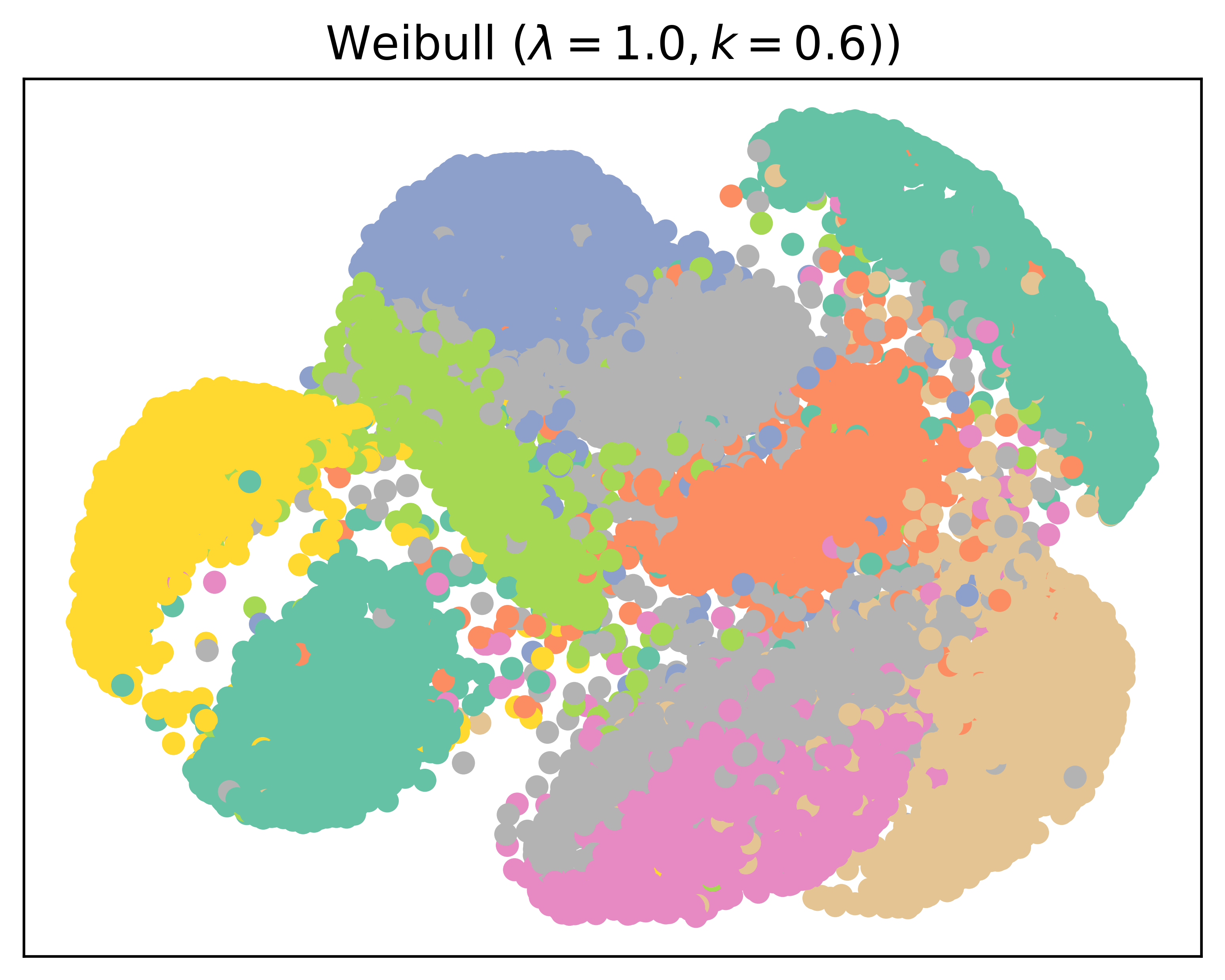}
\centering
\includegraphics[width=0.3 \textwidth]{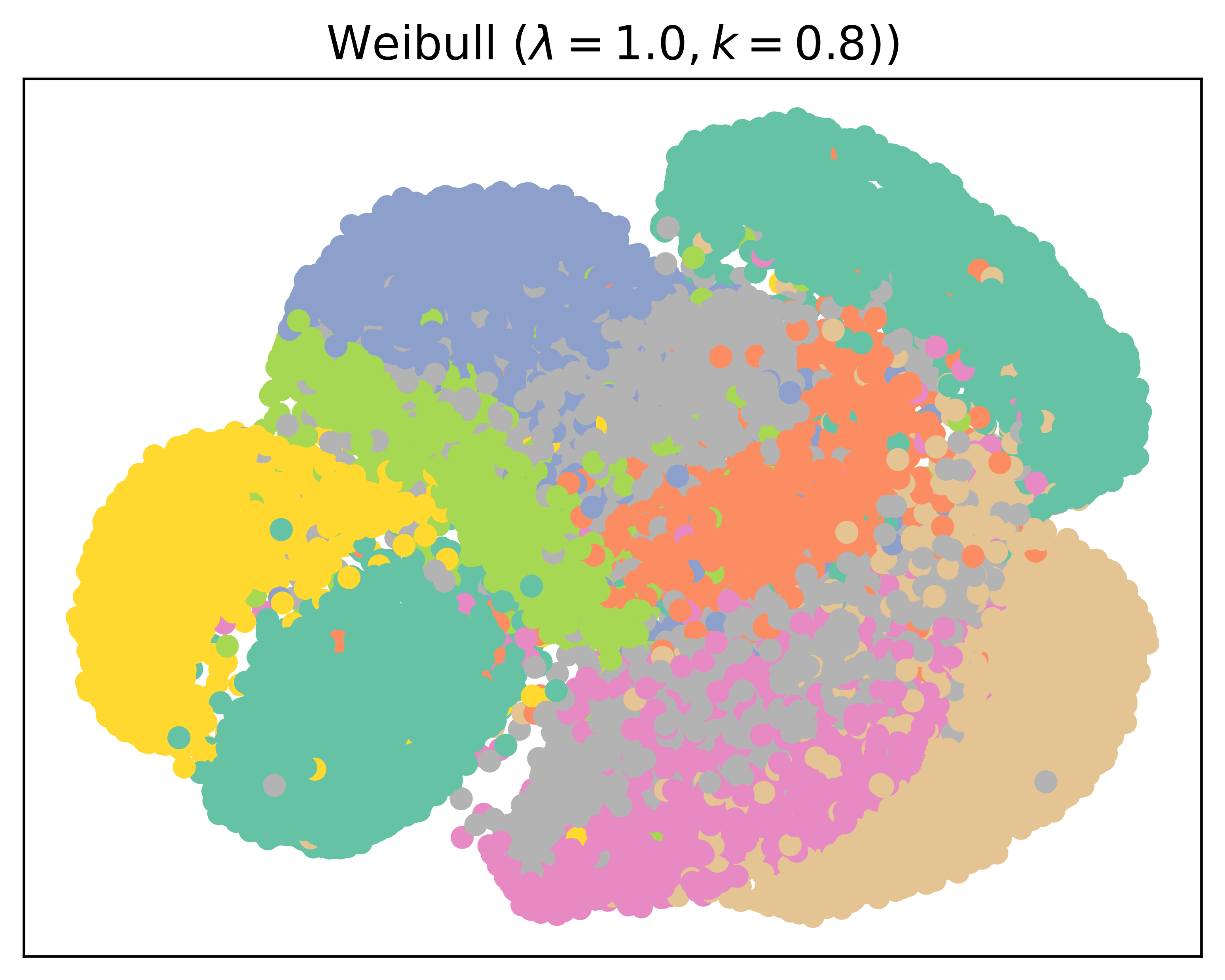}
\caption{UMAP embedding on MNIST (N=10000), using a Weibull distribution for $\phi^Y$ with varying parameters}.\end{figure}

We will now discuss further possible generalizations that this framework naturally suggests.
\subsection{\v{C}UMAP  - UMAP on \v{C}ech complexes}
\label{sec:umapCech}
Another natural generalization of UMAP that arises from this framework is to use the \v{C}ech complex to define fuzzy weights.
Following example \autoref{ex:cech}, for a dataset $X = x_1,...,x_n$ in a Euclidean space $\mathbb{R}^d$ we may define weights 
\begin{equation}
\mu([x_{i_1},....x_{i_k}]) = 1-\phi\left(\min_{y \in \mathbb{R}^d}  \max_{l} d(x_{i_l},y) \right) 
\end{equation}
where $\phi$ is again a cumulative distribution function, e.g. $\phi = 1-\exp(- \cdot)$. Note that for edges $[x_i,x_j]$ one simply has 
\begin{equation}
\min_{y \in \mathbb{R}^d}  \max (d(x_i,y), d(x_j,y) )  = \frac{1}{2} d(x_i,x_j) \end{equation}
that is, for edges this is simply equivalent to the VR-complex up to rescaling.
For triangles, one has the following formula:
\begin{equation}
\small
\min_{y \in \mathbb{R}^d}  \max (d(x_i,y), d(x_j,y),d(x_k,y) )  =  \begin{cases} \frac{d_{\max}}{2}, & d(x_i,x_j)^2 + d(x_j,x_k)^2 + d(x_k,x_i)^2 \leq 2 d_{\max}^2 \\ R(x_i,x_j,x_k) & \text{else}.\end{cases}
\end{equation}
where $d_{\max} = \max {d(x_i,x_j),d(x_j,x_k), d(x_k,x_i)}$ is the longest side length of the triangle and $R(x_i,x_j,x_k)$ is the circumradius of the smallest enclosing ball of the three points $x_i,x_j,x_k$, which may be for example calculated as 
\begin{equation} \begin{split}
R(x_i,x_j,x_k) &= \frac{d(x_i,x_j)d(x_j,x_k)d(x_k,x_i)}{4 \sqrt{s (s- d(x_i,x_j)) (s-d(x_j,x_k))(s-d(x_k,x_i))}}, \\ s &= \frac{d(x_i,x_j) + d(x_k,x_i) + d(x_k,x_i)}{2} \nonumber
\end{split} \end{equation}
For higher order simplices, generally a closed form formula will not be available,
we thus restrict to triplets of points.
A low dimensional embedding of points $y_1,..,y_n$ then will similarly induce weights $\nu^Y$ on triangles, which may be computed by the same formula above, using distances in the low-dimensional space. 
We then may formulate a triplet cross-entropy loss similar to UMAP:
\begin{equation}
CE(X,Y) = \sum_{i,j,k} \mu([x_i,x_j,x_k]) \frac{ \mu([x_i,x_j,x_k])}{\nu^Y([x_i,x_j,x_k])} + (1-\mu([x_i,x_j,x_k])) \frac{1- \mu([x_i,x_j,x_k])}{1-\nu^Y([x_i,x_j,x_k])}.
\end{equation}
As we have seen before, this corresponds to the assumption of independence of individual triangles in the distribution. Note that alternatively, one could use the full KL-divergence, which would also include edge in the loss term.
For the $\phi$-function we use $\phi(t) = \frac{1}{1 + t^2}$, related to the student t distribution and the log-logistic distribution.

In practice, sampling all possible triplets $(i,j,k)$ from the dataset may be prohibitively expensive (as the size of all possible triplets is $\binom{N}{3}$) and could also be uninformative about local structure. Thus, we instead propose to sample 'positive' and 'negative' examples (this is also done in UMAP and may be interpreted as a contrastive estimation scheme \cite{damrich2022t}).
The underlying assumption here is that for positive examples, the weight is close to $1$, such that we only have to compute the first part of the two summands in the loss, while for negative examples, the weight is approximately $0$, meaning only the second term contributes.
A positive example is a local triplet, that is we first sample an edge $(i,j)$ where $j \in \mathcal{N}(i)$, meaning $j$ is a nearest neighbour of $i$. Then, we sample a third point $k$ from the union of neighbourhoods $\mathcal{N}(i) \cup \mathcal{N}(j)$. The three points sampled in this way should thus provide information about local structure to the embedding.
Correspondingly, $n_{\text{negative per positive}}$ negative examples may then simply be sampled uniformly among all possible triplets. Alternatively, one may also sample negative examples which are semi-local, that is, where $(i,j)$ are neighbors and $k$ is then sampled outside of the respective neighborhood. In practice, we mix both these sampling strategies with a proportion of $0.5$ - this corresponds to over-emphasizing semi-local triplets.
See \autoref{algo:cumap} for a pseudocode summary.  All code may be found at \url{https://github.com/jakeck1/cech-umap/}.
\begin{algorithm}
\SetAlgoLined
\KwData{Dataset $X = (x_1, \dots, x_n) \subset \mathbb{R}^d$; number of neighbors $k$; embedding dimension $d_o$; number of epochs $T$,hyperparameters}
\KwResult{Low-dimensional embedding $Y = (y_1, \dots, y_n) \subset \mathbb{R}^{d_o}$}

\BlankLine
\textbf{Compute} $k$-nearest neighbors: $\mathcal{N}(i) \gets \text{knn}(X, k)$ for all $i$\;
\textbf{Initialize} embeddings $Y$ via PCA or randomly\;

\BlankLine
\For{$t = 1, \dots, T$}{
    \For{each mini-batch}{
        \BlankLine
        \textbf{Sample positive triplet:}\\
        \Indp
        Sample $i \sim \mathrm{Unif}(\{1,\dots,n\})$\;
        Sample $j \sim \mathrm{Unif}(\mathcal{N}(i))$\;
        Sample $k \sim \mathrm{Unif}(\mathcal{N}(i) \cup \mathcal{N}(j))$\;
        \Indm

        \BlankLine
        \textbf{Compute weights in input space:}\\
        $\mu_{ijk} \gets 1 - \phi\!\left(r(x_i, x_j, x_k)\right)$, where $r(x_i,x_j,x_k)$ is the minimal enclosing ball radius:
        \[
        r(x_i,x_j,x_k) =
        \begin{cases}
        \tfrac{1}{2} d_{\max}, & \text{if obtuse or right triangle},\\[4pt]
        \dfrac{d_{ij} d_{jk} d_{ki}}{4 \sqrt{s(s-d_{ij})(s-d_{jk})(s-d_{ki})}}, & \text{else},
        \end{cases}
        \]
        with $s = \tfrac{1}{2}(d_{ij} + d_{jk} + d_{ki})$ and $d_{\max} = \max(d_{ij}, d_{jk}, d_{ki})$\;

        \BlankLine
        \textbf{Compute weights in embedding space:}\\
        $\nu_{ijk} \gets 1 - \phi\!\left(r(y_i, y_j, y_k)\right)$\;

        \BlankLine
        \textbf{Compute positive triplet cross-entropy loss:}\\
        \[
        L_{ijk} = -  \mu_{ijk} \log \nu_{ijk}         \]
   
        \BlankLine
        \textbf{Sample } $n_{\text{negative per positive}}$ \textbf{ negative triplets:} with probability $0.5$ sample uniformly $(i,j,k)$ from $\{1,\dots,n\}$\;
        else sample $(i,j)$ as neigbours and $k \notin \mathcal{N}(i) \cup \mathcal{N}(j)$;
        \BlankLine
	\textbf{Compute negative triplet cross-entropy loss}:\\
	
	\[L_{ijk} = - (1-\mu_{ijk})  \log(1 - \nu_{ijk}) \]
        \BlankLine
        \textbf{Update embeddings:} perform gradient step on total loss $L = \sum L_{ijk}$ via autograd\;
    }
}
\caption{\v{C}ech-UMAP pipeline}\label{algo:cumap}
\end{algorithm}

Above we have omitted the issue of rescaling the distances, which is performed in UMAP to obtain density-scaled local metrics. 
Indeed, for edges this would be straightforward: denote $ad_{\mathbb{R}^d}$ the rescaled euclidean distance by a constant factor $a$.
Then, the smallest radius $r$ such that the two rescaled balls $B_{r, a d_{\mathbb{R}^d}}(x), B_{r, b d_{\mathbb{R}^d}}(y)$ intersect is given by 
\begin{equation}
r^* = \frac{ab}{a+b} d_{\mathbb{R}^d}(x,y).
\end{equation}
In particular, if $a = \frac{1}{d(\text{knn}(x),x)}, b= \frac{1}{d(\text{knn}(y),y)}$ is division by the distance to the $k$-nearest neighbor (that is, density adjusted rescaling), then this simply corresponds to a rescaling by the sum of these distances.
However, for triangles, no such simple closed form characterization under rescaling is available. Although one could possibly derive an approximation, we eschew this issue and omit the local rescaling. Instead, we simply introduce a global scale factor, as the maximum distance of any point in the dataset to its $k$-nearest neighbor, and divide each $r[x_i,x_j,x_k]$ by this global factor. 
Further implementational details that are taken over from UMAP are: the computation of the k-neighborhoods is approximated by pynndescent, and the weights for negative triplets  in the cross-entropy are binarized. 

\subsubsection{Results}
We apply \v{C}UMAP to a number of standard high-dimensional datasets.
These are meant to exhibit whether the method may extract any meaningful known structure in the low-dimensional embeddings, here qualitatively evaluated by ground truth additional information about the datapoints in terms of their membership in certain classes (if classes are known) or clusters (if no classes are provided). 
For example, in the MNIST dataset, a method might separate datapoints that belong to different digits in distinct, well-formed clusters in $2$-dimensional space. 
See the Appendix for a description of the datasets used.
In general, we find that the overall structure of the embeddings, at least for certain numbers of neighbors, are often remarkably similar between \v{C}UMAP and UMAP, see \autoref{fig:embeddings1} and \autoref{fig:embeddings2}. This might possibly be due to the fact that we use a similar $\phi$-function as is used in UMAP, which has been noted before to be the main driver of the embeddings together with the cross entropy loss. Another possibility is that the embedding is mainly driven by obtuse triangles - i.e., those where where the scale in the filtration is a function of the maximal edge length alone - hence similar to UMAP effectively inducing a weight on edges.
To obtain a quantitative comparison of the two methods, we evaluate both according to three complementary metrics, each designed to capture a distinct notion of structure preservation.

First, we compute the trustworthiness of the embedding \citep{venna2001neighborhood}. The trustworthiness quantifies how well local neighborhoods in the high-dimensional space are preserved in the embedding. Formally, for an embedding 
$Y=(y_1,…,y_n)$ of data points 
$X=(x_1,…,x_n)$, it is defined as
\begin{equation}
T(k) = 1 - \frac{2}{n k (2n - 3k - 1)} \sum_{i=1}^n \sum_{j \in U^k(y_i)} \big( r(i,j) - k \big),
\end{equation}
where $r(i,j)$ is the rank of point  $x_j$ in the ordered list of distances from $x_i$  in the original space, and 
$ U^k(y_i)$
are the indices of points that are among the
$k$ nearest neighbors of $y_i$  in the embedding but not among the  $k$ nearest neighbors of 
$x_i$ in the original space. Values close to 
$1$ indicate faithful preservation of local neighborhood structure.
Secondly, to assess preservation of global structure, we use the metric proposed in \cite{amid2019trimap}, which measures how well the embedding aligns with a linearly optimal embedding such as PCA.  As PCA is often interpreted to reflect global structure, this thus gives an estimate of how close the method comes to the global structure preservation of PCA (this of course hinges on how well PCA is able to represent global structure in the first place). 
Specifically, given the PCA embedding 
$Y_{\text{PCA}}$and another, centered, embedding 
$Y$, one computes a normalized Procrustes correlation:
\begin{equation}
G = 1 - \frac{\min_{R \in O(d)} |\vert \vert Y - Y_{\text{PCA}} R \vert \vert_F}{\vert \vert Y_{\text{PCA}} \vert \vert_F},
\end{equation}
where the minimization is over all orthogonal transformations 
$R \in O(d)$, and $\vert \vert \cdot \vert \vert_F$ is the Frobenius norm.
The PCA embedding, by construction, achieves $G=1$. 
Other embeddings attain values closer to 
$1$, the closer they are to the PCA embedding and hence the better they preserve global relationships among points in the sense that the former does.
Lastly, to evaluate topological preservation, we use tools from persistent homology. Specifically, we subsample 
$n=800$
points for computational feasibility and compute Vietoris–Rips filtrations up to the first homology group 
$H_1$
 for both the original and embedded data. The resulting persistence diagrams $D_X$ and  $D_Y$ are then compared using the 2-Wasserstein distance:
\begin{equation}
W_2(D_X, D_Y) = \left( \inf_{\gamma} \sum_{(p,q)\in\gamma} | p - q |_2^2 \right)^{1/2},
\end{equation}
where  $\gamma$ ranges over all bijections between the two diagrams (allowing matches with the diagonal). Smaller Wasserstein distances indicate that the topological features—such as connected components, loops, and voids—are better preserved in the embedding. To increase robustness, we compute the distance for $K = 30$ different subsamples and take the average. Furthermore, to account for variations in the embeddings themselves due to inherent stochasticity of the process we take the mean over $5$ embeddings with different seeds for each of the metrics.

Together, these three metrics offer a complementary view: trustworthiness captures local fidelity, Procrustes alignment captures global geometry, and persistent homology captures topological structure.
As may be observed in \autoref{fig:merged1} and \autoref{fig:merged2}, in general we find the following structure on the datasets tested here: \v{C}UMAP always outperforms in the global metric - this is consistent with \cite{amid2019trimap}, which similiarly used a PCA-initialized method based on triplets which outperformed in this metric. Regarding trustworthiness, for small numbers of nearest neighbors UMAP always outperforms, while for higher numbers of neighbors the results are mixed, with a general tendency for lower scores in both. The topological distances are mixed, with \v{C}UMAP sometimes achieving slightly lower distances, especially when increasing the number of neighbors, while UMAP often increases distances there. In particular, the distance in $H_0$, which might indicate how well the structure of connected components or clusters is preserved is then sometimes slightly better. The distance for $H_1$, which points at circular features, is for example relevant in the COIL20 dataset, where we can see that \v{C}UMAP outperforms the other methods. Thus, we may indeed conclude that quantitatively, \v{C}UMAP seems to be meaningfully representing topological and global structure in the embeddings. Overall, the topological distances are similar in both methods.
Qualitatively, we note however that the embeddings are often less smooth with more spurious points \autoref{fig:embeddings1}, \autoref{fig:embeddings2}. As our goal in this article is to expose the theory of fuzzy simplicial sets and how they may guide algorithmic design, we leave improvements of such defects open for future work.
\begin{figure}
  \centering
  \includegraphics[width=\textwidth]{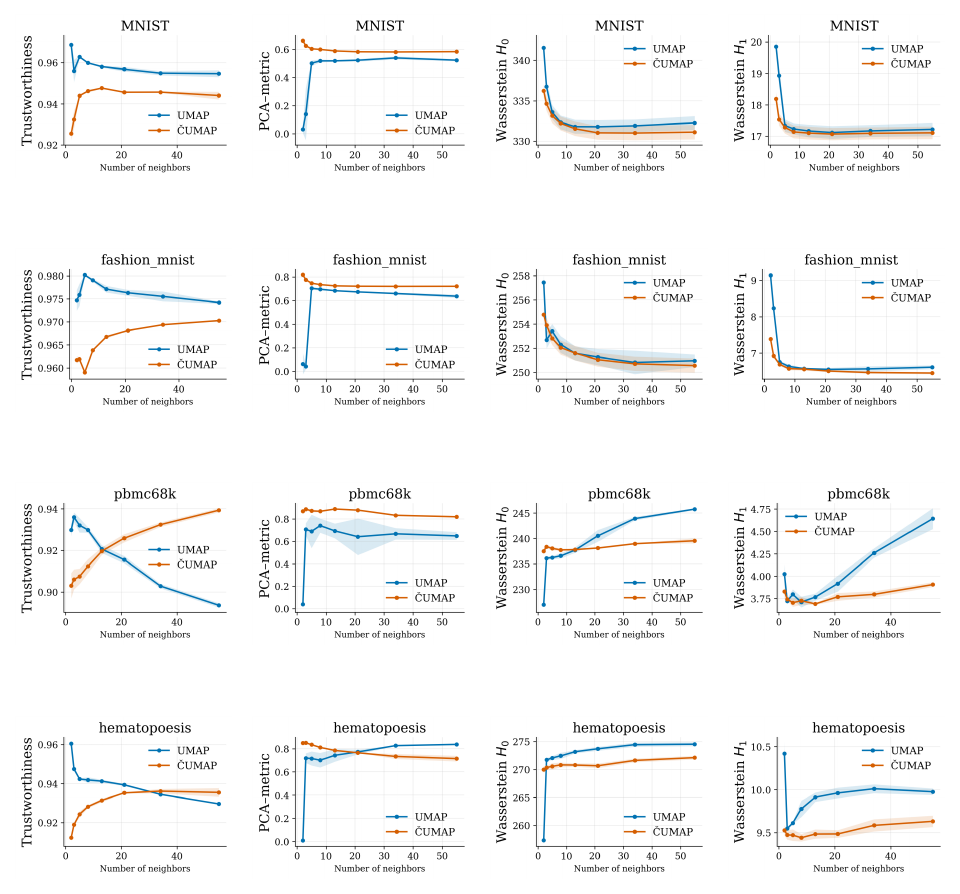}
  \caption{Comparison of UMAP and \v{C}UMAP on various datasets. The methods are evaluated on how well they preserve local structure (trustworthiness, closer to 1 is better) how well they capture global structure  (PCA-metric, closer to 1 is better) and how well they preserve topological features (Wasserstein distances, lower is better).}
  \label{fig:merged1}
\end{figure}

\begin{figure}
  \centering
  \includegraphics[width=\textwidth]{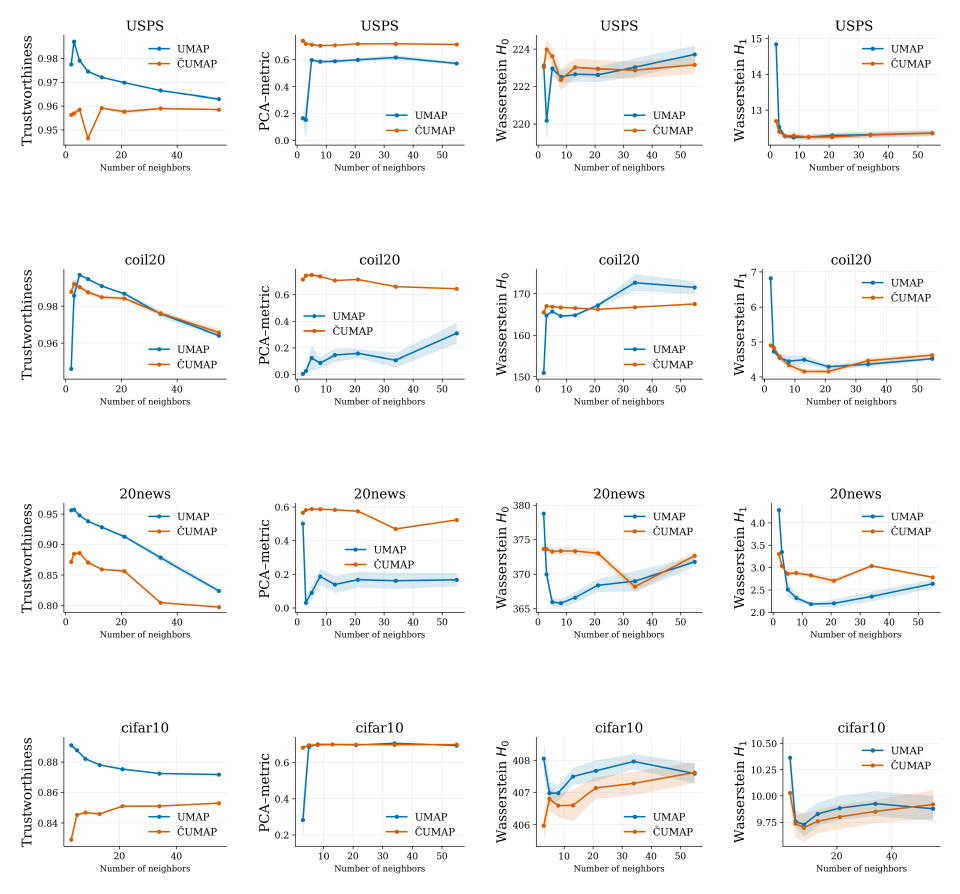}
  \caption{Same evaluation as in \autoref{fig:merged2} for additional datasets.}
  \label{fig:merged2}
\end{figure}

\begin{figure}
  \centering
  \includegraphics[width=\textwidth]{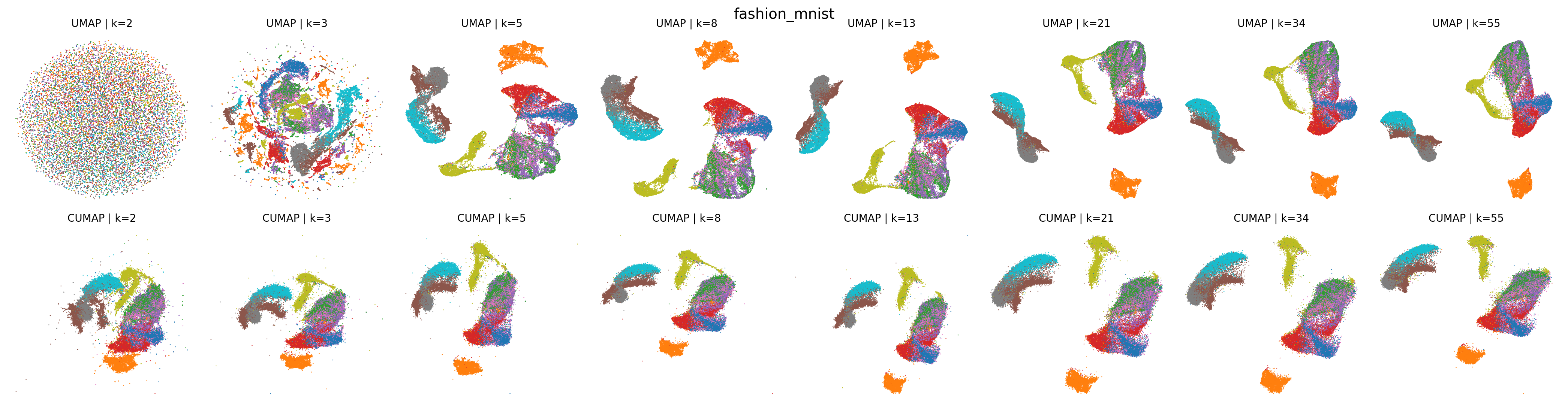}
  \includegraphics[width=\textwidth]{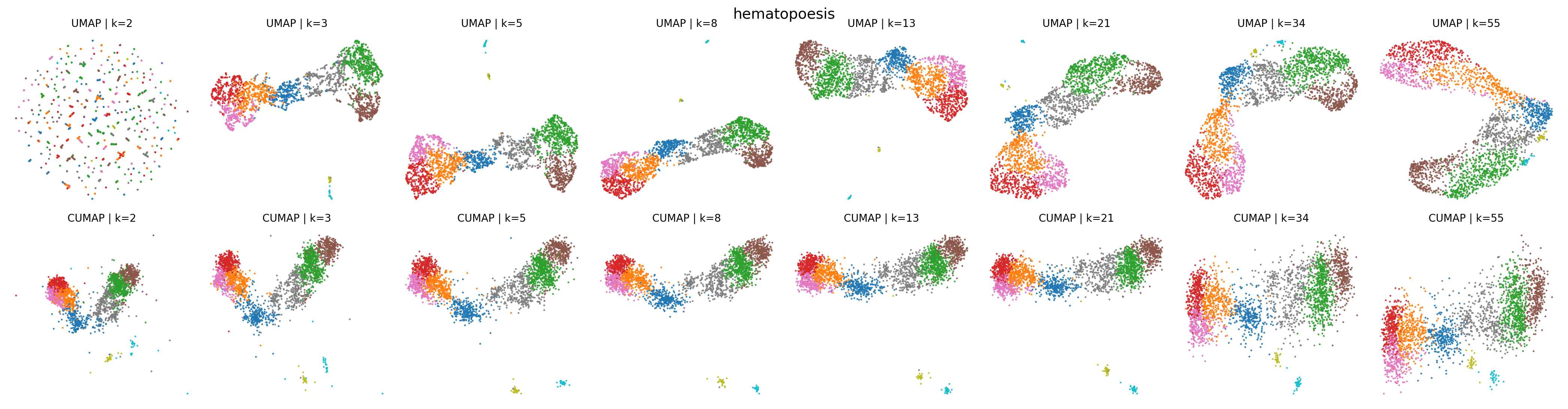}
  \includegraphics[width=\textwidth]{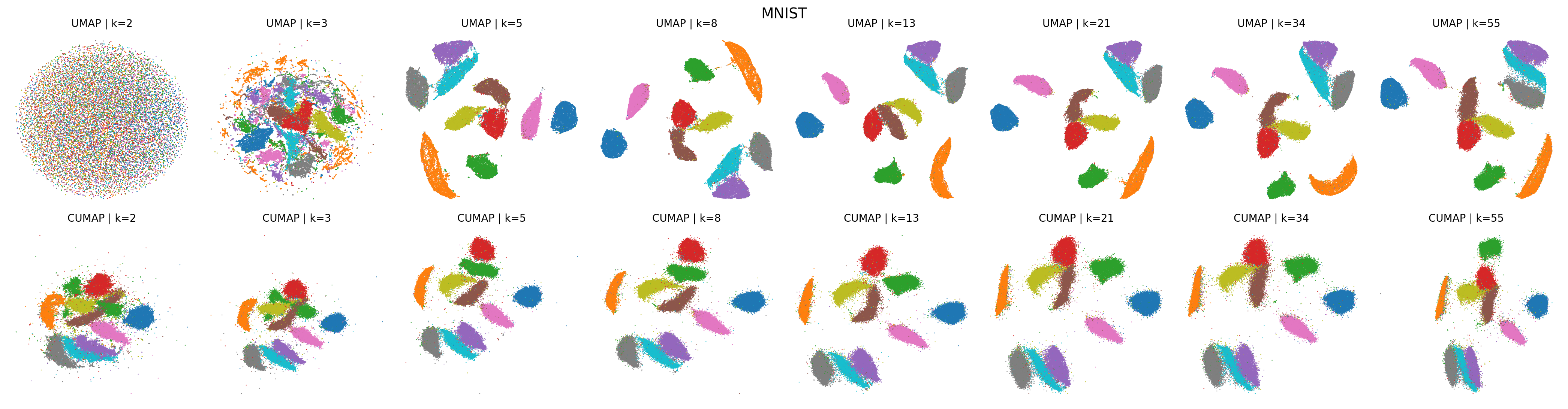}
  \includegraphics[width=\textwidth]{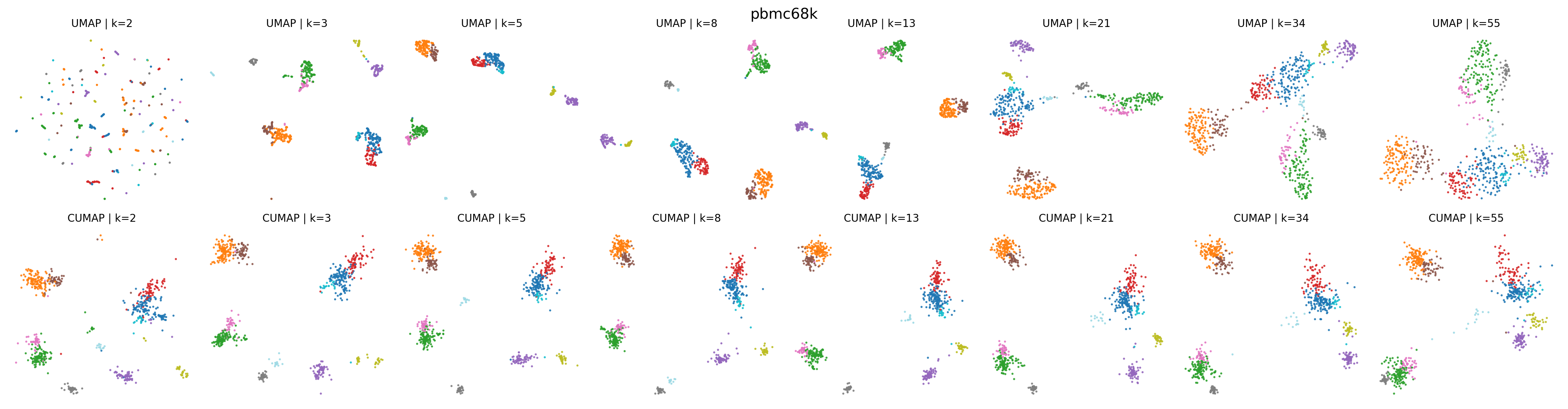}
  \caption{Embedding generated by UMAP and \v{C}UMAP for datasets evaluated in \autoref{fig:merged1}}
  \label{fig:embeddings1}
\end{figure}
\begin{figure}
  \centering
  \includegraphics[width=\textwidth]{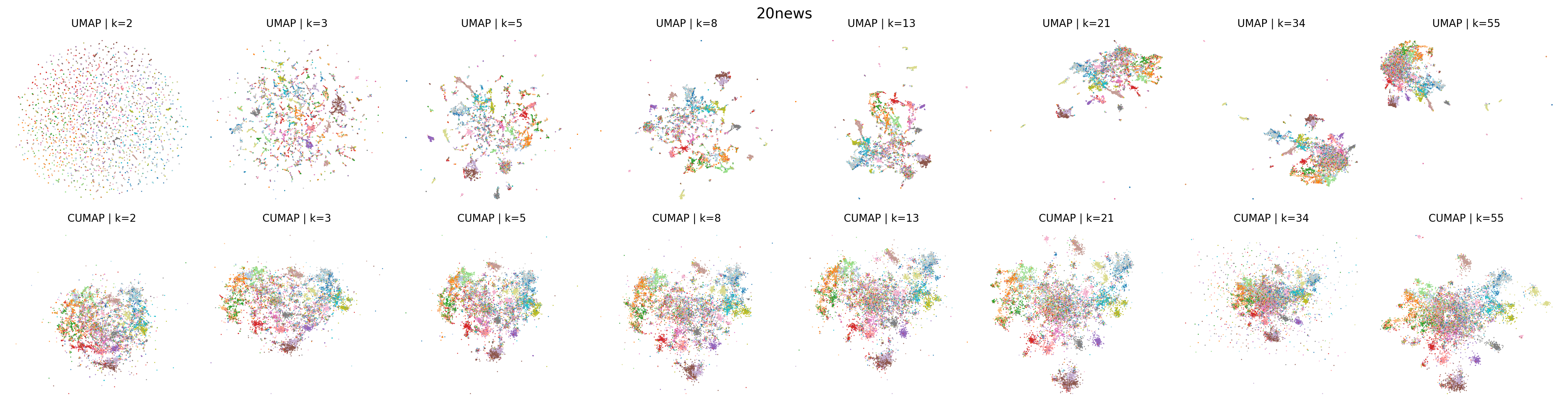}
    \includegraphics[width=\textwidth]{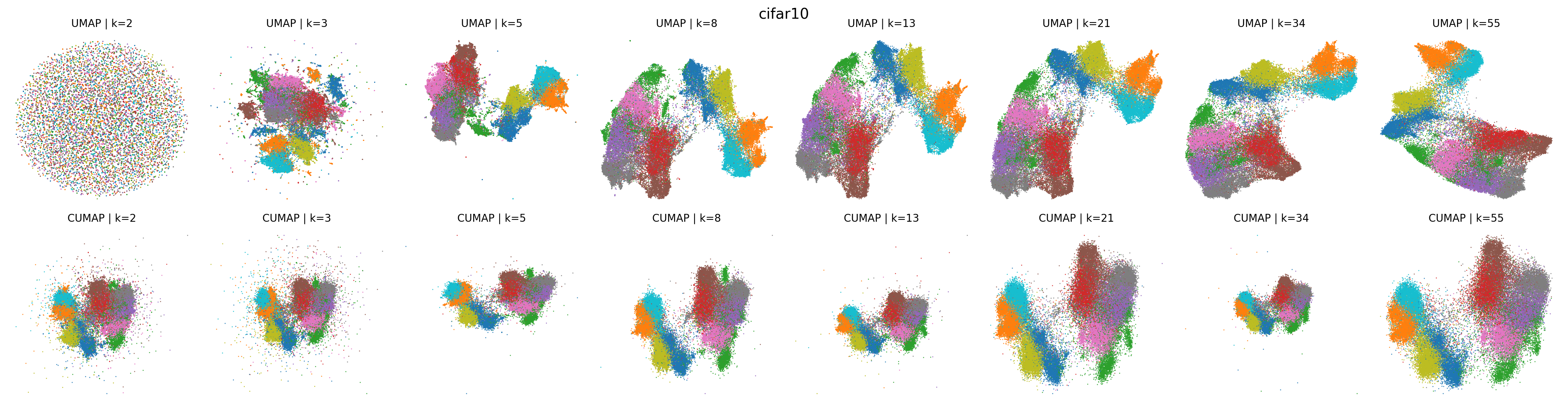}
  \includegraphics[width=\textwidth]{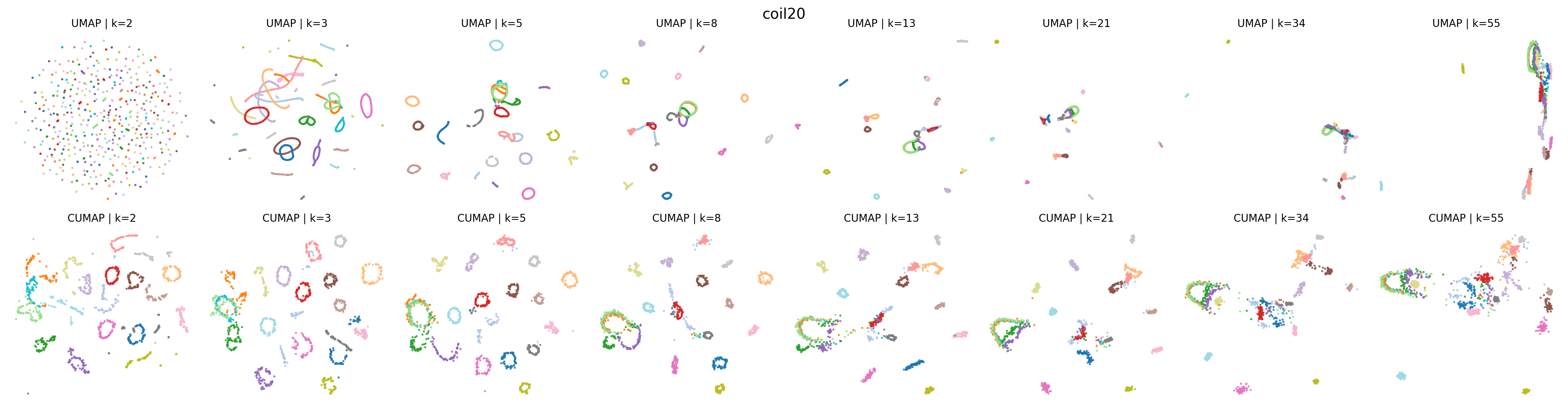}
  \includegraphics[width=\textwidth]{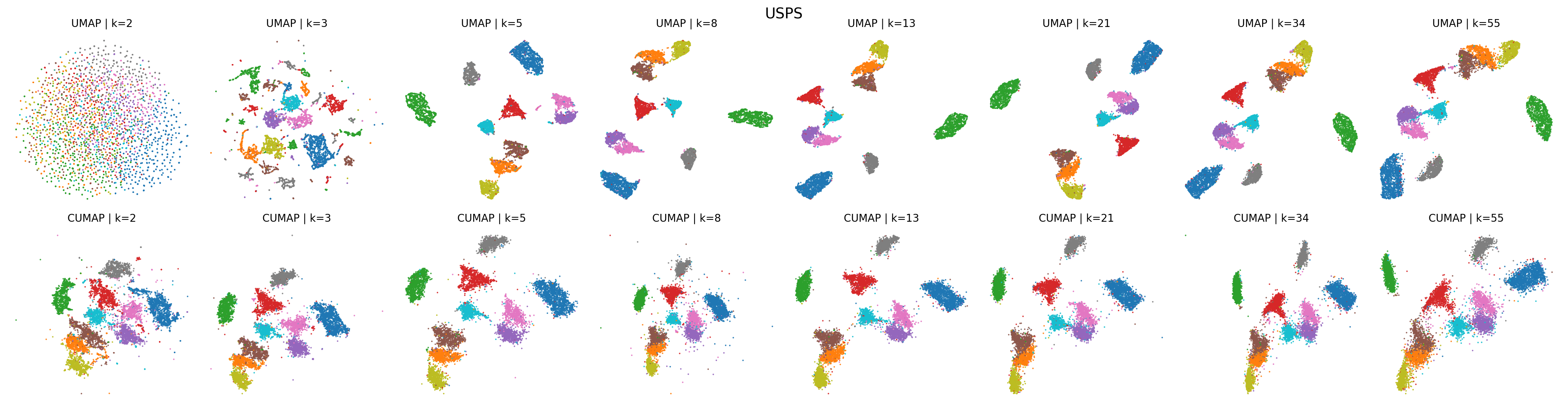}
  \caption{Embedding generated by UMAP and \v{C}UMAP for datasets evaluated in \autoref{fig:merged2}}
  \label{fig:embeddings2}
\end{figure}

We however note that one possible modification is to use the intrinsic \v{C}ech filtration instead of the extrinsic one. 
That is, determining the minimal scale $R$ at which a triangle appears as 
\begin{equation}
r(x_i,x_j,x_k)=  \min_{x_l}  \max (d(x_i,x_l), d(x_j,x_l),d(x_k,x_k) ).  
\end{equation}
Since taking the minimum over all points is prohibitively costly and checks many points that are probably not relevant, one may restrict to the union of neighborhoods, furthermore hard maxima/minima may be replaced by soft versions, that is 
\begin{equation}
r(x_i,x_j,x_k) = \operatorname{softmin}_{x_l \in \mathcal{N}(x_i) \cup \mathcal{N}(x_j) \cup \mathcal{N}(x_k)}   \operatorname{softmax} (d(x_i,x_l), d(x_j,x_l),d(x_k,x_k) ).  
\end{equation}
We observationally find that constructing weights in this way yields similar, if somewhat smoother embeddings than the previous version. We include this construction as an option in the released code.

\subsubsection{Curvature Complexes}
We may also observe that the simplices in the (intrinsic) \v{C}ech-filtration are defined by a condition very similar to the one used to define curvature in metric spaces as described in \cite{joharinad2019topology}. Given a metric space $(X,d)$, the curvature $\rho:X\times X\times X \to \mathbb{R}_{\ge 0}$ is defined by

\begin{equation}
  \label{eq:simplerCurvature}
  \begin{split}
    \rho(x_1,x_2,x_3) =  \inf_{x\in X} \max_{k\in \{1,2,3\}} \left\{ \frac{d(x_k,x)}{r_k} ~\bigg|~ 
    \begin{pmatrix}
      r_1+r_2 = d(x_1,x_2)\\
      r_2+r_3 = d(x_2,x_3)\\
      r_3+r_1 = d(x_3,x_1)
    \end{pmatrix}
     \right\}.
  \end{split}
\end{equation}
Note that the constraint consists of 3 equations for 3 unknowns and can always be solved. The solutions for $r_1,r_2,r_3$ are called the Gromov products. 

We can now observe that, up to the division by the Gromov products, the definition of curvature agrees with the term $\min_{x \in X} \max_{k\in\{1,\cdots,n \}} d(x_{i_k},x)$ in the definition of the \v{C}ech filtration. This gives rise to the idea that one could define a curvature complex instead.
 To do so, we must first generalize the notion of metric curvature to $n$ points. A possible approach is to consider
 \begin{equation}
  \begin{split}
    \rho([x_{i_0},\cdots,x_{i_n}]):= \inf_{x \in X} \max_{r_k>0, k\in\mathbf{n}} \left\{ \frac{d(x_{i_k},x)}{r_k} ~\bigg|~ r_l + r_{m} \ge d(x_{i_l},x_{i_m})~ \forall l,m\in \mathbf{n}\right\}.
  \end{split}
  \label{eq:simplexCurvature-alt}
\end{equation}
We can then  define a \emph{\v{C}ech curvature filtration} via the weights
\begin{equation}
  \begin{split}
    \mu^r_R\left([x_{i_0},...,x_{i_n}] \right)=
    \begin{cases}
      1,&n=1,\\
      \delta\left(\rho([x_{i_0},\cdots,x_{i_n}]) \leq r\right),&n>1.
    \end{cases}
  \end{split}
\end{equation}
and the corresponding sets of the filtration by $R(X,r):=(\mu^r_R)^{-1}(1)$.
We can then compare this construction that encodes \emph{higher-order curvature} with  the \v{C}ech filtration that encodes the distance information in the space. The \v{C}ech complex tells us when distance balls have a common intersection. The curvature complex tells us how much radii of balls that are sufficiently large to enable pairwise intersections have to be enlarged to get a joint intersection of all these balls. We recall that the Vietoris-Rips complex automatically fills in a simplex when the balls intersect pairwise. In that sense, the curvature complex tells us about the difference between the Vietoris-Rips and the \v{C}ech complex.\\
Similar to how fuzzy simplicial sets were obtained from a \v{C}ech filtration in Section \ref{sec:umapCech}, we can then define weights for fuzzy simplicial sets using eq.~\eqref{eq:simplexCurvature-alt}:
\begin{equation}
  \begin{split}
    \psi([x_{i_0},\cdots,x_{i_n}]) := 1 -
    \begin{cases}
      \phi(1),&n=1,\\
      \phi(\rho([x_{i_0},\cdots,x_{i_n}])),& n>1.
    \end{cases}
  \end{split}
\end{equation}
where $\phi$ is some cumulative distribution function that serves as distances-to-weights function.

One could now in principle use a force-directed graph layout, as in UMAP, or another embedding method, to embed the corresponding graph with those weights into some (usually low-dimensional) space. The embedding method would then automatically arrange the embedded points such that they exhibit similar curvature to the points in the original metric space.

\subsection{Methods Based on the Rank Order of Distances}
As an alternative avenue for modification, one may note that the
distributions over Vietoris-Rips filtrations have a Markovian
structure among their edges.
\begin{proposition}
Let $X = (x_1,\dots,x_n)$ be a sample from a metric space.
Put a total order on the edges $[x_i,x_j]$ by declaring
\[
  [x_i,x_j] \le [x_k,x_l]
  \quad\Longleftrightarrow\quad
  d(x_i,x_j) \le d(x_k,x_l).
\]
Form a line graph whose vertices are these edges in sorted order, and
where two consecutive edges are connected if and only if no other edge
$[x_r,x_s]$ has distance strictly between them.
Then the distribution
\[
  p(S)
  \;=\;
  \int \delta_{VR(X,r)}(S)\, p(r)\, dr
\]
over edges is Markovian with respect to this graph.  
That is, if
\[
  d(x_{i_k},x_{j_k})
  \;\le\;
  d(x_{i_{k+1}},x_{j_{k+1}})
\]
for $k=0,\dots,N-1$, then
\small{
\begin{equation} \begin{split}
 &p\!\left(
   S[x_{i_N},x_{j_N}] = s_N,
   S[x_{i_{N-1}},x_{j_{N-1}}] = s_{N-1},
   \dots,
   S[x_{i_1},x_{j_1}] = s_1,
   S[x_{i_0},x_{j_0}] = s_0
 \right)
 \\[0.3em]
 \qquad=
 &p\!\left(
    S[x_{i_N},x_{j_N}] = s_N
    \,\middle|\,
    S[x_{i_{N-1}},x_{j_{N-1}}] = s_{N-1}
 \right)
 \nonumber\\[-0.3em]
 \qquad\quad\times
 &p\!\left(
   S[x_{i_{N-1}},x_{j_{N-1}}] = s_{N-1}
   \,\middle|\, x_{i_{N-2}},x_{j_{N-2}}] = s_{N-2}
    \right)
 \cdots \\
 \times &p\!\left(
   S[x_{i_1},x_{j_1}] = s_1
   \,\middle|\,
   S[x_{i_0},x_{j_0}] = s_0
 \right)
 p\!\left(S[x_{i_0},x_{j_0}] = s_0\right).
 \nonumber
\end{split} \end{equation}}
\end{proposition}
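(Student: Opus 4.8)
The plan is to exploit the fact that, restricted to edges, the Vietoris--Rips filtration is a \emph{threshold} process: in $VR(X,r)$ an edge $[x_i,x_j]$ is present exactly when $d(x_i,x_j)\le r$, so once the edges are listed in sorted order $e_0\le e_1\le\dots\le e_N$ (write $d_k:=d(e_k)$ for their lengths), the set of edges present in any $VR(X,r)$ is always a prefix $\{e_0,\dots,e_{k-1}\}$. Consequently the marginal of $p(S)=\int\delta_{VR(X,r)}(S)\,p(r)\,dr$ on the edge indicators is supported only on the monotone configurations $(s_0,\dots,s_N)$ with $s_0\ge s_1\ge\dots\ge s_N$, and such a configuration --- say $s_j=1$ iff $j<k$ --- occurs precisely when $d_{k-1}\le r<d_k$. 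First I would record, using the cdf $\phi$ from \autoref{ex:VR}, the closed form $p(S(e_0)=1,\dots,S(e_{k-1})=1,S(e_k)=0,\dots,S(e_N)=0)=\phi(d_k)-\phi(d_{k-1})$, with the conventions $d_{-1}=0$, $\phi(d_{-1})=0$ and $d_{N+1}=\infty$, $\phi(\infty)=1$ handling the two boundary cases $k=0$ and $k=N+1$, and note that $p$ vanishes on every non-monotone configuration.

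Next I would compute the one-step conditionals along the path $e_0-e_1-\dots-e_N$. Conditioning on a shorter edge being absent already forces the longer one to be absent, so $p(S(e_k)=1\mid S(e_{k-1})=0)=0$ and hence $p(S(e_k)=0\mid S(e_{k-1})=0)=1$; conditioning on $S(e_{k-1})=1$ (i.e.\ $r\ge d_{k-1}$) gives $p(S(e_k)=1\mid S(e_{k-1})=1)=\tfrac{1-\phi(d_k)}{1-\phi(d_{k-1})}$ with complement $\tfrac{\phi(d_k)-\phi(d_{k-1})}{1-\phi(d_{k-1})}$, while $p(S(e_0)=1)=1-\phi(d_0)$. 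Plugging these into the claimed right-hand side, for a monotone configuration with cutoff $k$ the product is $(1-\phi(d_0))\cdot\prod_{j=1}^{k-1}\tfrac{1-\phi(d_j)}{1-\phi(d_{j-1})}\cdot\tfrac{\phi(d_k)-\phi(d_{k-1})}{1-\phi(d_{k-1})}\cdot\prod_{j>k}1$, which telescopes to $\phi(d_k)-\phi(d_{k-1})$, matching the joint probability computed above; the two boundary cutoffs $k=0$ and $k=N+1$ are checked the same way. For a non-monotone configuration some step has $s_{j-1}=0,\ s_j=1$, so the factor $p(S(e_j)=1\mid S(e_{j-1})=0)=0$ kills the product, matching $p=0$ on the left. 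Since the claimed factorization over the path then holds for every configuration, $p$ is Markov with respect to the line graph --- equivalently, the only random degree of freedom is the number $K$ of present edges, and knowing $S(e_{k-1})$ already resolves whether $K\le k-1$ or $K\ge k$, which is all the past $S(e_0),\dots,S(e_{k-2})$ can contribute to predicting $S(e_k)$.

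The argument is essentially bookkeeping, so the only real care points --- and what I expect to be the mildly delicate part --- are the boundary terms of the telescoping product and the degenerate conditional $p(\,\cdot\mid S(e_{k-1})=0)$, which is deterministic and is exactly the feature that collapses the chain onto prefix configurations, together with the measure-theoretic caveats that conditioning events such as $\{S(e_{k-1})=0\}$ should have positive probability (ensured when $\phi(d_{k-1})>0$) and that $\phi$ be continuous at the relevant lengths so that the $\le$ versus $<$ in $d_{k-1}\le r<d_k$ is immaterial. If several edges share a length the corresponding indicators are deterministically equal, the line graph turns them into a clique, and the factorization still goes through verbatim, so one may assume the $d_k$ distinct without loss of generality.
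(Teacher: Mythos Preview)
Your proposal is correct and follows essentially the same approach as the paper: both rest on the observation from \autoref{ex:VR} that the edge indicators under $p(S)=\int\delta_{VR(X,r)}(S)\,p(r)\,dr$ are threshold variables in a single random scale, so the configuration is always a prefix and knowing the state of the immediately shorter edge screens off all earlier ones. The paper's proof is a one-sentence pointer to this fact plus the intuitive conditional-independence statement, whereas you carry out the explicit telescoping verification of the factorization; your additional care with boundary cutoffs, non-monotone configurations, ties among the $d_k$, and continuity of $\phi$ is welcome detail that the paper omits.
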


\begin{proof}
This follows directly from the formulas in \autoref{ex:VR}.  
Intuitively: once we know whether the next smaller edge is present in
the Vietoris--Rips complex, the presence or absence of all strictly smaller
edges provides no additional information.
\end{proof}

If one posits a similar conditional structure for $q^Y$, this suggests
an alternative factorization of the Kullback--Leibler divergence:

\small{
\begin{equation} \begin{split}
\mathrm{D_{KL}}(p\Vert q)
=
\sum_{i}
\Big[
 \, &p\!\left(
   S[x_{i_N},x_{j_N}] = 1,\;
   S[x_{i_{N-1}},x_{j_{N-1}}] = 1
 \right)
 \ln q^Y\!\left(
   S[y_{i_N},y_{j_N}] = 1
   \,\middle|\,
   S[y_{i_{N-1}},y_{j_{N-1}}] = 1
 \right)
 \nonumber\\
 \quad\;+
 &p\!\left(
   S[x_{i_N},x_{j_N}] = 0,\;
   S[x_{i_{N-1}},x_{j_{N-1}}] = 1
 \right)
 \ln q^Y\!\left(
   S[y_{i_N},y_{j_N}] = 0
   \,\middle|\,
   S[y_{i_{N-1}},y_{j_{N-1}}] = 1
 \right)
 \nonumber\\
 \quad\;+
 &p\!\left(
   S[x_{i_N},x_{j_N}] = 0,\;
   S[x_{i_{N-1}},x_{j_{N-1}}] = 0
 \right)
 \ln q^Y\!\left(
   S[y_{i_N},y_{j_N}] = 0
   \,\middle|\,
   S[y_{i_{N-1}},y_{j_{N-1}}] = 0
 \right)
 \Big].
\end{split} \end{equation}}
This follows by applying \autoref{lem:factorizeKL} to the above DAG and
observing that \[
  p\!\left(
    S[x_{i_N},x_{j_N}] = 1,\;
    S[x_{i_{N-1}},x_{j_{N-1}}] = 0
  \right)
  = 0.
\]
For an appropriate definition of $q^Y$, this encourages the learned
low-dimensional points $Y$ to preserve the same distance order structure
as present in $X$.
The connection of such a loss to ordinal embeddings \citep{vankadara2023insights} or non-metric-multidimensional scaling \citep{kruskal1964nonmetric}, where also the preservation of rank-order is the objective, might be an interesting direction for further work.
We also note that \cite{amid2019trimap} is an embedding method based on triplets which samples triplets $(i,j,k)$ where point $j$ is closer to point $i$ than point $k$, and enforces this order structure in the low-dimensional embeddings. Thus, one may see this as an approximation where instead of maintaining global rank order of distances one independently maintains rank order of distances from each point. 
 
\section{Discussion}
We have introduced a probabilistic framework that is able to represent all fuzzy simplicial sets as objects generated from probability distributions over classic simplicial sets. We have studied operations for merging and comparing such objects, and have studied simple examples from filtration.  In particular, we have used these examples to obtain a probabilistic interpretation of the loss of UMAP, based on Vietoris-Rips filtrations. 
By recasting fuzzy simplicial sets in probabilistic terms, we hope to make the underlying machinery more accessible to a broader audience and to facilitate methodological extensions. In this spirit, we explored how generative models over simplicial sets may serve as a basis for new embedding procedures. Our method, ČUMAP, provides an initial demonstration of this idea by producing UMAP-like embeddings using a triplet-based objective.
Several natural directions now follow from this viewpoint. One immediate extension is to combine the triplet-based loss with the edge-weighting scheme of UMAP, which the probabilistic formulation developed here accommodates directly. More broadly, we believe that interpreting fuzzy simplicial sets through the lens of probability offers a flexible foundation on which future variants of UMAP and related manifold-learning methods can be built.

\bibliography{prob_bib}

\appendix
\section{Proofs}
\label{app:proofs}

Here we will provide proofs for the propositions in the main text that were omitted. We first provide a more direct proof of \autoref{prop:marginal}, then we introduce a little bit of poset theory to show how this falls out as a standard result from that area.

\paragraph{Proof of \autoref{prop:marginal}}
Recall that we want to show that
the marginal map \begin{equation}
m: \mathcal{P}^n(U) \to \mathcal{F}^n(U), ~ p \mapsto   \mu_{p}, ~ \mu_p(\sigma) = p(S \geq \mathbf{S}(\sigma)) = p(S(\sigma) = 1)  \end{equation} is surjective.
We have already shown in the main text that $m$ maps probability measures to fuzzy weights, that is 
\begin{equation}
\text{Im}(m) \subset \mathcal{F}^n(U)
\end{equation}
Now consider similarly to $m$, a map $M$ defined on the function space $\mathbb{R}^{\mathcal{S}^n(U)}$, which assigns $M(f)(\sigma) = \sum_{S \geq \mathbf{S}(\sigma)} f(S)$. 
Then $m$ is the restriction of $M$ to the compact, convex subset  $\mathcal{P}^n(U)$, and since $M$ is linear, 
it is also clear that the image of $m$ is a compact convex set, that is in particular  (for $conv$ the closed convex hull)
\begin{equation}
conv(\text{Im}(m)) = \text{Im}(m).
\end{equation}
Furthermore, we note that any simplicial set, that is any element of $\mathcal{S}^n(U)$, is in the image of $m$: 
Let $S$ be a simplicial set, identified here with its weight function. Consider the probability measure given by 
\begin{equation}
p = \delta_{S},
\end{equation}
that is, $p(S') = \delta(S= S')$. Then 
\begin{equation}
  \begin{split}
m(p)(\sigma) &=  \mu_{p} (\sigma) \\&= p[S \geq \mathbf{S}(\sigma)] \\&= \delta(S \geq \mathbf{S}(\sigma)) \\&= \delta(S(\sigma) = 1) \\&= S(\sigma).
\end{split}
\end{equation}
Hence we have 
\begin{equation} 
\mathcal{S}^n(U) \subset \text{Im}(m)
\end{equation}
and so in particular also 
\begin{equation}
conv(\mathcal{S}^n(U)) \subset conv(\text{Im}(m)) = \text{Im}(m).
\end{equation}
By \autoref{lem:extrem} below and the Krein-Milman-theorem (\cite{rudin1991functional}) \begin{equation}
conv(\mathcal{S}^n(U) ) = \mathcal{F}^n(U)
\end{equation}
which concludes the proof.

\begin{lemma}\label{lem:extrem}
The set $\mathcal{F}^n(U)$ is convex, and $\mathcal{S}^n(U)$ are its extremal points.
\end{lemma}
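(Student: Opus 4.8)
The plan is to realize $\mathcal{F}^n(U)$ as a compact convex polytope in a finite-dimensional cube and then identify its vertices. Since $U$ is finite, the set $Z$ of all simplices of $\mathcal{S}^n(U)$ up to dimension $n$ is finite, and by the Remark a fuzzy simplicial set in $\mathcal{F}^n(U)$ is exactly a point $\mu \in [0,1]^Z$ subject to the linear constraints $\mu(\sigma') \le \mu(d_i^{}\sigma')$ for every simplex $\sigma'$ and face map $d_i^{}$, together with the degeneracy equalities $\mu(\sigma') = \mu(s_i^{}\sigma')$ (i.e.\ $\mu$ is non-increasing along the face order and constant on degeneracy classes). All of these are non-strict linear (in)equalities, so $\mathcal{F}^n(U)$ is closed, bounded and convex; convexity in particular is immediate because a convex combination of two feasible weights again satisfies each inequality.

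For one inclusion I would show that every $\mu \in \mathcal{S}^n(U)$, i.e.\ every $\{0,1\}$-valued feasible weight, is an extremal point. If $\mu = \tfrac{1}{2}(\mu_1+\mu_2)$ with $\mu_1,\mu_2 \in \mathcal{F}^n(U)$, then for any simplex $\sigma$ with $\mu(\sigma)=0$ nonnegativity of $\mu_1,\mu_2$ forces $\mu_1(\sigma)=\mu_2(\sigma)=0$, and for $\mu(\sigma)=1$ the upper bound $\le 1$ forces $\mu_1(\sigma)=\mu_2(\sigma)=1$. Hence $\mu_1=\mu_2=\mu$, so $\mu$ is extremal.

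For the converse I would show that any feasible $\mu$ attaining some value strictly between $0$ and $1$ is a proper convex combination, hence not extremal. Let $V=\mu(Z)\subset[0,1]$ be the finite set of attained values, pick $v\in V\cap(0,1)$, and let $\gamma>0$ be the smallest gap between distinct elements of $V$ (set $\gamma=1$ if $|V|=1$). For $0<\varepsilon<\min(v,1-v,\gamma)$ define $\mu^{\pm}(\sigma)=\mu(\sigma)\pm\varepsilon$ if $\mu(\sigma)=v$ and $\mu^{\pm}(\sigma)=\mu(\sigma)$ otherwise. Then $\mu=\tfrac{1}{2}(\mu^{+}+\mu^{-})$ and $\mu^{+}\neq\mu^{-}$, so it suffices to check $\mu^{\pm}\in\mathcal{F}^n(U)$. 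The box constraints hold since $v\pm\varepsilon\in[0,1]$; the degeneracy equalities are preserved because $\mu(\sigma)=\mu(s_i^{}\sigma)$ means both sides are simultaneously at level $v$ or simultaneously not; and a face inequality $\mu(\sigma')\le\mu(d_i^{}\sigma')$ is verified by splitting into the four cases according to whether each of $\mu(\sigma'),\mu(d_i^{}\sigma')$ equals $v$ — if both do, both perturbed sides equal $v\pm\varepsilon$; if neither does, it is the original inequality; if exactly one does, the original strict inequality plus $\varepsilon<\gamma$ leaves enough slack. Thus $\mu^{\pm}$ are feasible.

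Combining the two inclusions gives that the extremal points of $\mathcal{F}^n(U)$ are precisely $\mathcal{S}^n(U)$. I expect the only delicate point to be the case analysis in the last paragraph: one must perturb \emph{all} simplices at level $v$ by the same $\pm\varepsilon$, so that equalities among them survive, and must use finiteness of $V$ to obtain the uniform gap $\gamma$ that absorbs $\varepsilon$; dropping either of these bookkeeping conditions breaks the face monotonicity constraint.
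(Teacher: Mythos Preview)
Your proof is correct and follows essentially the same approach as the paper: convexity is immediate from the linear inequality constraints, $\{0,1\}$-valued weights are extremal by the standard squeeze argument, and any weight with an interior value is decomposed as $\tfrac{1}{2}(\mu^{+}+\mu^{-})$ via a small uniform perturbation that respects the face/degeneracy order. The only cosmetic difference is that the paper perturbs \emph{all} simplices with values in $(0,1)$ simultaneously, whereas you perturb only the single level set $\{\mu=v\}$; both variants work, and your explicit use of the minimal gap $\gamma$ makes the feasibility check a bit cleaner than the paper's ``by $U$ finite we may find $\varepsilon$'' clause.
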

\begin{proof}
Let $S_1,S_2$ be two fuzzy simplicial sets on the same base set $U$, thus identified here with their weight functions.
Then for $\sigma \geq \sigma'$, where the order is face-inclusion, and $t \in [0,1]$
\begin{equation}
t S_1(\sigma) + (1-t) S_2(\sigma) \geq t S_1(\sigma') + (1-t) S_2(\sigma'), 
\end{equation}
hence also $tS_1 + (1-t) S_2$ is monotone, similarly for the degeneracy-order. Thus, fuzzy simplicial sets form a convex set. 
Now let us show that the extremal points are exactly the standard simplicial sets. Recall that the extremal points are those points $S$ where exist no $S_1 \neq S_2, t \in (0,1)$ such that $S = t S_1 + (1-t) S_2$.
It is easy to see that any standard simplicial set is an extremal point, since for any $S_1 \neq S_2$, $t S_1 + (1-t)S_2$ has to take at least one value in $(0,1)$.
Now assume $S$ is an extremal point of $ \mathcal{F}^n(U)$, that is for any $S_1,S_2 \in  \mathcal{F}^n(U)$, $t \in (0,1) $
\begin{equation}
S = t S_1 + (1-t) S_2 \implies S = S_1 = S_2.
\end{equation}
Then $S$ has to take only values in $\{0,1\}$.
Assuming otherwise, by $U$ being finite we may find $\varepsilon$ such that 
\begin{equation}
g^{\pm}(\sigma) =  \begin{cases} a \pm \varepsilon,&\text{if } S(\sigma) = a, a \notin \{0,1\} \\
a,&\text{if } S(\sigma) = a, a \in \{0,1\} \end{cases}
\end{equation}
still fulfills the monotonicity requirements. 
But then $S = \frac{1}{2}(g^+ +g^{-})$, which violates our assumption.
\end{proof}

\subsection{As a standard result on finite posets}

Here we want to show how the above is a special case of a result that will hold generally on finite posets. 
To do so, we will introduce quite an amount of standard terminology and simple lemmas, which will make it easier for the flow of the reader.

\begin{definition}
A partially ordered set (poset) is a set $P$ together with a relation $\leq$ which is  reflexive ($x \leq x$)  antisymmetric $(x \leq y) \land (y \leq x) \implies x =y$ and transitive $x \leq y, y \leq z \implies x \leq z$.
\end{definition}

\begin{example}
The poset we are considering in the main text is that of simplices together with face-inclusion. That is, we may for simplicity ignore degeneracies and have $\sigma \leq \sigma'$ if $\sigma$ may be obtained from $\sigma'$ via face maps.
\end{example}

\begin{definition}
A morphism of posets is a map $f: (P,\leq_P) \to (Q, \leq_Q)$ such that $x \leq_P y \implies f(x) \leq_Q f(y)$. We will call such maps also isotone maps.
We will call a morphism antitone, if it reverses order ($x \leq_P y \implies f(x) \geq_Q f(y)$) (this is just a isotone map under a different order on the codomain, but for clarity it is useful to distinguish).
\end{definition}

\begin{definition}
Let $P$ be a poset. An Up-set is a subset $S \subset P$, such that whenever $x \in S$, $x \leq y$ then also $y \in S$, that is the set is upward-closed. 
A Down-set is defined similar for the order of the inequality reversed.
The Up-set generated by an element $x$ is the smallest Up-set containing $x$, that is,
\begin{equation}
Up(x) = \{y \in P \vert x \leq y\},
\end{equation}
similarly for Down-sets.
\end{definition}

\begin{remark}
By the antisymmetry of the relation, $x = y \iff Up(x) = Up(y)$. Furthermore, if $x \leq y$, then $Up(y) \subset Up(x)$, that is, the map $x \mapsto Up(x)$ is antitone w.r.t. the order of  inclusion on subsets. \end{remark}

\begin{definition}
The indicator function of a set $S$ is 
\begin{equation}
\chi_{S}: P \to \{0,1\}, \chi(x) = \begin{cases} 1, x \in S \\ 0, x \notin S \end{cases}.
\end{equation}
\end{definition}
\begin{remark}
For any set $X$ and a poset $P$, the set of functions $f: X \to P$ is again a poset with $f \leq g \iff f(x) \leq_P g(x) \forall x$.
In particular, $\{0,1\}^P$ forms a poset, which is isomorphic to the powerset of $P$.
Furthermore, $\chi: S \mapsto \chi_{S}$ is an isotone morphism, where the order on sets is set inclusion, as may be readily checked.
\end{remark}
In particular, combining the above remark with the previous one, we obtain:
\begin{lemma}
The map from $P$ to $\{0,1\}^P$, mapping $x \mapsto \chi_{Up(x)}$ is an injective, antitone morphism of posets.
\end{lemma}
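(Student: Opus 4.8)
The plan is to recognize the map $x \mapsto \chi_{Up(x)}$ as the composite of two maps whose behaviour has already been recorded in the remarks above, and then to verify the two asserted properties (antitonicity, injectivity) separately, each being essentially a one-line consequence of those remarks.

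First I would treat antitonicity. By the remark preceding the lemma, $x \mapsto Up(x)$ is antitone with respect to inclusion of subsets: $x \le y$ implies $Up(y) \subseteq Up(x)$. By the earlier remark, the indicator assignment $\chi : S \mapsto \chi_S$ is an \emph{isotone} morphism from the powerset of $P$ (ordered by inclusion) to $\{0,1\}^P$ (ordered pointwise). Composing an antitone map with an isotone map yields an antitone map, so $x \le y$ gives $\chi_{Up(y)} \le \chi_{Up(x)}$, which is exactly the claimed antitone condition. The only point requiring a moment's care here is bookkeeping of the order conventions on the powerset and on $\{0,1\}^P$, so that the inequalities end up pointing in the stated direction.

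Next I would check injectivity. Suppose $\chi_{Up(x)} = \chi_{Up(y)}$. Since a subset of $P$ is recovered from its indicator function via $S = \chi_S^{-1}(1)$, this equality of functions forces $Up(x) = Up(y)$. By the remark noting that antisymmetry of $\le$ yields $x = y \iff Up(x) = Up(y)$, we conclude $x = y$, so the map is injective.

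I do not anticipate any genuine obstacle: the statement packages together the antitonicity of $x \mapsto Up(x)$, the isotonicity of $\chi$, and the elementary fact that a set is determined by its indicator function. The proof is therefore a short composition argument, and the main thing to be vigilant about is consistency of the partial-order conventions across the three posets involved ($P$, the powerset of $P$, and $\{0,1\}^P$).
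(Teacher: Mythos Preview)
Your proposal is correct and matches the paper's approach exactly: the paper presents this lemma as an immediate consequence of the two preceding remarks (antitonicity of $x\mapsto Up(x)$ and isotonicity of $S\mapsto\chi_S$), and your composition argument together with the injectivity check via $Up(x)=Up(y)\iff x=y$ is precisely that combination spelled out.
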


\begin{definition}
Let $P$ be a finite or countably infinite poset. We denote by $\mathcal{P}(P)$ the set of all probability mass functions over $P$, that is, functions 
\begin{equation}
\mu: P \to [0,1]
\end{equation}
such that 
\begin{equation}
\sum_{x \in P} \mu(x) = 1.
\end{equation}
\end{definition}

\begin{definition}
Let $P$ be a finite poset. The poset of monotone functions $f: P \to \{0,1\}$ will be denoted as $I(P)$.
The poset of monotone functions $f: P \to [0,1]$ will be denoted as $O(P)$. The latter is called the order-polytope of $P$ - indeed it is a convex polytope of dimension $\vert P \vert$.
\end{definition}
\begin{lemma}
We have the equality
\begin{equation}
I(P) = \{ \chi_{S} \vert S \text{ an Down-set} \}.
\end{equation}
\end{lemma}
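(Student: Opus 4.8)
The plan is to prove the two inclusions separately, since the statement is nothing more than a transcription of the definitions of $I(P)$ and of "Down-set" into each other; the only genuine care needed is to keep the orientation of the partial order straight.

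First I would handle $\{\chi_S \mid S \text{ a Down-set}\} \subseteq I(P)$. Fix a Down-set $S \subseteq P$ and put $f = \chi_S$. To check $f \in I(P)$ I would take a comparable pair $x \le y$ in $P$ and compare $f(x)$ with $f(y)$: downward closedness of $S$ says $y \in S \Rightarrow x \in S$, which is exactly the monotonicity relation membership in $I(P)$ requires of a $\{0,1\}$-valued function, so this is a one-line verification. For the reverse inclusion $I(P) \subseteq \{\chi_S \mid S \text{ a Down-set}\}$, given $f \in I(P)$ I would set $S := f^{-1}(\{1\})$; then $f = \chi_S$ automatically because $f$ takes only the values $0$ and $1$, and running the same comparison backwards shows that $y \in S$ and $x \le y$ force $f(x) = 1$, i.e. $x \in S$, so $S$ is indeed a Down-set. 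It is worth recording the equivalent phrasing that $f^{-1}(\{0\})$ is then an Up-set, which is the same fact seen through the order-reversing involution $a \mapsto 1-a$ on $\{0,1\}$ and matches the earlier lemma on $x \mapsto \chi_{Up(x)}$.

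I do not expect any real obstacle: the content is a definitional unpacking, and the single point that needs attention is the bookkeeping of which direction of closure one lands on (Down-set versus Up-set), which has to be reconciled with the convention fixed for $I(P)$ and, ultimately, with the face order on simplices used in the main text. Once this lemma is in place, the companion statement — that the elements of $I(P)$ are exactly the extreme points (vertices) of the order polytope $O(P)$ — follows by the same argument applied to $[0,1]$-valued monotone functions, and that is the form in which the result feeds into the poset-theoretic rederivation of \autoref{prop:marginal}.
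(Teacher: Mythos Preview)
Your proposal is correct and follows essentially the same approach as the paper's own proof: both arguments establish the two inclusions by the same one-line unpackings of ``Down-set'' and ``monotone $\{0,1\}$-valued function'', with your version being slightly more explicit about setting $S := f^{-1}(\{1\})$ in the reverse direction. The additional remarks you make about the Up-set reformulation and the link to the extreme-point characterisation of $O(P)$ are accurate context but go beyond what the paper records in its proof.
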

\begin{proof}
If $S$ is a down-set, then $\chi_{S}$ is monotone, as $x \leq y$ and $\chi_S(y) =1$ implies $x \in S$ and hence $\chi_S(x) = 1$. Conversely, if $f:P \to \{0,1\}$ is a isotone map and $f(y) = 1$, $x \leq y$ then $f(x) = 1$. 
\end{proof}

\begin{example}
As we have stated in the main text, a classical simplicial set is simply an isotone function from the simplices to $\{0,1\}$, hence alternatively, they may be identified with indicator functions of Down-sets. In particular, the 'minimal simplicial sets' we have described in the main text correspond to $Down(\sigma)$ for a simplex $\sigma$.
Fuzzy simplicial sets then correspond to $O(P)$.
\end{example}

\begin{lemma} (\cite{stanley1986two})
$O(P)$ is convex and $I(P)$ are it's extremal points.
\end{lemma}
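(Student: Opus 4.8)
The plan is to prove this exactly as \autoref{lem:extrem} was proved in the special case, since nothing there used the simplicial structure --- only that $O(P)$ consists of the order-preserving maps $P\to[0,1]$. Convexity is the routine half: if $f,g\in O(P)$ and $t\in[0,1]$, then for any $x\le y$ one has $tf(x)+(1-t)g(x)\le tf(y)+(1-t)g(y)$ because $f(x)\le f(y)$ and $g(x)\le g(y)$, and the value stays in $[0,1]$; hence $tf+(1-t)g\in O(P)$. So the work is in identifying the extreme points with $I(P)$.

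First I would show $I(P)\subseteq\mathrm{ext}\,O(P)$. Take $\chi_S\in I(P)$ with $S$ a down-set and suppose $\chi_S=tf+(1-t)g$ for $f,g\in O(P)$, $t\in(0,1)$. For each $x$ the number $\chi_S(x)$ is an endpoint of $[0,1]$, and an endpoint of an interval admits no nontrivial representation as a convex combination of points of that interval; hence $f(x)=g(x)=\chi_S(x)$ for all $x$, so $f=g$ and $\chi_S$ is extreme.

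The reverse inclusion is where the actual argument lies, and I would prove it by contraposition: given $f\in O(P)$ that is not $\{0,1\}$-valued, construct a nontrivial decomposition. Fix $x_0$ with $f(x_0)\in(0,1)$. Because $P$ is finite, the set of interior values $\{f(x):f(x)\in(0,1)\}$ is finite and nonempty, so we may choose
\[
0<\varepsilon\le\min_{x:\,f(x)\in(0,1)}\min\bigl\{f(x),\,1-f(x)\bigr\}
\]
and set $g^{\pm}(x)=f(x)\pm\varepsilon$ when $f(x)\in(0,1)$ and $g^{\pm}(x)=f(x)$ otherwise. Then $g^{+}\ne g^{-}$ (they differ at $x_0$), both map into $[0,1]$ by the choice of $\varepsilon$, and $f=\tfrac12(g^{+}+g^{-})$. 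The single point requiring care --- and the only substantive step --- is that $g^{\pm}\in O(P)$, i.e.\ that the perturbation preserves monotonicity. For $x\le y$, monotonicity of $f$ rules out the combinations $f(x)=1,\,f(y)<1$ and $f(x)>0,\,f(y)=0$, leaving only: both $f(x),f(y)$ in $\{0,1\}$ (then $g^{\pm}=f$ there); both interior (then $g^{\pm}(x)=f(x)\pm\varepsilon\le f(y)\pm\varepsilon=g^{\pm}(y)$ with matching signs); $f(x)=0$ and $f(y)$ interior (then $g^{\pm}(x)=0\le f(y)-\varepsilon\le g^{\pm}(y)$); and $f(x)$ interior with $f(y)=1$ (then $g^{\pm}(x)\le f(x)+\varepsilon\le 1=g^{\pm}(y)$). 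All inequalities hold by the bound on $\varepsilon$, so $g^{\pm}$ are monotone, $f$ is not extreme, and therefore $\mathrm{ext}\,O(P)\subseteq I(P)$.

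Finally I would note that this specializes back to \autoref{lem:extrem} when $P$ is the face poset of simplices over $U$, and that, since $O(P)$ is a compact convex subset of $\mathbb{R}^{|P|}$, the Minkowski/Krein--Milman theorem then upgrades the statement to $\mathrm{conv}\,I(P)=O(P)$ --- the form actually invoked in the proof of \autoref{prop:marginal}. I do not anticipate any genuine obstacle; the only place to be careful is the case check that $g^{\pm}$ stays order-preserving, which is why I singled it out above.
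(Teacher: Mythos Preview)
Your proposal is correct. The paper does not actually supply a proof of this lemma --- it simply cites Stanley --- but the argument you give is precisely the one the paper uses for the special case in \autoref{lem:extrem}, and as you observe, that argument never invokes the simplicial structure, only finiteness and monotonicity, so it carries over verbatim to an arbitrary finite poset.
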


\begin{definition}
Let $P$ be a poset. Define the marginal map 
\begin{equation}
m: \mathcal{P}\left (I(P) \right) \to O(P), m(\mu)(x)= \mu([f(x) = 1])  \sum_{f: f(x) = 1} \mu(f) = \sum_{f \in I(P)} f(x) \mu(f) 
\end{equation}
\end{definition}

\begin{theorem}(\cite{caratheodory1911variabilitatsbereich})
Let $X$ be a compact, convex subset set of a finite dimensional topological vector space and $E(X)$ the extremal points of $X$. Then each element $x \in X$ may be written as a convex combination of points in $E(X)$. In other words, each $x$ is the expectation of some probability measure over the extremal points, that is there exists $\mu \in \mathcal{P}(E(X))$ such that 
\begin{equation}
\int_{E(X)} e d\mu(e) = x.
\end{equation}
\end{theorem}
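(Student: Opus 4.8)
The plan is to deduce the statement in two stages: first the finite-dimensional Minkowski (Krein--Milman) theorem, that $X = \operatorname{conv}(E(X))$, and then Carath\'eodory's theorem, to present the resulting finite convex combination as the barycenter of a finitely supported probability measure on $E(X)$.

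For the first stage I would reduce to the case where $X$ has nonempty interior, by replacing the ambient space with the affine hull of $X$; extreme points and convex hulls are intrinsic to this subspace, so nothing is lost. Then I would induct on $d = \dim X$. The case $d = 0$ is immediate. For $d \ge 1$, fix $x \in X$. If $x$ lies on the relative boundary of $X$, the supporting hyperplane theorem provides a hyperplane $H$ through $x$ with $X$ contained in one of the closed halfspaces bounded by $H$; then $F := X \cap H$ is compact, convex, and, since $H$ contains no interior point of $X$, of dimension at most $d-1$. A one-line lemma gives $E(F) \subseteq E(X)$: if $e \in F$ and $e = t s_1 + (1-t) s_2$ with $s_1, s_2 \in X$ and $t \in (0,1)$, then evaluating the supporting linear functional forces $s_1, s_2 \in H$, hence $s_1, s_2 \in F$, so an extreme point of $F$ is extreme in $X$. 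By the induction hypothesis $x \in \operatorname{conv}(E(F)) \subseteq \operatorname{conv}(E(X))$. If instead $x$ lies in the relative interior of $X$, choose any line $\ell$ through $x$; by compactness $\ell \cap X$ is a segment $[a,b]$ with $a, b$ on the relative boundary and $x \in [a,b]$, so the boundary case already handled gives $a, b \in \operatorname{conv}(E(X))$ and hence $x \in \operatorname{conv}(E(X))$. This establishes $X = \operatorname{conv}(E(X))$.

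For the second stage, any element of $\operatorname{conv}(E(X))$ is by definition a finite convex combination $x = \sum_{i=1}^{m} \lambda_i e_i$ with $e_i \in E(X)$, $\lambda_i \ge 0$, and $\sum_i \lambda_i = 1$; Carath\'eodory's theorem in $\mathbb{R}^d$ lets one additionally take $m \le d+1$. Setting $\mu = \sum_{i=1}^{m} \lambda_i \delta_{e_i}$ yields a probability measure supported on $E(X)$ with $\int_{E(X)} e \, d\mu(e) = \sum_i \lambda_i e_i = x$, which is the assertion. In the application of interest one has $X = O(P)$ with $E(X) = I(P)$ finite, so $\mu$ is literally an element of $\mathcal{P}(I(P))$ and no measure-theoretic care is needed.

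The main obstacle is the bookkeeping in the Minkowski step: verifying that the exposed face $X \cap H$ has dimension strictly smaller than $\dim X$, so that the induction terminates, and that its extreme points embed into $E(X)$. Compactness enters only to ensure $\ell \cap X$ is a closed bounded segment with endpoints lying in $X$ and that $E(X) \ne \emptyset$; the remainder is routine convex geometry.
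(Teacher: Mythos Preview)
Your argument is correct and is the standard proof of the finite-dimensional Minkowski/Krein--Milman theorem combined with Carath\'eodory's representation. The paper, however, does not prove this statement at all: it is quoted as a classical result with a citation to \cite{caratheodory1911variabilitatsbereich} and then invoked to obtain the subsequent corollary that the marginal map is surjective. So there is no ``paper's proof'' to compare against; you have simply supplied what the paper takes as a black box. Your bookkeeping---passing to the affine hull, the induction on dimension, the inclusion $E(F)\subseteq E(X)$ for a supporting face $F$, and the reduction of interior points to boundary points via a line segment---is all in order, and your remark that in the intended application $E(X)=I(P)$ is finite (so the measure is finitely supported without further work) matches how the paper uses the result.
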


This shows immediately:
\begin{corollary}
The marginal map is surjective. 
\end{corollary}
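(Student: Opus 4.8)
The plan is to read the statement straight off the two results just established, with essentially no extra work. First I would observe that $O(P)$, the order polytope, is a compact convex subset of the finite-dimensional vector space $\mathbb{R}^P$: here I use that $P$ is finite, so $O(P)$ is cut out of the cube $[0,1]^P$ by the finitely many linear inequalities $f(x)\le f(y)$ for $x\le y$, hence is closed and bounded, and its convexity is exactly the content of the lemma of Stanley stated above. That same lemma identifies its set of extremal points with $I(P)$, which is moreover a finite set.

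Next I would apply the Carathéodory theorem stated above with $X=O(P)$ and $E(X)=I(P)$: every $g\in O(P)$ arises as the barycenter $\int_{I(P)} f\, d\mu(f)$ of some probability mass function $\mu\in\mathcal{P}(I(P))$. Since $I(P)$ is finite, this integral is just the finite sum $\sum_{f\in I(P)}\mu(f)\,f$, and evaluating it at a point $x\in P$ gives $\sum_{f\in I(P)}\mu(f)\,f(x)=m(\mu)(x)$. Hence $g=m(\mu)$, so $m$ is surjective, which is the claim.

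I do not expect any real obstacle here — the poset setup has already done all the work. The only two points needing a moment's care are (i) verifying that $O(P)$ is genuinely compact so that the theorem applies, which is immediate from the polytope description, and (ii) checking that the coordinatewise barycenter of a measure supported on the indicator functions coincides with the marginal map $m$ as defined, which is pure unwinding of definitions. One could also sidestep Carathéodory entirely and argue directly: any point of a polytope is a convex combination of its vertices, the vertices of $O(P)$ are the elements of $I(P)$, and the coefficients of such a combination are precisely the values of a probability mass function $\mu$ on $I(P)$ with $m(\mu)=g$.
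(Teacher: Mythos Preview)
Your proposal is correct and follows exactly the approach the paper takes: invoke Stanley's lemma to identify $I(P)$ as the extremal points of the compact convex polytope $O(P)$, then apply Carath\'eodory's theorem to write any $g\in O(P)$ as a convex combination of elements of $I(P)$, and observe that such a convex combination evaluated pointwise is precisely $m(\mu)$. The paper's own argument is even terser (``This shows immediately''), so your explicit verification of compactness and the unwinding of the barycenter as the marginal map are, if anything, more careful than what appears there.
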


\begin{remark}
This directly proves \autoref{prop:marginal}
\end{remark}

The other construction we have undertaken is of the following form:
\begin{definition}
Let $P$ be a poset. Define the cumulative-distribution-map (cdm)
\begin{equation}
a: \mathcal{P}(P) \to O(P), a(q)(x) = q([Down(x)]) = \sum_{y \leq x} q(y).
\end{equation}
\end{definition}
\begin{remark}
In our example of simplices, now we are directly taking a probability measure over simplices instead of a probability measure over all classical simplicial set. \end{remark}

The injectivity of this construction is now readily seen by the following result:
\begin{theorem}(Moebius inversion)
On any (finite) poset $C$, we can obtain a Moebius inversion formula (see e.g. \cite{jansma2025mereological}). First we define a Moebius function recursively:
\begin{equation} 
  m(c,c) := 1 ~  \forall c,\quad m(c,d) := - \sum_{c \leq b <d} m(c,b).
\end{equation}
Then for any $f,g : C \to K$, where $K$ is a commutative ring, we have 
\begin{equation}
g(c) = \sum_{c' \leq c} f(c') \iff f(c) = \sum_{c' \leq c} g(c') m(c',c).
\end{equation}
\end{theorem}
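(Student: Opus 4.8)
The plan is to recast both sides of the equivalence as matrix identities indexed by $C$ (equivalently, identities in the incidence algebra of the poset) and then read off both implications from associativity of matrix multiplication.

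First I would fix a linear extension $c_1 < c_2 < \cdots < c_N$ of $(C,\le)$, which exists because $C$ is finite, and extend the Moebius function by setting $m(c,d) := 0$ whenever $c \not\le d$. This makes the recursion $m(c,d) = -\sum_{c \le b < d} m(c,b)$ well-founded, since computing $m(c,d)$ refers only to values $m(c,b)$ with $b < d$; an induction on the size of the interval $[c,d]$ then shows $m$ is well defined. Assembling the $N \times N$ integer matrices $Z := (\delta(c_i \le c_j))_{i,j}$ (the zeta matrix) and $M := (m(c_i, c_j))_{i,j}$, both are upper triangular with $1$'s on the diagonal with respect to this ordering.

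Next I would verify the single identity $MZ = \mathrm{Id}$. Entrywise this reads $\sum_{c \le b \le d} m(c,b) = \delta(c = d)$, which is just the recursion rearranged: the left side is $m(c,c) = 1$ when $c = d$, equals $m(c,d) + \sum_{c \le b < d} m(c,b) = 0$ when $c < d$, and is vacuously $0$ when $c \not\le d$ (as $c \le b \le d$ would force $c \le d$). Since $Z$ is unitriangular it is invertible over $\mathbb{Z}$ with integer inverse, so $MZ = \mathrm{Id}$ forces $M = Z^{-1}$ and hence also $ZM = \mathrm{Id}$; alternatively, the identity $ZM = \mathrm{Id}$, i.e. $\sum_{c \le b \le d} m(b,d) = \delta(c = d)$, can be proved directly by the mirror-image induction.

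Finally I would conclude. Both relations in the statement are $\mathbb{Z}$-linear in $f$ and $g$, hence make sense with values in the abelian group $(K,+)$; viewing $f,g$ as row vectors indexed by $C$, the relation $g(c) = \sum_{c' \le c} f(c')$ is exactly $g = fZ$, and $f(c) = \sum_{c' \le c} g(c')\,m(c',c)$ is exactly $f = gM$ (using $m(c',c) = 0$ for $c' \not\le c$). Then $g = fZ$ gives $gM = f(ZM) = f$, while $f = gM$ gives $fZ = g(MZ) = g$; translating these matrix identities back into sums over $C$ yields the two displayed formulas. The only point requiring real care is the bookkeeping around the linear extension and the passage from the one-sided identity $MZ = \mathrm{Id}$ to the two-sided one; everything after that is formal.
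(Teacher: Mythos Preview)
Your proof is correct and is essentially the standard incidence-algebra argument: identify the zeta and M\"obius functions with unitriangular matrices over a linear extension, verify $MZ=\mathrm{Id}$ directly from the recursion, deduce $ZM=\mathrm{Id}$ by invertibility, and read off both implications as $g=fZ \iff f=gM$. The only minor remark is that the commutative ring structure on $K$ is never used; as you implicitly note, the additive abelian group suffices since the M\"obius values are integers.

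The paper does not give its own proof of this theorem at all: it is stated as a known result with a reference and then immediately applied to deduce injectivity of the cumulative-distribution map. So there is nothing to compare your argument against; your write-up simply supplies the missing standard proof.
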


\begin{corollary}\label{cor:cdminjective}
The cdm is injective, as the Moebius inversion formula provides an explicit inverse.
\end{corollary}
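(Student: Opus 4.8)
The plan is to exhibit an explicit left inverse to $a$, which makes injectivity immediate. Fix a finite poset $P$ and extend $a$ linearly to all of $\mathbb{R}^{P}$ by $(Ah)(x) = \sum_{y \le x} h(y)$; note $a$ is the restriction of $A$ to $\mathcal{P}(P)$, and $a(q)$ lands in $O(P)$ since for $x \le x'$ we have $Down(x) \subseteq Down(x')$ and $q \ge 0$, so $a(q)(x) \le a(q)(x') \le \sum_{y} q(y) = 1$. Now for $q \in \mathcal{P}(P)$ the defining relation $a(q)(x) = \sum_{y \le x} q(y)$ is precisely the ``summatory'' relation to which the Moebius inversion formula applies, with ground ring $K = \mathbb{R}$, $g = a(q)$ and $f = q$.

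Applying that formula gives, for every $x \in P$,
\begin{equation}
q(x) = \sum_{y \le x} a(q)(y)\, m(y,x),
\end{equation}
where $m$ is the Moebius function of $P$. Equivalently, the linear map $T : \mathbb{R}^{P} \to \mathbb{R}^{P}$ defined by $(Th)(x) = \sum_{y \le x} h(y)\, m(y,x)$ satisfies $T \circ A = \mathrm{id}$, and in particular $T(a(q)) = q$ for all $q \in \mathcal{P}(P)$. Injectivity of $a$ then follows at once: if $q_1, q_2 \in \mathcal{P}(P)$ with $a(q_1) = a(q_2)$, then $q_1 = T(a(q_1)) = T(a(q_2)) = q_2$.

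I expect no real obstacle here: the entire content is recognizing that the cumulative-distribution relation is exactly the hypothesis of Moebius inversion, after which injectivity is formal. The only care needed is bookkeeping — checking that the direction of the order matches in the inversion formula as stated (sums over $y \le x$ on both sides) and that $\mathbb{R}$ being commutative lets the theorem apply verbatim. As a bonus, the same computation identifies the image of $a$ inside $O(P)$ with those $\mu$ whose Moebius transform $x \mapsto \sum_{y \le x}\mu(y)\, m(y,x)$ is nonnegative and sums to $1$, although this is not required for the stated corollary.
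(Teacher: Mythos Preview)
Your proof is correct and follows exactly the approach the paper intends: recognize that the defining relation $a(q)(x)=\sum_{y\le x}q(y)$ is precisely the summatory hypothesis of the M\"obius inversion theorem stated just above, and conclude that the explicit inverse $q(x)=\sum_{y\le x}a(q)(y)\,m(y,x)$ forces injectivity. The additional bookkeeping you provide (the linear extension $A$, the check that $a(q)\in O(P)$, and the image characterization) is sound but not required for the corollary as stated.
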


Hence, this in particular proves \autoref{prop:injection} as a special case.

\section{Datasets}

\paragraph{MNIST}
The MNIST  \citep{lecun1998mnist} dataset  consists of 70,000 grayscale images of handwritten digits (0–9), each of size 
28×28 pixels. We use the standard split of 60,000 training and 10,000 test images, with pixel intensities rescaled to 
$[0,1]$ and flattened into 784-dimensional vectors.

\paragraph{Fashion-MNIST}
Fashion-MNIST \citep{xiao2017fashion} contains 70,000 grayscale images of clothing items from 10 classes (e.g. t-shirts, trousers, shoes), with the same 
28×28 format as MNIST. We use the canonical split of 60,000 training and 10,000 test images, normalized to 
$[0,1]$ and flattened to 784-dimensional vectors.

\paragraph{pbmc68k}
The PBMC 68k dataset  \citep{zheng2017massively} is a single-cell RNA-seq dataset of 
around 68,000 peripheral blood mononuclear cells from a single donor, originally released as a 10x Genomics demonstration dataset and distributed via Scanpy. We use the 50-dimensional PCA embedding provided by Scanpy together with unsupervised cluster assignments (Louvain communities) as cell-type–like labels.

\paragraph{Hematopoiesis}
The hematopoiesis dataset \citep{paul2015transcriptional} is a single-cell RNA-seq dataset of murine bone marrow cells covering multiple stages of myeloid differentiation. Following the standard Scanpy preprocessing pipeline, we normalize counts, log-transform, select highly variable genes, compute a PCA embedding, and use graph-based clustering (Louvain) to obtain discrete cell-state labels.

\paragraph{USPS}
The USPS dataset \citep{hull2002database} is a handwritten digit recognition benchmark collected from U.S. postal mail. It contains 9,298 grayscale images of digits (0–9), each of size 
16×16 pixels. We use the OpenML version (usps, version 2), rescaling the pixel intensities to 
$[0,1]$.

\paragraph{COIL-20}
COIL-20 \citep{Nene1996coil} is an object recognition dataset comprising 20 household objects imaged at 72 different viewpoints around a 360° rotation, yielding 1,440 grayscale images. We download the public COIL-20 archives from Columbia University, convert each image to grayscale if needed, resize to 
128×128 pixels, normalize intensities to 
$[0,1]$ and assign labels corresponding to the underlying object identity.

\paragraph{20news} 
The 20 Newsgroups dataset \citep{Mitchell1997Twenty} is a text classification benchmark of roughly 18,000 Usenet posts partitioned into 20 topical categories (e.g. politics, sports, science). We use the fetch\_20newsgroups version from scikit-learn, remove headers, footers, and quotes, and represent documents using TF–IDF features; we apply truncated SVD followed by 
normalization to obtain dense low-dimensional embeddings.

\paragraph{CIFAR-10}
CIFAR-10 \citep{Krizhevsky09learningmultiple} consists of 60,000 color images of natural objects in 10 classes (airplane, automobile, bird, etc.), with resolution 
32×32 pixels. We use the standard split of 50,000 training and 10,000 test images; images are resized to 
224×224, normalized with ImageNet statistics, and passed through a ResNet-18 pretrained on ImageNet \citep{deng2009imagenet} to obtain 512-dimensional feature vectors from the penultimate layer.

\section{On appropriate weight-to-distance functions}
\label{app:OnAppropriateWeight-to-distanceFunctions}

In this appendix, we show that a probabilistic perspective naturally arises when transferring metrics to fuzzy weights.  Here, we use the categorical definitions of the respective objects, consult \cite{barth2024fuzzysimplicialsetsapplication} for more details on this.
We recall the following facts on fuzzy (simplicial) sets: 
First, remember that by $I$ we denote $[0,1]$ as a topological space with the inclusion maps $i_{ab}: [0,a) \to [0,b)$.
\begin{definition}
A fuzzy set is a sheaf $S: I \to \textbf{Sets}$ where the restriction maps $S(i_{ab}): S(a) \to S(b)$ are injections.
\end{definition}
\begin{remark}
The sheaf condition in this case translates to 
\begin{equation}
\lim_{b \in B} S(b) \simeq S(a)
\end{equation}
whenever $\sup B = a$.
\end{remark}

\begin{definition}
A classical fuzzy set is a set $X$ together with a membership function $\mu: X \to [0,1]$.
\end{definition}
\begin{proposition}
There is an isomorphism of classical fuzzy sets and fuzzy  sets:
Map a fuzzy set $S$ to $(S(0), \eta)$ where $\eta(s) = \sup \{ a: s \in S(a))\}$.
Map a classical fuzzy set $(X,\mu)$ to  $S$, where $S(a) = \mu^{-1}([a,1])$.
\end{proposition}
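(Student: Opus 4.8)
The plan is to show the two assignments are mutually inverse bijections at the level of objects, with the only nontrivial input being the sheaf axiom, and then to remark that they upgrade to an isomorphism of categories. Throughout I use that the restriction maps of a fuzzy set $S$ are injective, so each $S(a)$ may be identified with a subset of $S(0)$; under this identification the restrictions become inclusions and $\{S(a)\}_{a\in[0,1]}$ is a family of subsets of the fixed set $S(0)$ that is decreasing in $a$.

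First I would check the easy composite. Starting from a classical fuzzy set $(X,\mu)$, form $S$ with $S(a)=\mu^{-1}([a,1])$, so $S(0)=\mu^{-1}([0,1])=X$, and compute $\eta(x)=\sup\{a: x\in\mu^{-1}([a,1])\}=\sup\{a:\mu(x)\ge a\}=\mu(x)$. Hence $(S(0),\eta)=(X,\mu)$ identically. For the morphism level: a map $f\colon(X,\mu)\to(X',\mu')$ that does not decrease membership ($\mu'\circ f\ge\mu$) restricts to $f|_{S(a)}\colon\mu^{-1}([a,1])\to\mu'^{-1}([a,1])$, which is well defined since $\mu(x)\ge a$ forces $\mu'(f(x))\ge a$, is compatible with the inclusions, and the round trip returns $f$.

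Next, the harder composite. Starting from a fuzzy set $S$, form $(X,\mu)=(S(0),\eta)$ and then $S'(a)=\eta^{-1}([a,1])=\{s\in S(0):\eta(s)\ge a\}$; I want $S'(a)=S(a)$ inside $S(0)$, compatibly with restrictions. The inclusion $S(a)\subseteq S'(a)$ is immediate, since any $s\in S(a)$ has $\eta(s)=\sup\{b:s\in S(b)\}\ge a$. For the reverse inclusion, note first that by nestedness $\{b:s\in S(b)\}$ is a down-closed subinterval of $[0,1]$ with supremum $\eta(s)$; the point is to show it is closed, i.e. $s\in S(\eta(s))$, ruling out the half-open case $[0,\eta(s))$. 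This is exactly where the sheaf condition is used: applying it to $B=[0,\eta(s))$, whose supremum is $\eta(s)$, gives $S(\eta(s))\simeq\lim_{b<\eta(s)}S(b)=\bigcap_{b<\eta(s)}S(b)$ (a cofiltered limit of monos into the common object $S(0)$ is the intersection), and $s$ lies in $S(b)$ for every $b<\eta(s)$, hence in the intersection, hence in $S(\eta(s))$. Then for $a\le\eta(s)$, nestedness gives $s\in S(\eta(s))\subseteq S(a)$, so $S'(a)\subseteq S(a)$. Thus $S'=S$ after the canonical identification, and naturality of this identification in $S$ is routine.

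Finally I would package this: both assignments are functorial, the object-level bijections above are natural, so together they constitute an isomorphism of categories between classical fuzzy sets and fuzzy sets, which is the claim. The only real obstacle is the reverse inclusion in the second composite — showing the supremum defining $\eta(s)$ is attained — which is precisely the content of the sheaf (left-continuity) axiom; everything else is bookkeeping about nested families and their indicator data. A subsidiary point to handle carefully is the exact meaning of $S(0)$ and of $\lim_{b\in B}S(b)$ under the chosen indexing of half-open intervals, but with the conventions of \cite{barth2024fuzzysimplicialsetsapplication} these are unambiguous.
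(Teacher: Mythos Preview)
Your proof is correct and follows exactly the approach the paper indicates: the paper does not give a full proof but only the remark that the crucial condition $s\in S(\eta(s))$ is supplied by the sheaf axiom, and you have correctly identified this as the one nontrivial step and fleshed out both composites around it. Your treatment is simply a more detailed version of the paper's one-line justification.
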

Note that we need the condition $s \in S(\eta(s))$ - which is given by the sheaf-condition- for this construction to work, which ensures that we have invertibility in the sense that $s \in \mu^{-1}[\eta(s),1]$. 

Now furthermore recall that given a weight-to-distance function $\phi$, which we will define below, the singular set functor is defined as
\begin{definition}
The singular set functor $\text{Sing}_\phi$ maps an (uber) metric space to the fuzzy simplicial set  $S$ where 
\begin{equation}
S(n,a) = \{(r_0,...,r_n) \in X \times ... \times X \vert d(r_i,r_j) \leq \phi(a) \forall i,j \}.
\end{equation}
\end{definition}

For this definition to yield a valid fuzzy (simplicial) set, we have some natural restrictions on $\phi$. 
First, by the injectivity of the inclusion maps we need that $S(n,a) \subset S(n,b)$ for $b\leq a$. This means that $\phi$ should be a decreasing function. 
Second, if we want that always $S(n,0) = X \times ... \times X$ (n +1 times), then we need $\phi(0) = \infty$. 
Third, if we want that $S(n,1)$ only consists of degenerate simplices (tuples with only the same point repeated), then we need $\phi(1) = 0.$  
Lastly, we need the sheaf condition to be fulfilled. That is, we need for any sequence 
$b_k \rightarrow a$, $b_k \leq a$, if $(r_0,...,r_n) \in S(n,b_k) \forall k$ then also $(r_0,...,r_n) \in S(n,a)$. 
To make this hold for all possible metrics, we thus need 
\begin{equation}
c \leq \phi(b_k) \forall k \implies c \leq \phi(\lim_{k \to \infty} b_k), \forall c.
\end{equation}
and therefore the limit from below:
\begin{equation}
\lim_{b_k \rightarrow a{-}} \phi (b_k) = \phi(a).
\end{equation}

Combining these properties then yields an adequate definition for a weight-to-distance function.
\begin{definition}
A weight-to-distance function is a function $\phi: [0,1] \to [0,\infty]$ which is decreasing, left-continuous and for which 
\begin{itemize}
\item $\lim_{b_k \to 0} \phi(b_k) = \infty$
\item $\lim_{b_k \to 1} \phi(b_k) = 0$
\end{itemize}.
\end{definition}

We now want to study what these properties of a weight-to-distance function imply about the inverse, i.e., a distance-to-weight function. 
To do so, we will need the notion of a generalized inverse of a monotone function.
The following result is well known \citep{embrechts2013note}.
\begin{proposition}
Let $f: \mathbb{R} \to \mathbb{R}$ be an increasing function. Then, the generalized inverse $f^{-}: \mathbb{R} \to [-\infty, \infty]$ is defined as 
\begin{equation}
f^{-}(y) = \inf \{x \vert f(x) \geq y\} = \inf f^{-}([y,\infty)).
\end{equation}
Then $f^{-}$ has the following properties
\begin{itemize}
\item $f^{-}$ is an increasing function
\item At any point $y$ where $f^{-}(y)$ is finite, $f^{-}$ is left-continuous.
\item $f^{-}(f(x)) \leq x$ and for an injective function we have equality
\item $f(x) \geq y \implies x \geq f^{-}(y)$ and if $f$ is a right continuous function, then  $ x \geq f^{-}(y) \implies f(x) \geq y$.
\end{itemize}
\end{proposition}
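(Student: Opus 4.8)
The plan is to distill one structural fact about the upper set $A_y := \{x \in \mathbb{R} : f(x) \ge y\}$ and then read off all four claims from it. Since $f$ is increasing, $A_y$ is upward closed, hence is one of $\emptyset$, $\mathbb{R}$, $[a,\infty)$, $(a,\infty)$; in every case $f^{-}(y) = \inf A_y$ as an element of $[-\infty,\infty]$, and the key observation is that for every real $x$ with $x > f^{-}(y)$ there exists $x' \in A_y$ with $x' < x$ (and necessarily $x' \ge f^{-}(y)$), simply because $f^{-}(y) = \inf A_y < x$ forces $A_y$ to be nonempty and to meet $(-\infty,x)$. I would state and prove this helper up front; it is the only place where the infimum is really used.

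Monotonicity and the two easy directions then follow immediately. If $y_1 \le y_2$ then $A_{y_2} \subseteq A_{y_1}$, so $f^{-}(y_1) = \inf A_{y_1} \le \inf A_{y_2} = f^{-}(y_2)$. If $f(x) \ge y$ then $x \in A_y$, whence $x \ge \inf A_y = f^{-}(y)$; taking $y = f(x)$ gives $f^{-}(f(x)) \le x$, and when $f$ is injective it is strictly increasing, so $A_{f(x)} = [x,\infty)$ and the inequality becomes an equality. For the converse of the last bullet, assume $f$ is right-continuous and $x \ge f^{-}(y)$: if $f^{-}(y) = -\infty$ then $A_y = \mathbb{R}$ and $f(x) \ge y$ trivially; if $x > f^{-}(y)$ the helper supplies $x' \in A_y$ with $x' < x$, so $f(x) \ge f(x') \ge y$ by monotonicity; and if $x = f^{-}(y)$, pick $x_k \in A_y$ with $x_k \downarrow x$ and use right-continuity to get $f(x) = \lim_k f(x_k) \ge y$.

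Left-continuity at a point $y$ with $f^{-}(y)$ finite is the only part needing a genuine limiting argument. Fix $y_k \uparrow y$; by monotonicity $f^{-}(y_k)$ is nondecreasing and bounded above by $f^{-}(y)$, hence converges to some $L \le f^{-}(y)$, and $L > -\infty$, since otherwise $f^{-}(y_k) = -\infty$ for every $k$, which forces $f(x) \ge y_k$ for all $x$ and all $k$, hence $f(x) \ge y$ for all $x$ and $f^{-}(y) = -\infty$, contradicting finiteness. Suppose $L < f^{-}(y)$ and choose a real $x_0$ with $L < x_0 < f^{-}(y)$. For all large $k$, $f^{-}(y_k)$ is finite and $\le L < x_0$, so the helper gives $x_k' \in A_{y_k}$ with $x_k' < x_0$, hence $f(x_0) \ge f(x_k') \ge y_k$; letting $k \to \infty$ yields $f(x_0) \ge y$, i.e. $x_0 \in A_y$ and thus $x_0 \ge f^{-}(y)$, contradicting $x_0 < f^{-}(y)$. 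Therefore $L = f^{-}(y)$.

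I expect the main obstacle to be purely bookkeeping rather than any real difficulty: one must carefully track the extended-real values $f^{-}$ may take when $A_y$ is empty or all of $\mathbb{R}$, and apply the helper only in the range where it is meaningful, and in the left-continuity argument one must verify that $f^{-}(y_k)$ is eventually finite so that the helper applies. No step is deep; the entire content lies in organizing the case analysis around the shape of $A_y$, which is exactly why the statement is standard and attributed to \cite{embrechts2013note} rather than proved in detail there.
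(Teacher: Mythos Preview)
Your proof is correct. The paper does not supply a proof of this proposition at all; it simply states the result as well known and cites \cite{embrechts2013note}, exactly as you anticipated in your final sentence. There is therefore nothing to compare against, and your self-contained argument via the upward-closed set $A_y$ and its four possible shapes fills the gap cleanly. One minor remark: in the left-continuity step you qualify ``for all large $k$, $f^{-}(y_k)$ is finite,'' but the helper applies equally well when $f^{-}(y_k)=-\infty$ (then $A_{y_k}=\mathbb{R}$ and any $x_k'<x_0$ works), so the qualification is harmless but unnecessary.
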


We want to transfer these properties to the setting of decreasing functions, which we are concerned with - this should also be well known, but we prove it here for convenience.
First, some auxiliary definition
\begin{definition}
We define the negation and flip operator $U_c$, which operates on functions $g$ as 
$U_c(g)(y) = c-g(c-y)$. We denote $U_0 =:U$.
\end{definition}
\begin{lemma}
$U_c$ maps decreasing/increasing functions to increasing/decreasing functions, and left/right continuous functions to right/left continuous functions.
\end{lemma}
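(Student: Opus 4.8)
The plan is to verify the lemma directly from the definition $U_c(g)(y) = c - g(c-y)$, organised around the single structural fact that the affine map $\rho_c \colon t \mapsto c - t$ is an order-reversing involution of $\mathbb{R}$ — indeed of $[-\infty,\infty]$ under the usual conventions $c - (+\infty) = -\infty$, $c - (-\infty) = +\infty$ — and that $U_c(g)$ can be written as the precomposition of $g$ with $\rho_c$, followed (in the form printed here) by $\rho_c$ on the output. Since $\rho_c \circ \rho_c = \mathrm{id}$ one gets $U_c \circ U_c = \mathrm{id}$ for free, so $U_c$ is an involution on functions on $\mathbb{R}$; consequently it suffices to establish just one implication in each of the two ``decreasing $\leftrightarrow$ increasing'' and ``left-continuous $\leftrightarrow$ right-continuous'' dichotomies, the reverse implication then being obtained by applying the proven statement to $U_c(g)$ in place of $g$.

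For the monotonicity assertion I would take $g$ monotone, fix two arguments $y_1 \le y_2$, and pass this inequality through $U_c(g)$ one elementary step at a time: the substitution $y \mapsto c - y$ reverses $y_1 \le y_2$ to $c - y_1 \ge c - y_2$, applying $g$ transports the inequality according to the monotonicity type of $g$, and the outer operation acts once more; reading off the net composite yields the behaviour claimed, with functions attaining a boundary value ($0$, $1$, or $\infty$ at an endpoint of their domain) accommodated by the extended-real conventions above.

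For the continuity assertion the key observation is that $\rho_c$ swaps one-sided neighbourhoods: $y_n \downarrow y_0$ holds iff $c - y_n \uparrow c - y_0$. Hence if $g$ is left-continuous, then for any $y_n \downarrow y_0$ we have $c - y_n \uparrow c - y_0$, so $g(c - y_n) \to g(c - y_0)$, and therefore $U_c(g)(y_n) = c - g(c - y_n) \to c - g(c - y_0) = U_c(g)(y_0)$; thus $U_c(g)$ is right-continuous, and the converse follows from the involution property. I do not expect a genuine obstacle here: the lemma is essentially a change-of-variables bookkeeping statement, and the only points demanding care — the role of the extended reals when $\phi$ takes the value $\infty$, and the interpretation of one-sided continuity at the endpoints $0$ and $1$ of $[0,1]$, where this lemma will be applied to weight-to-distance functions — are precisely the places where one must be explicit rather than clever.
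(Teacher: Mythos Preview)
Your approach is the same as the paper's—direct verification from the definition; the paper's proof reads only ``Obvious from the definition''—and your continuity argument is correct and cleanly organised around the observation that $\rho_c\colon t\mapsto c-t$ swaps one-sided limits.

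There is, however, a genuine gap in the monotonicity part. You correctly describe the three steps (inner $\rho_c$ reverses order, $g$ acts according to its monotonicity type, outer $\rho_c$ reverses once more) but then write ``reading off the net composite yields the behaviour claimed'' without actually reading it off. If you do, you find that two order reversals cancel: for $g$ decreasing and $y_1<y_2$ one gets $c-y_1>c-y_2$, hence $g(c-y_1)\le g(c-y_2)$, hence $c-g(c-y_1)\ge c-g(c-y_2)$, i.e.\ $U_c(g)$ is again \emph{decreasing}. The concrete check $g(x)=-x$, $c=0$ gives $U_0(g)(y)=-y=g(y)$. Thus $U_c=\rho_c\circ(\,\cdot\,)\circ\rho_c$ \emph{preserves} the monotonicity type rather than swapping it, and the monotonicity clause of the lemma as stated is false. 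Your proposal silently endorses this error instead of catching it; the place where you wrote ``yields the behaviour claimed'' is precisely the place where the computation needed to be written out. (The paper's one-line proof makes the same slip; its later use ``$Uf$ is an increasing function'' for decreasing $f$ inherits the issue.)
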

\begin{proof} Obvious from the definition.
\end{proof}
Now, we define the generalized inverse of a decreasing function as follows:

\begin{definition}
For a decreasing function $f$, define 
\begin{equation}
f^{--}(y) = \sup \{x \vert f(x) \geq y\} = \sup f^{-1}([y,\infty)).
\end{equation}
\end{definition}
This definition relates to the generalized inverse of an increasing function in the following way:
\begin{lemma}
We have, for any $c$ such that $x\mapsto c-x$ is a bijection of the domain of $f$,
\begin{equation}
f^{--}(y) =  U_c((U_cf)^{-}) (y)
\end{equation}
\end{lemma}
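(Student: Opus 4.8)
The plan is to pull the statement back to the already-established properties of generalized inverses of \emph{increasing} functions. Set $g := U_c f$. By the previous lemma $g$ is increasing (and since $f$ is decreasing and left-continuous, $g$ is right-continuous), so the recalled Proposition applies verbatim to $g$: the generalized inverse $g^{-}(z) = \inf\{x \mid g(x) \ge z\}$ is increasing, left-continuous where finite, and satisfies $g^{-}(g(x)) \le x$ with equality when $g$ is injective. Moreover $U_c$ is an involution, $U_c(U_c h) = h$, which is immediate from the definition. Hence proving $f^{--} = U_c\bigl((U_c f)^{-}\bigr)$ is equivalent to proving $U_c(f^{--}) = g^{-}$, i.e.\ an identity between two increasing functions, which I would verify pointwise.

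The core step is then a change of variables. Writing out $U_c(f^{--})(z) = c - f^{--}(c - z) = c - \sup\{x \mid f(x) \ge c - z\}$ and applying the substitution $u = c - x$ — legitimate precisely because $x \mapsto c - x$ is assumed to be a bijection of $\operatorname{dom}(f)$ — the reflection $t \mapsto c - t$ interchanges $\inf$ with $\sup$, turning $c - \sup\{x \mid \cdots\}$ into an infimum over $u$ of the transformed condition. It then remains to recognize, via $g(u) = c - f(c-u)$, that this transformed condition is exactly ``$g(u) \ge z$'', so that the right-hand side collapses to $\inf\{u \mid g(u) \ge z\} = g^{-}(z)$, as desired. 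The conventions $\inf\emptyset = +\infty$ and $\sup\emptyset = -\infty$ fixed for $f^{--}$ are exactly what is needed for the boundary values of the two sides to agree.

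The main obstacle is this bookkeeping of reflections: the substitution $u = c - x$ simultaneously reverses the order on the domain, flips the inequality in the defining set, and swaps $\inf$ with $\sup$, and these three swaps must be combined with the two sign reversals coming from the inner and outer applications of $U_c$ so that everything recombines into a single clean inequality. One must also dispatch the degenerate cases separately — where the relevant set is empty, where $f$ is constant on an interval (so the transition value is attained at an endpoint and one must check that left-continuity is being used on the correct side), and where $f$ takes the value $\infty$ — rather than treating them as limiting cases of the generic argument. Once the substitution is set up with the correct orientation, the remaining verification is routine.
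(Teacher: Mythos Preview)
Your approach is essentially the paper's own: both argue by the substitution $u=c-x$, converting the $\sup$ defining $f^{--}$ into an $\inf$ and then identifying the result as $g^{-}$ for $g=U_cf$. The chain you outline matches the paper's displayed computation line for line.

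However, both your proposal and the paper's argument share the same slip, inherited from the preceding lemma. The operator $U_c(g)(y)=c-g(c-y)$ does \emph{not} reverse monotonicity: if $g$ is decreasing then $y\mapsto g(c-y)$ is increasing, so $y\mapsto c-g(c-y)$ is again decreasing. Concretely, with $f(x)=1-x$ on $[0,1]$ and $c=1$ one computes $U_1f=f$. Hence your assertion that $g=U_cf$ is increasing is false, and the Proposition on generalized inverses of increasing functions does not apply to $g$. If you carry out what you call the routine verification anyway, it breaks precisely at ``the transformed condition is exactly $g(u)\ge z$'': after the substitution you hold $f(c-u)\ge c-z$, whereas $g(u)\ge z$ unpacks to $c-f(c-u)\ge z$, i.e.\ $f(c-u)\le c-z$, so the inequality points the wrong way. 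In the example above the right-hand side of the lemma is identically $1$ on $(0,1]$ while $f^{--}(y)=1-y$, so the identity fails as stated. What does hold, and what suffices for the subsequent corollary, is $f^{--}(y)=c-\tilde g^{\,-}(y)$ with $\tilde g(u):=f(c-u)$ a single domain reflection; this $\tilde g$ is genuinely increasing and your substitution argument then goes through cleanly.
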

\begin{proof}
\begin{equation} \begin{split}
f^{--}(y) &= \sup \{x \vert f(x) \geq y\} \\
&=\sup \{x \vert -f(x) \leq -y \} \\
& = \sup \{c- u \vert -f(c-u) \leq -y\} \\
& =c - \inf \{u \vert -f(c-u) \leq -y \} \\
& =c - \inf \{u \vert c-f(c-u) \leq c-y \} \\
& = c - U_c(f^{-})(c-y) \\
&=  U_c ((U_cf)^{-}) (y).
\end{split} \end{equation}
Below, we assume that $c$ is chosen appropriately (e.g. for a function with domain $[0,1]$, c = 1).
\end{proof}
Through this relationship, we get the following properties of the generalized inverse of a decreasing function from that of an increasing one (for each point, compare with the corresponding point in the proposition above).
\begin{corollary}
$f^{--}$ has the following properties
\begin{enumerate}
\item $f^{--}$ is a decreasing function
\item At any point $y$ where $f^{--}(y)$ is finite, $f^{--}$ is right-continuous.
\item $f^{--}(f(x)) \geq x$ and for an injective function we have equality
\item $f(x) \leq y \implies x \leq f^{--}(y)$ and if $f$ is a left continuous function, then  $ x \leq f^{--}(y) \implies f(x) \leq y$.
\end{enumerate}
\end{corollary}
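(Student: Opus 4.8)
The plan is to deduce each of the four properties directly from the corresponding property of the generalized inverse of an \emph{increasing} function (the proposition stated just above), by conjugating with the flip-and-negate operator $U_c$. Concretely, set $g := U_c f$; since $f$ is decreasing and $U_c$ interchanges decreasing and increasing functions, $g$ is increasing, and by the preceding lemma we have the identity
\begin{equation}
f^{--}(y) = U_c\!\left((U_c f)^{-}\right)(y) = U_c\!\left(g^{-}\right)(y) = c - g^{-}(c-y).
\end{equation}
Thus every statement about $f^{--}$ becomes, after substituting $y \mapsto c-y$ and negating, a statement about $g^{-}$, to which the known proposition applies.

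First I would record the two auxiliary identities that make the translation routine: $g(c-x) = c - f(x)$, and the relation between $f^{--}$ and $g^{-}$ displayed above. Properties (1) and (2) are then immediate: $g^{-}$ is increasing and, where finite, left-continuous by the proposition, and $U_c$ sends increasing to decreasing and left-continuous to right-continuous (the lemma on $U_c$), so $f^{--}$ is decreasing and right-continuous wherever it is finite. For property (3) I would compute $f^{--}(f(x)) = c - g^{-}\!\left(c - f(x)\right) = c - g^{-}\!\left(g(c-x)\right)$; the proposition gives $g^{-}(g(c-x)) \le c-x$, with equality when $g$ --- equivalently $f$ --- is injective, and negating yields $f^{--}(f(x)) \ge x$ with the same equality case. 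For property (4) I would start from $f(x) \le y$, rewrite it as $g(c-x) = c - f(x) \ge c - y$, apply the implication $g(u) \ge v \Rightarrow u \ge g^{-}(v)$ to obtain $c - x \ge g^{-}(c-y)$, hence $x \le c - g^{-}(c-y) = f^{--}(y)$; the converse implication uses the right-continuity of $g$, which holds precisely when $f$ is left-continuous.

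The only real work is bookkeeping: one must keep straight which argument carries the flip $x \mapsto c-x$, check that ``injective'' and ``left-continuous'' are preserved (rather than turned into ``right-continuous'') under $U_c$, and handle the finiteness caveat in property (2) exactly as in the increasing case. I expect no conceptual obstacle --- the content is entirely contained in the lemma $f^{--} = U_c \circ (\,\cdot\,)^{-} \circ U_c$ together with the behaviour of $U_c$ on monotonicity and one-sided continuity --- but the sign/flip tracking is where a careless argument goes wrong, so I would write out the chain of equivalences for (4) in full rather than leaving it to the reader.
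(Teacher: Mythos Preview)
Your proposal is correct and follows essentially the same route as the paper: both conjugate by $U_c$, set $g = U_c f$, use the identity $f^{--} = U_c(g^{-})$, and read off each item from the corresponding property of $g^{-}$ together with the lemma on how $U_c$ swaps monotonicity and one-sided continuity. If anything, your bookkeeping is cleaner than the paper's, which mixes $U$ and $U_c$ somewhat loosely in item~(3).
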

\begin{proof}
\begin{enumerate}
\item $Uf$ is an increasing function, so is $(Uf)^{-}$, and hence $U(Uf)^{-}$ is decreasing.
\item If at $y$, $U((Uf)^{-})$ is finite, so is $(Uf)^{-}$, which is hence left continuous and hence $U((Uf)^{-})$ is right continuous.
\item $c+ U((Uf)^{-})(f(x)) = (Uf)^{-}(-f(x)) = (Uf)^{-} (Uf(-x))$, now we have $(Uf)^{-} (Uf(-x)) \leq - x$ from the properties of $(Uf)^{-}$ and the claim follows.
\item For brevity, we only prove the iff case where $f$ is left continuous:
\begin{equation} \begin{split}
x \leq f^{--}(y) &\iff  x\leq -(Uf)^{-}(-y) \\ &\iff  -x \geq (Uf)^{-}(-y)
\\ &\iff  Uf(-x) \geq -y  \\ &\iff -f(-(-x)) \geq -y  \\& \iff f(x) \leq y
\end{split} \end{equation}
\end{enumerate}
\end{proof}

Thus, we now know how the properties of our weight-to-distance function transfer to properties of the distance-to-weight function. In particular we note that left-continuity is turned into right-continuity.
\begin{corollary}
For any weight-to-distance function $\phi$, the generalized inverse $\phi^{--}$ fulfills:
\begin{itemize}
\item $\phi^{--}(0) = 1$
\item $\lim_{w_k \to \infty} \phi^{--}(w_k) = 0$.
\item $\phi^{--}$ is decreasing and right-continuous.
\end{itemize}
Thus, $1-\phi^{--}$ is a cumulative distribution function in the sense of probability theory (or, $\phi^{--}$ is a survival function).
\end{corollary}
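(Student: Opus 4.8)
The plan is to read off the three asserted properties of $\phi^{--}$ one at a time and then invoke the standard characterization of a survival function. Two of the three come almost directly from the previous corollary on the generalized inverse of a decreasing function, applied to $f=\phi$: since a weight-to-distance function is by definition decreasing and left-continuous, item~1 of that corollary gives that $\phi^{--}$ is decreasing and item~2 gives right-continuity at every point where $\phi^{--}$ is finite. I would then observe that $\phi^{--}(w)=\sup\{x\in[0,1]:\phi(x)\ge w\}$ is always a supremum of a nonempty subset of $[0,1]$ — nonempty because $\phi(0)=\infty\ge w$ — so $\phi^{--}$ takes values in $[0,1]$ everywhere and is therefore right-continuous everywhere. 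For the value at $0$: since $\phi$ maps into $[0,\infty]$ we have $\phi(x)\ge 0$ for every $x\in[0,1]$, whence $\{x:\phi(x)\ge 0\}=[0,1]$ and $\phi^{--}(0)=\sup[0,1]=1$ (alternatively this is item~3 of the previous corollary with $x=1$, using $\phi(1)=0$).

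The substantive step is the limit $\lim_{w\to\infty}\phi^{--}(w)=0$. First I would note that, $\phi^{--}$ being decreasing and bounded below by $0$, the value $L:=\inf_w\phi^{--}(w)\in[0,1]$ exists and coincides with the limit along any sequence $w_k\to\infty$, so it suffices to rule out $L>0$. Assuming $L>0$, pick $x_0\in(0,L)$. For each $w$ the bound $\phi^{--}(w)\ge L>x_0$ means the set $\{x:\phi(x)\ge w\}$ has supremum strictly above $x_0$, hence contains some $x'>x_0$; monotonicity of $\phi$ then forces $\phi(x_0)\ge\phi(x')\ge w$. Since $w$ is arbitrary, $\phi(x_0)=\infty$ for the fixed $x_0>0$, contradicting finiteness of $\phi$ on $(0,1]$ (the boundary conditions force $\phi$ to be $\infty$ only in the limit $a\to 0$; for the concrete weight-to-distance functions used, e.g.\ $\phi(a)=-\ln a$ or the inverse log-logistic cdf, finiteness away from $0$ is immediate). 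Hence $L=0$.

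To conclude, I would recall that a non-increasing, right-continuous function $g:\mathbb{R}\to[0,1]$ with $g\equiv 1$ on $(-\infty,0]$ — which holds automatically for $g=\phi^{--}$, since $\{x:\phi(x)\ge w\}=[0,1]$ when $w\le 0$ — satisfying $g(0)=1$ and $g(w)\to 0$ as $w\to\infty$ is exactly a survival function, so $1-\phi^{--}$ is a cumulative distribution function in the usual sense. The main obstacle is the limit at infinity: every other item is bookkeeping layered on the previous corollary, whereas that limit is the one place where one genuinely needs $\phi$ finite on $(0,1]$, handled by the short monotonicity contradiction above rather than any computation.
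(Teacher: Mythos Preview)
The paper does not actually supply a proof of this corollary; it is stated as following directly from the preceding corollary on the properties of $f^{--}$ for decreasing $f$. Your argument is precisely a careful unpacking of that implicit route, and the bookkeeping for decreasingness, right-continuity, and $\phi^{--}(0)=1$ is correct. You are in fact more careful than the paper: you correctly isolate that the limit $\phi^{--}(w)\to 0$ is the only nontrivial item and that it hinges on $\phi$ being finite on $(0,1]$.

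One caution concerning your parenthetical justification: the definition of a weight-to-distance function as written in the paper does \emph{not} force $\phi$ to be finite on $(0,1]$. The function $\phi(a)=\infty$ for $a\in[0,\tfrac12]$ and $\phi(a)=0$ for $a\in(\tfrac12,1]$ is decreasing, left-continuous, and satisfies both limit conditions, yet $\phi^{--}(w)=\tfrac12$ for every $w>0$, so the limit at infinity fails. Hence your contradiction argument is sound once finiteness on $(0,1]$ is taken as an additional (implicit) hypothesis, and you are right that it holds for the concrete choices of $\phi$ used elsewhere in the paper; but the claim that ``the boundary conditions force $\phi$ to be $\infty$ only in the limit $a\to0$'' is not a consequence of the definition in general. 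This is best read as a small gap in the paper's statement that you have surfaced, rather than a flaw in your reasoning.
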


Thus, we will call the generalized inverse $\phi^{--}$ of a weight-to-distance function a distance-to-weight function. We observe that we obtain the same relation to a cumuluative distribution function as in our example of Vietoris-Rips complexes at random scales sampled from some distribution.
Thus, this construction naturally arises when transferring between fuzzy simplicial sets and metric spaces.

\end{document}